\theoremstyle{plain}
\newtheorem{theorem}{Theorem}[section]
\newtheorem{lemma}[theorem]{Lemma}
\theoremstyle{definition}
\newtheorem{definition}[theorem]{Definition}
\newtheorem{assumption}[theorem]{Assumption}
\theoremstyle{remark}
\author{
  Xidong Feng\footnotemark[1]  \\
  University College London\\
  \texttt{xidong.feng.20@ucl.ac.uk} \\
  \And
  Bo Liu\footnotemark[1] \\
  Institute of Automation,\\
  Chinese Academy of Sciences\\
  \texttt{benjaminliu.eecs@gmail.com} \\
  \And
  Jie Ren\\
  University of Edinburgh \\
  \texttt{jieren9806@gmail.com}\\
  \And
  Luo Mai\\
  University of Edinburgh \\
  \texttt{luo.mai@ed.ac.uk}\\
  \And
  Rui Zhu\\
  DeepMind \\
  \texttt{ruizhu@google.com}\\
  \And
  Haifeng Zhang\\
  Institute of Automation, CAS\\
  Nanjing Artificial Intelligence Research of IA\\
  \texttt{haifeng.zhang@ia.ac.cn}\\
  \And
  Jun Wang\\
  University College London \\
  \texttt{jun.wang@cs.ucl.ac.uk}\\
  \And
  Yaodong Yang\footnotemark[2]\\
  Institute for AI, Peking University\\
  Beijing Institute for General AI \\ 
  \texttt{yaodong.yang@pku.edu.cn}\\
}
\title{A Theoretical Understanding of Gradient Bias in Meta-Reinforcement Learning}
\begin{document}

\maketitle

\begin{abstract}

Gradient-based Meta-RL (GMRL) refers to methods that maintain two-level optimisation procedures wherein the outer-loop meta-learner guides the inner-loop gradient-based reinforcement learner to achieve fast adaptations. In this paper, we develop a unified framework that describes variations of GMRL algorithms and points out that existing stochastic meta-gradient estimators adopted by GMRL are actually \textbf{biased}. 
Such meta-gradient bias comes from two sources: 1) the compositional bias incurred by the two-level problem structure, which has an upper bound of $\mathcal{O}\big(K\alpha^{K}\hat{\sigma}_{\text{In}}|\tau|^{-0.5}\big)$ \emph{w.r.t.} inner-loop update step $K$, learning rate $\alpha$, estimate variance $\hat{\sigma}^{2}_{\text{In}}$ and sample size $|\tau|$, and 2) the multi-step Hessian estimation bias $\hat{\Delta}_{H}$ due to the use of autodiff, which has a polynomial impact $\mathcal{O}\big((K-1)(\hat{\Delta}_{H})^{K-1}\big)$ on the meta-gradient bias. We study tabular MDPs empirically and offer quantitative evidence that testifies our theoretical findings on existing stochastic meta-gradient estimators. Furthermore, we conduct experiments on Iterated Prisoner's Dilemma and Atari games to show how other methods such as off-policy learning and low-bias estimator can help fix the gradient bias for GMRL algorithms in general.
\end{abstract}
\renewcommand{\thefootnote}{\fnsymbol{footnote}}
\footnotetext[1]{Equal contribution, the order is determined by flipping a coin. See Appendix \ref{apx: author_contrib} for more details.}
\footnotetext[2]{Corresponding author.}
\renewcommand*{\thefootnote}{\arabic{footnote}}
\renewcommand{\thefootnote}{\fnsymbol{footnote}}

\section{Introduction}

Meta Learning, also known as learning to learn, is proposed to equip intelligent agents with meta knowledge for fast adaptations \cite{schmidhuber1987evolutionary}. Meta-Reinforcement Learning (RL) algorithms aim to train RL agents that can adapt to new tasks  with only  few examples \cite{duan2016rl,finn2017model,gupta2018meta}. For example, MAML-RL  \cite{finn2017model}, a typical Meta-RL algorithm, learns the initial parameters of an agent's policy so that the agent can rapidly adapt to new environments with a limited number of policy-gradient updates. Recently, Meta-RL algorithms have been further developed beyond the scope of learning fast adaptations. An important direction is to conduct online meta-gradient learning for adaptively tuning algorithmic   hyper-parameters \cite{xu2018meta,zahavy2020self} or designing intrinsic reward   \cite{zheng2018learning} during training in one single task. Besides,  there are Meta-RL developments that manage to discover new RL algorithms by learning  algorithmic components or other   fundamental concepts in RL, such as the policy gradient objective \cite{oh2020discovering} or  the TD-target \cite{xu2020meta}, which can generalise to solve a distribution of different tasks. 

In general, gradient-based Meta-RL (GMRL) tasks can be formulated by a two-level optimisation procedure. This procedure optimises the parameters of an outer-loop meta-learner, whose objective is dependent on  a $K$-step policy update process (e.g., stochastic gradient descent) conducted by an inner-loop learner. Formally, this procedure can be written as:
\begin{equation}\label{meta_objective}
\begin{gathered}
    \max_{\boldsymbol{\phi}} J^{K}(\boldsymbol{\phi}):=
    J^{\text{Out}}
    (\boldsymbol{\phi}, \boldsymbol{\theta}^{K}),\\
    \; \ \ \ \ \mbox{s.t.}\; \boldsymbol{\theta}^{i+1}=
    \boldsymbol{\theta}^{i} + 
    \alpha
    \nabla_{\boldsymbol{\theta}^{i}} J^{\text{In}}(\boldsymbol{\phi},\boldsymbol{\theta}^{i}),i\in\{0,1\ldots K-1\},
\end{gathered}
\end{equation}
where $\boldsymbol{\theta}$ are inner-loop policy parameters, $\boldsymbol{\phi}$ are meta parameters, $\alpha$ is the learning rate, $J^{\text{In}}$ and $J^{\text{Out}}$ are value functions for the inner and the outer-loop learner. 
In solving Eq. (\ref{meta_objective}), 
estimating the meta-gradient of $\nabla_{\boldsymbol{\phi}}J^{K}(\boldsymbol{\phi})$ from a two-level optimisation process is non-trivial; how to conduct \emph{accurate} estimation on  $\nabla_{\boldsymbol{\phi}}J^{K}(\boldsymbol{\phi})$ has been a critical yet challenging research problem \cite{liu2019taming,rothfuss2018promp, tang2021unifying}. 


In this paper, we point out that the  meta-gradient estimators adopted by many recent GMRL methods \cite{oh2020discovering, xu2020meta} are in fact biased. We conclude that such bias comes from two sources: (1) the \textbf{ compositional bias}  and (2) the \textbf{multi-step Hessian  bias}.  The compositional bias origins from the discrepancy between the sampled policy gradient $\nabla_{\boldsymbol{\theta}} \hat{J}^{\text{In}}$ and expected policy gradient $\nabla_{\boldsymbol{\theta}} J^{\text{In}}$, and  the multi-step Hessian estimation bias occurs due  to the biased Hessian estimation $\nabla_{\boldsymbol{\theta}}^{2} \hat{J}^{\text{In}}$ resulting from the employment of automatic differentiation in modern GMRL implementations.  


There are very few prior work that investigates the bias in GMRL, most of them limited to the Hessian estimation bias in the MAML-RL setting~\cite{rothfuss2018promp, tang2021unifying}. Our paper investigates the above two biases in a broader setting and applies on generic meta-gradient estimators.  For the compositional bias term,  
we offer the first theoretical analysis on its quantity, based on which we then investigate current mitigation solutions.
For the multi-step Hessian bias, we provide rigorous analysis in GMRL settings particularly for those GMRL tasks where  complex inner-loop optimisations are involved.

For the rest of the paper,  
we first introduce a \emph{unified} Meta-RL framework that can describe  variations of existing GMRL algorithms. Building on this framework, 
we offer two theoretical results that \textbf{1)} the compositional bias has an upper bound of $\mathcal{O}\big(K\alpha^{K}\hat{\sigma}_{\text{In}}|\tau|^{-0.5}\big)$ with respect to the inner-loop update step $K$, the learning rate $\alpha$, the estimate variance $\hat{\sigma}^{2}_{\text{In}}$ and the sample size $|\tau|$, and \textbf{2)} the multi-step Hessian  bias  $\hat{\Delta}_{H}$ has a polynomial impact of  $\mathcal{O}\big((K-1)(\hat{\Delta}_{H})^{K-1}\big)$. 
 To validate our theoretical insights on these two biases, we conduct a comprehensive list ablation studies.  Experiments over tabular MDP with MAML-RL \cite{finn2017model} and LIRPG \cite{zheng2018learning} demonstrate how quantitatively these two biases influence the estimation accuracy, which consolidates our theories.  

Furthermore, our theoretical results help understand to what extent existing methods can mitigate theses two bias empirically. For the compositional bias, we show that off-policy learning methods can reduce the inner-loop policy gradient variance and the resulting compositional bias. For the multi-step Hessian bias, we study how the  low-variance curvature technique based on \citet{rothfuss2018promp} can  help correct the Hessian bias for general GMRL problems. We  test these solutions on environments including   iterated Prisoner's Dilemma with off-policy corrected LOLA-DiCE \cite{foerster2017learning}, and eight Atari games based on MGRL \cite{xu2018meta} with the multi-step Hessian correction technique. Experimental results confirm that those bias-correction methods can substantially decrease the meta-gradients bias and improve the overall performance on rewards. 

\section{Related Work}
\label{related}

A pioneering work that studies meta-gradient estimation is \citet{al2017continuous} who discussed the biased estimation problem of MAML and proposed the E-MAML sample based formulation to fix the meta-gradient bias. Following work includes \citet{rothfuss2018promp,liu2019taming,tang2021unifying} that tried to fix the meta-gradient estimation error by reducing the estimation variance so as to improve the performance. Moreover, \citet{foerster2018dice,farquhar2019loaded,mao2019baseline} discussed the higher-order gradient estimation in RL. Recently,  \citet{bonnet2021one,vuorio2021no} proposed algorithms to balance the  bias-variance trade-off for meta-gradient estimates. Within theoretical context of GMRL, most theoretical analysis focuses on convergence to stationary points. \citet{fallah2020convergence} established convergence guarantees of gradient-based meta-learning algorithms for supervised learning with one-step inner-loop update. \citet{ji2020multi} extended the analysis to multi-step inner loop updates. \citet{fallah2021convergence} established convergence for the E-MAML formulation. Our work is different from all the above prior work from three folds: (1) we study the additional bias term (the compositional bias) (2) we consider different formulation (expected update while sample-based update in \cite{vuorio2021no}, refer to Appendix \ref{dis_epg} for more discussions) (3) we study a broader scope of applications that include different GMRL algorithm instantiations and settings.

In the following part, we review existing GMRL algorithms and categorise them into four research topics. We offer one typical example for each topic in Table \ref{tab:unifying framework} and further discussed their relationship and how they can be unified into one framework in Sec.~\ref{framework}. We also provide a more self-contained explanation for each topic in Appendix \ref{more_topics} for readers that are not familiar with GMRL.

\textbf{Few-shot RL.} The idea of few-shot RL is to enable RL agent with fast learning ability. Specifically, the RL agent is only allowed to interact with the environment for a few trajectory to conduct task-specific adaptation. Conducting few-shot RL has two approaches: gradient based and context based. Context based Meta-RL involves works \cite{duan2016rl,fu2020towards, rakelly2019efficient, wang2016learning}, which uses neural network to embed the information from few-shot interactions so as to obtain task-relevant context. Gradient based few-shot RL \cite{al2017continuous,liu2019taming, rothfuss2018promp}  focus on meta-learning the model's initial parameters through meta-gradient descent.

\textbf{Opponent Shaping.} 
\citet{foerster2017learning,letcher2018stable,kim2020policy} explicitly models the learning process of opponents in multi-agent learning problems, which can be thought of as GMRL problems since  meta-gradient estimation in these works involve differentiation over opponent's policy updates.    
By modelling opponents' learning process, the multi-agent learning process can reach better social welfare \cite{foerster2017learning, letcher2018stable} or the ego agent can adapt to a new  peer agent \cite{kim2020policy}.  

\textbf{Single-lifetime Meta-gradient RL.} This line of research focuses on learning online adaptation over algorithmic hyper-parameters to enhance the performance of an RL agent in one single task, such as  discount factor   \cite{xu2018meta,zahavy2020self}, intrinsic reward generator  \cite{zheng2018learning}, auxiliary loss  \cite{veeriah2019discovery}, reward shaping mechanism  \cite{hu2020learning}, and value correction \cite{zhou2020online}. The main feature of online meta-gradient RL is that they are under single-lifetime setting \cite{xu2020meta}, meaning that the algorithm  only iterates through the whole RL learning procedure in one task rather a distribution of tasks. 

\textbf{Multi-lifetime  Meta-gradient RL.}
Compared to the previous single-lifetime settings, 
``multi-lifetime" refers to  settings where agents learn to adapt on  a distribution of tasks or environments. The meta-learning target includes  policy gradient or TD learning objectives  \cite{bechtle2021meta,kirsch2019improving,oh2020discovering}, intrinsic reward  \cite{zheng2020can}, target value function 
\cite{xu2020meta}, options in hierarchical RL \cite{veeriah2021discovery}, and recently the design of curriculum in multi-agent learning \cite{feng2021neural}.




\section{A Unified Framework for Meta-gradient Estimation}
\label{framework}
In this section, we derive a general formulation for meta-gradient estimation in GMRL. This formulation enables us to conduct general analysis about meta-gradient estimation and we will show how algorithms in the four research topics mentioned in Sec.~\ref{related} can be described through it.

We propose that a general GMRL objective with $K$-step inner-loop policy gradient update can be written as the following objective 
\begin{equation}\label{GMRL_objective}
\vspace{-0.1pt}
\max_{\boldsymbol{\phi}} 
J^{K}(\boldsymbol{\phi}):=
J^{\text{Out}}(\boldsymbol{\phi}, \boldsymbol{\theta}^{K}), 
\boldsymbol{\theta}^{K}=
\boldsymbol{\theta}^{0} 
+ \alpha\sum_{i=0}^{K-1}
\nabla_{\boldsymbol{\theta}^{i}} J^{\text{In}}(\boldsymbol{\phi},\boldsymbol{\theta}^{i}).
\vspace{-0.1pt}
\end{equation}
\noindent{We} denote the meta-parameters as $\boldsymbol{\phi}$, and the pre- and post-adapt inner parameters as $\boldsymbol{\theta}$ and $\boldsymbol{\theta}^{K}$, respectively. The meta objective as $J^{\text{Out}}$, the inner loop objective as $J^{\text{In}}$. The general objective for GMRL is to maximise the meta objective $J^{\text{Out}}(\boldsymbol{\phi}, \boldsymbol{\theta}^{K})$, where the inner-loop post-adapt parameters are obtained by taking $K$ policy gradient steps. Note that all inner-loop updates refer to an expected policy gradient (EPG) and all bias term we discuss hereafter is the bias \emph{w.r.t} the \textbf{exact} meta-gradient in this $K$-step EPG inner-loop setting. We discuss the \textbf{truncated} $K$-step EPG setting in Appendix~\ref{dis_epg}. The form of exact meta-gradient $\nabla_{\boldsymbol{\phi}} J^{K}(\boldsymbol{\phi})$ is given by the following proposition via the chain rule.
\begin{restatable}[$K$-step Meta-Gradient]{proposition}{exactmetagradient}\label{exact_meta_gradient}
The exact meta-gradient to the objective in Eq. \eqref{GMRL_objective}
can be written as:
\begin{equation}\label{GMRL_exact_gradient}
\begin{aligned}
&\nabla_{\boldsymbol{\phi}} J^{K}(\boldsymbol{\phi})=
\nabla_{\boldsymbol{\phi}} 
J^{\text{Out}}(\boldsymbol{\phi}, \boldsymbol{\theta}^{K}) 
+ \alpha 
\nabla_{\boldsymbol{\phi}}\boldsymbol{\theta}^{K} \nabla_{\boldsymbol{\theta}^{K}} 
J^{\text {Out}}(\boldsymbol{\phi}, \boldsymbol{\theta}^{K}),\\
&\nabla_{\boldsymbol{\phi}}\boldsymbol{\theta}^{K} = 
\sum_{i=0}^{K-1}  
\nabla_{\boldsymbol{\phi}} \nabla_{\boldsymbol{\theta}^{i}} J^{\text{In}}(\boldsymbol{\phi}, \boldsymbol{\theta}^{i}) \prod_{j=i+1}^{K-1}
\left(I+\alpha \nabla^{2}_{\boldsymbol{\theta}^{j}}J^{\text{In}}(\boldsymbol{\phi}, \boldsymbol{\theta}^{j})\right).
\vspace{-10pt}
\end{aligned}
\end{equation}
\end{restatable}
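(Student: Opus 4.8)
The plan is to treat $J^{K}(\boldsymbol{\phi}) = J^{\text{Out}}(\boldsymbol{\phi}, \boldsymbol{\theta}^{K}(\boldsymbol{\phi}))$ as a composite map in which the post-adapt parameters $\boldsymbol{\theta}^{K}$ are themselves a $\boldsymbol{\phi}$-dependent quantity generated by the $K$-step inner loop, and to recover both displayed identities by applying the total-derivative chain rule and then unrolling the inner-loop recursion by induction.

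First I would differentiate the outer objective. Because $J^{\text{Out}}$ depends on $\boldsymbol{\phi}$ both explicitly (through its first argument) and implicitly (through $\boldsymbol{\theta}^{K}$), the chain rule gives
\begin{equation*}
\nabla_{\boldsymbol{\phi}} J^{K}(\boldsymbol{\phi}) = \nabla_{\boldsymbol{\phi}} J^{\text{Out}}(\boldsymbol{\phi}, \boldsymbol{\theta}^{K}) + \left(\tfrac{d \boldsymbol{\theta}^{K}}{d \boldsymbol{\phi}}\right) \nabla_{\boldsymbol{\theta}^{K}} J^{\text{Out}}(\boldsymbol{\phi}, \boldsymbol{\theta}^{K}),
\end{equation*}
where the first summand holds $\boldsymbol{\theta}^{K}$ fixed and the second propagates through the adapted parameters. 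This already matches the shape of the first claimed equation; the remaining work is to show $\tfrac{d\boldsymbol{\theta}^{K}}{d\boldsymbol{\phi}} = \alpha\, \nabla_{\boldsymbol{\phi}}\boldsymbol{\theta}^{K}$, with $\nabla_{\boldsymbol{\phi}}\boldsymbol{\theta}^{K}$ as defined in the statement and the extra factor $\alpha$ pulled out front.

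The heart of the argument is the Jacobian $G_i := \tfrac{d\boldsymbol{\theta}^{i}}{d\boldsymbol{\phi}}$. I would differentiate the single update $\boldsymbol{\theta}^{i+1} = \boldsymbol{\theta}^{i} + \alpha \nabla_{\boldsymbol{\theta}^{i}} J^{\text{In}}(\boldsymbol{\phi}, \boldsymbol{\theta}^{i})$ totally with respect to $\boldsymbol{\phi}$, once more splitting the explicit from the implicit dependence of $\nabla_{\boldsymbol{\theta}^{i}} J^{\text{In}}$, to obtain the linear recursion
\begin{equation*}
G_{i+1} = G_{i}\left(I + \alpha \nabla^{2}_{\boldsymbol{\theta}^{i}} J^{\text{In}}\right) + \alpha \nabla_{\boldsymbol{\phi}} \nabla_{\boldsymbol{\theta}^{i}} J^{\text{In}},
\end{equation*}
whose homogeneous part is driven by the inner Hessian $\nabla^{2}_{\boldsymbol{\theta}^{i}} J^{\text{In}}$ and whose forcing term is the cross-Hessian $\nabla_{\boldsymbol{\phi}} \nabla_{\boldsymbol{\theta}^{i}} J^{\text{In}}$. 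Starting from $G_0 = 0$ (the initial iterate $\boldsymbol{\theta}^{0}$ is independent of $\boldsymbol{\phi}$), I would then prove by induction on $i$ that
\begin{equation*}
G_{K} = \alpha \sum_{i=0}^{K-1} \nabla_{\boldsymbol{\phi}} \nabla_{\boldsymbol{\theta}^{i}} J^{\text{In}} \prod_{j=i+1}^{K-1}\left(I + \alpha \nabla^{2}_{\boldsymbol{\theta}^{j}} J^{\text{In}}\right);
\end{equation*}
the inductive step just right-multiplies the hypothesis by $\left(I + \alpha \nabla^{2}_{\boldsymbol{\theta}^{i}} J^{\text{In}}\right)$, which lengthens each partial product by one factor, and then appends the new forcing term as the $i$-th summand (an empty product equal to $I$). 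Dividing out $\alpha$ yields exactly the stated $\nabla_{\boldsymbol{\phi}}\boldsymbol{\theta}^{K}$, and substituting back into the chain-rule identity closes the proof.

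The hard part is bookkeeping rather than any substantial idea. One must fix a single matrix convention throughout---here Jacobians act by left multiplication, so the homogeneous factors accumulate on the right in increasing order $j = i+1, \dots, K-1$---and keep the total derivatives $G_i$ strictly separate from the partial derivatives $\nabla_{\boldsymbol{\phi}} \nabla_{\boldsymbol{\theta}^{i}} J^{\text{In}}$ and $\nabla^{2}_{\boldsymbol{\theta}^{i}} J^{\text{In}}$ that appear as coefficients. The ordering of the non-commuting factors inside the product, and the empty-product convention that correctly handles the $i = K-1$ boundary term, are the two places where a sign- or index-error would most easily slip in.
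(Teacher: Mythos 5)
Your proposal is correct and follows essentially the same route as the paper's own proof: apply the chain rule to split the explicit and implicit $\boldsymbol{\phi}$-dependence of $J^{\text{Out}}$, differentiate the one-step update to get the linear recursion $G_{i+1}=G_{i}\left(I+\alpha\nabla^{2}_{\boldsymbol{\theta}^{i}}J^{\text{In}}\right)+\alpha\nabla_{\boldsymbol{\phi}}\nabla_{\boldsymbol{\theta}^{i}}J^{\text{In}}$, and telescope from $G_{0}=0$. Your explicit remark that the leading factor $\alpha$ is pulled out of $\nabla_{\boldsymbol{\phi}}\boldsymbol{\theta}^{K}$ in the statement even resolves a small notational discrepancy between the paper's proposition and its appendix derivation.
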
 
The derivation of the above proposition is in Appendix~\ref{GMRL_exact_gradient_pf}.

For each topic mentioned in Sec.~\ref{related}, we pick one GMRL algorithm example to illustrate how they can be fit into this framework.  To describe meta-gradient estimation in RL, we start with basic  notations. Consider a discrete-time finite horizon Markov Decision Process (MDP) defined by $\langle\mathcal{S}, \mathcal{A}, p, r, \gamma, H\rangle$. At each time step $t$, the RL agent observes a state $\boldsymbol{s}_{t}\in \mathcal{S}$, takes an action $\boldsymbol{a}_{t}\in\mathcal{A}$ based on the policy $\pi_{\boldsymbol{\theta}} (\boldsymbol{a}_{t}|\boldsymbol{s}_{t})$ parametrised with $\boldsymbol{\theta} \in \mathbb{R}^{d}$, transits to the next state $\boldsymbol{s}_{t+1}\in \mathcal{S}$ according to the transition function $p(\boldsymbol{s}_{t+1}|\boldsymbol{s}_{t}, \boldsymbol{a}_{t})$ and receives the reward $r(\boldsymbol{s}_{t},\boldsymbol{a}_{t})$ . We define the return $\mathcal{R}(\boldsymbol{\tau})=\sum_{t=0}^{H} \gamma^{t} r(\boldsymbol{s}_{t}, \boldsymbol{a}_{t})$ as the discounted sum of rewards along a trajectory $\boldsymbol{\tau}:=(\boldsymbol{s}_{0}, \boldsymbol{a}_{0}, \ldots, \boldsymbol{s}_{H-1}, \boldsymbol{a}_{H-1}, \boldsymbol{s}_{H})$ sampled by agent policy. The objective for the RL agent is to maximise the expected discounted sum of rewards $V(\boldsymbol{\theta})=\mathbb{E}_{\boldsymbol{\tau} \sim p(\boldsymbol{\tau} ; \boldsymbol{\theta})}[\mathcal{R}(\boldsymbol{\tau})]$. Then the RL agent updates parameter $\boldsymbol{\theta}$ using policy gradient given by $\nabla_{\boldsymbol{\theta}} V(\boldsymbol{\theta})=\mathbb{E}_{\boldsymbol{\tau} \sim p(\boldsymbol{\tau} ; \boldsymbol{\theta})}\left[\nabla_{\boldsymbol{\theta}} \log \pi_{\boldsymbol{\theta}}(\boldsymbol{\tau})\mathcal{R}(\boldsymbol{\tau})\right]$ where $\nabla_{\boldsymbol{\theta}} \log \pi_{\boldsymbol{\theta}}(\boldsymbol{\tau})=\sum_{t=0}^{H} \nabla_{\boldsymbol{\theta}} \log \pi_{\boldsymbol{\theta}}(\boldsymbol{a}_{t} | \boldsymbol{s}_{t})$. For two-agent RL problems, we can  extend the MDP to two-agent MDP (or, Stochastic games \cite{shapley1953stochastic}) defined by $\langle\mathcal{S}_{1},\mathcal{S}_{2}, \mathcal{A}_{1},\mathcal{A}_{2}, P, r_{1},r_{2}, \gamma, H\rangle$, the learning objective for agent $i$ is to maximise its value function of $V_{i}(\boldsymbol{\theta})=\mathbb{E}_{\boldsymbol{\tau}_{i} \sim p(\boldsymbol{\tau}_{i} ; \boldsymbol{\theta}_{i})}\left[\mathcal{R}_{i}(\boldsymbol{\tau}_{i})\right]$.

\begin{table*}[!t]
\vspace{-0.25cm}
 \centering
 \caption{Four typical gradient-based Meta-RL (GMRL) algorithms.} 
  \begin{tabular}{|c|c|c|c|}
   \hline
    Category & Algorithms & Meta parameter $\phi$ & Inner parameter $\theta$ 
    \\\hline\hline
    Few-shot RL &
    MAML \cite{finn2017model}  &
    Initial Parameter& 
    Initial Parameter
    \\ \hline
    Opponent Shaping &
    LOLA \cite{foerster2017learning}  &     
    Ego-agent Policy& 
    Other-agent Policy 
    \\ \hline
    Single-lifetime MGRL &
    MGRL \cite{xu2018meta} & 
    Discount Factor & 
    RL Agent Policy
   \\ \hline 
    Multi-lifetime MGRL &
    LPG \cite{oh2020discovering} & 
    LSTM Network  & 
    RL Agent Policy 
   \\ \hline 
  \end{tabular}\label{tab:unifying framework}
\end{table*}

\textbf{MAML.} \citet{finn2017model} optimized over meta initial parameters to maximize the return of one-step adapted policy: $\boldsymbol{\theta}^{1} = \boldsymbol{\theta} + \alpha \nabla_{\boldsymbol{\theta}}V(\boldsymbol{\theta})$. In MAML-RL, $J^{\text{Out}}(\boldsymbol{\phi}, \boldsymbol{\theta}^{1})$ degenerates to $V(\boldsymbol{\theta}^{1})$ and $\boldsymbol{\phi}$ and $\boldsymbol{\theta}$ represent the same initial parameters. The meta-gradient can be derived in the form of Eq.~\eqref{GMRL_exact_gradient}: $\nabla_{\boldsymbol{\theta}}\boldsymbol{\theta}^{1} \nabla_{\boldsymbol{\theta}^{1}}V(\boldsymbol{\theta}^{1})$, where $\nabla_{\boldsymbol{\theta}}\boldsymbol{\theta}^{1}= I + \alpha\nabla_{\boldsymbol{\theta}}^{2}V(\boldsymbol{\theta}).$

\textbf{LOLA.} \citet{foerster2017learning} proposed a new learning objective by including an additional term  that accounts for the impact of the ego policy to the anticipated opponent's gradient update. For LOLA-agent with parameters $\boldsymbol{\phi}$, it will optimise its return over one-step-lookahead opponent parameters $\boldsymbol{\theta}^{1}$. The meta-gradient can be shown as: $\nabla_{\boldsymbol{\phi}} V_{1}(\boldsymbol{\phi}, \boldsymbol{\theta}^{1}) +\nabla_{\boldsymbol{\phi}}\boldsymbol{\theta}^{1} \nabla_{\boldsymbol{\theta}^{1}} V_{1}(\boldsymbol{\phi}, \boldsymbol{\theta}^{1})$, where $\nabla_{\boldsymbol{\phi}}\boldsymbol{\theta}^{1}= \alpha\nabla_{\boldsymbol{\phi}}\nabla_{\boldsymbol{\theta}} V_{2}(\boldsymbol{\phi}, \boldsymbol{\theta})$

\textbf{MGRL.} \citet{xu2018meta} proposed to tune the discount factor $\gamma$ and bootstrapping parameter $\lambda$ in an online manner. The main feature of MGRL is to conduct inner-loop RL policy $\boldsymbol{\theta}$ update and outer-loop meta parameters $\boldsymbol{\phi}=(\gamma, \lambda)$ alternately. In MGRL, $J^{\text{Out}}(\boldsymbol{\phi}, \boldsymbol{\theta}^{1})$ degenerates to $V(\boldsymbol{\theta}^{1})$. The meta-gradient takes the form: $\nabla_{\boldsymbol{\phi}}\boldsymbol{\theta}^{1} \nabla_{\boldsymbol{\theta}^{1}} V(\boldsymbol{\theta}^{1}), \nabla_{\boldsymbol{\phi}}\boldsymbol{\theta}^{1}= \alpha\nabla_{\boldsymbol{\phi}}\nabla_{\boldsymbol{\theta}}V(\boldsymbol{\phi},\boldsymbol{\theta})$.

\textbf{LPG.} \citet{oh2020discovering} aimed to learn a  neural network based RL algorithm, by which a RL agent can be properly trained. In LPG, $\boldsymbol{\theta}$ represents the RL agent policy parameters and $\boldsymbol{\phi}$ is the meta-parameter of neural LSTM RL algorithm, $J^{\text{In}}(\boldsymbol{\phi}, \boldsymbol{\theta})$ denotes $f(\boldsymbol{\phi}, \boldsymbol{\theta})$, which is the output of meta-network $\boldsymbol{\phi}$ for conducting inner-loop neural policy gradients. The meta-gradient can be shown as: $\nabla_{\boldsymbol{\phi}}\boldsymbol{\theta}^{1} \nabla_{\boldsymbol{\theta}^{1}} V(\boldsymbol{\theta}^{1}), \nabla_{\boldsymbol{\phi}}\boldsymbol{\theta}^{1}= \alpha \nabla_{\boldsymbol{\phi} }\nabla_{\boldsymbol{\theta}}f(\boldsymbol{\phi}, \boldsymbol{\theta})$.

\textbf{Remark.} The analytical form of exact meta-gradient given in Eq.~\eqref{GMRL_exact_gradient} involves computation of first-order gradient $\nabla_{\boldsymbol{\theta}} J^{\text {In }}$, $\nabla_{\boldsymbol{\phi}} J^{\text {Out }}$  and$\nabla_{\boldsymbol{\theta}} J^{\text {Out }}$, Jacobian $\nabla_{\boldsymbol{\phi}}\nabla_{\boldsymbol{\theta}} J^{\text {In}}$  and Hessian $\nabla^{2}_{\boldsymbol{\theta}} J^{\text {In}}$. In practice, these four quantities can be estimated by random samples from inner-loop update step $0$ to $K-1$, which denoted by ${\boldsymbol{\tau}}^{0:K-1}_0,{\boldsymbol{\tau}}^{0:K-1}_1,{\boldsymbol{\tau}}^{0:K-1}_2,{\boldsymbol{\tau}}_3$. As a result, the estimated gradient $\nabla_{\boldsymbol{\phi}}\hat{J}^{K}(\boldsymbol{\phi})$ can be derived as:
\begin{equation}\label{GMRL_estimate_gradient}
\begin{aligned}
&\nabla_{\boldsymbol{\phi}} \hat{J}^{K}(\boldsymbol{\phi})=
\nabla_{\boldsymbol{\phi}} 
\hat{J}^{\text {Out }}(\boldsymbol{\phi}, \hat{\boldsymbol{\theta}}^{K}, \boldsymbol{\tau}_3) 
+ \alpha \nabla_{\boldsymbol{\phi}}\hat{\boldsymbol{\theta}}^{K} \nabla_{\hat{\boldsymbol{\theta}}^{K}} 
\hat{J}^{\text {Out}}(\boldsymbol{\phi},\hat{\boldsymbol{\theta}}^{K},\boldsymbol{\tau}_3), \\
&\nabla_{\boldsymbol{\phi}}\hat{\boldsymbol{\theta}}^{K} =  \sum_{i=0}^{K-1}  \nabla_{\boldsymbol{\phi}} \nabla_{\hat{\boldsymbol{\theta}}^{i}} J^{\text{In}}(\boldsymbol{\phi}, \hat{\boldsymbol{\theta}}^{i},\boldsymbol{\tau}^{i}_1) \prod_{j=t+1}^{K-1}
\left(I+\alpha \nabla^{2}_{\hat{\boldsymbol{\theta}}^{j}}J^{\text{In}}(\boldsymbol{\phi}, \hat{\boldsymbol{\theta}}^{j},\boldsymbol{\tau}^{j}_2)\right). 
\end{aligned}
\end{equation}
The post-adapt inner parameter estimate takes the form $\hat{\boldsymbol{\theta}}^{K}=\boldsymbol{\theta}^{0} + \alpha\sum_{i=0}^{K-1}\nabla_{\hat{\boldsymbol{\theta}}^{i}} \hat{J}^{\text{In}}(\boldsymbol{\phi},\hat{\boldsymbol{\theta}}^{i},\boldsymbol{\tau}^{i}_{0})$.

\section{Theoretical Analysis of Meta-gradient Estimators}\label{analysis}
In this section, we systematically discuss and theoretically analyse the bias and variance terms for meta-gradient estimations in the current GMRL literature. We highlight two important sources of biases in meta-gradient estimations: the compositional bias and the multi-step Hessian bias. 
Our analysis builds on the following three assumptions:

\begin{assumption}[Lipschitz continuity]\label{assumption_1}
The outer-loop objective function $J^{\text {Out }}$ satisfies that
$J^{\text {Out }}(\cdot, \boldsymbol{\theta}) $ and $J^{\text {Out }}(\boldsymbol{\phi}, \cdot) $ are Lipschitz continuous with constants $m_{\boldsymbol{\theta}}$ and $m_{\boldsymbol{\phi}}$ respectively, $m_1=\max_{\boldsymbol{\theta}} m_{\boldsymbol{\theta}}$, $m_2=\max_{\boldsymbol{\phi}} m_{\boldsymbol{\phi}}$. $\nabla_{\boldsymbol{\phi}} J^{\text {Out }}(\cdot, \boldsymbol{\theta}) $ and $\nabla_{\boldsymbol{\theta}} J^{\text {Out }}(\boldsymbol{\phi}, \cdot) $ are Lipschitz continuous with constants $\mu_{\boldsymbol{\theta}}$ and $\mu_{\boldsymbol{\phi}}$ respectively,  $\mu_1=\max _{\boldsymbol{\theta}} \mu_{\boldsymbol{\theta}}$, $\mu_2=\max_{\boldsymbol{\phi}} \mu_{\boldsymbol{\phi}}$. The inner-loop objective function $J^{\text {In }}$ satisties that $\nabla_{\boldsymbol{\theta}} J^{\text {In}}(\cdot, \boldsymbol{\theta})$and $\nabla_{\boldsymbol{\theta}} J^{\text {In}}(\boldsymbol{\phi}, \cdot)$ are Lipschitz continuous with constants $c_{\boldsymbol{\theta}}$ and $c_{\boldsymbol{\phi}}$ respectively, $c_1=\max _{\boldsymbol{\theta}} c_{\boldsymbol{\theta}}$, $c_2=\max _{\boldsymbol{\phi}} c_{\boldsymbol{\phi}}$. $\nabla_{\boldsymbol{\theta}}^{2} J^{\text {In}}(\boldsymbol{\phi}, \cdot)$ is Lipschitz continuous with constants $\rho_{\boldsymbol{\phi}}$, $\rho_2=\max _{\boldsymbol{\phi}} m_{\boldsymbol{\phi}}$.
\end{assumption}
\begin{assumption}[Bias of estimators]\label{assumption_2} Outer-loop stochastic gradient estimator $\nabla_{\boldsymbol{\phi}} \hat{J}^{\text {Out }}(\boldsymbol{\phi}, \boldsymbol{\theta}, \boldsymbol{\tau})$ and $\nabla_{\boldsymbol{\theta}} \hat{J}^{\text {Out }}(\boldsymbol{\phi}, \boldsymbol{\theta}, \boldsymbol{\tau})$ are unbiased estimator of $\nabla_{\boldsymbol{\phi}} J^{\text {Out }}(\boldsymbol{\phi}, \boldsymbol{\theta})$ and $\nabla_{\boldsymbol{\theta}} J^{\text {Out }}(\boldsymbol{\phi}, \boldsymbol{\theta})$. Inner-loop stochastic gradient estimator $\nabla_{\boldsymbol{\theta}} \hat{J}^{\text {In}}(\boldsymbol{\phi}, \boldsymbol{\theta}, \boldsymbol{\tau})$ is unbiased estimator of $\nabla_{\boldsymbol{\theta}} J^{\text {In}}(\boldsymbol{\phi}, \boldsymbol{\theta})$.
\end{assumption}
\begin{assumption}[Variance of estimators]\label{assumption_3}The outer-loop stochastic gradient estimator $\nabla_{\boldsymbol{\phi}} \hat{J}^{\text {Out }}(\boldsymbol{\phi}, \boldsymbol{\theta}, \boldsymbol{\tau})$ and $\nabla_{\boldsymbol{\theta}} \hat{J}^{\text {Out }}(\boldsymbol{\phi}, \boldsymbol{\theta}, \boldsymbol{\tau})$ has bounded variance, i.e.,  $\mathbb{E}_{\tau} [\|\nabla_{\boldsymbol{\phi}} \hat{J}^{\text {Out}}(\boldsymbol{\phi}, \cdot, \boldsymbol{\tau})-\nabla_{\boldsymbol{\phi}} J^{\text {Out }}(\boldsymbol{\phi}, \cdot)\|^{2}] \leq (\sigma_{1})^{2}$, and $\mathbb{E}_{\tau} [\|\nabla_{\boldsymbol{\theta}} \hat{J}^{\text {Out }}(\boldsymbol{\phi}, \cdot, \boldsymbol{\tau})-\nabla_{\boldsymbol{\theta}} J^{\text {Out }}(\boldsymbol{\phi}, \cdot)\|^{2}] \leq (\sigma_{2})^{2}$.

\end{assumption} 
The above three assumptions are all common ones adopted by existing work \cite{fallah2020convergence,fallah2021convergence,ji2020multi}. We futher discussed the limitation of assumptions, which are presented in Appendix~\ref{assumption_limitations}.
\vspace{-10pt}


\subsection{The Compositional Bias}
\label{compositional bias}

Recall the $K$-step inner-loop meta-gradient estimate in Eq.~\eqref{GMRL_estimate_gradient}, existing GMRL methods usually get unbiased outer-loop gradient estimator {\small$\nabla_{\boldsymbol{\phi}} \hat{J}^{\text {Out }}(\boldsymbol{\phi}, \hat{\boldsymbol{\theta}}^{K}, \boldsymbol{\tau}_3)$} and {\small$\nabla_{\hat{\boldsymbol{\theta}}^{K}} \hat{J}^{\text {Out }}(\boldsymbol{\phi}, \hat{\boldsymbol{\theta}}^{K}, \boldsymbol{\tau}_3)$}, then the algorithm can get unbiased meta-gradient estimation by plugging in unbiased inner-loop gradient estimation {\small${\small\hat{\boldsymbol{\theta}}^{K}}$}, where {\small$\mathbb{E}[\hat{\boldsymbol{\theta}}^{K}]=\boldsymbol{\theta}^{K}$}. However, this is not true because of the compositional optimisation structure.

Consider a non-linear compositional scalar objective {\small${\small f(\boldsymbol{\theta}^{K})}$}, the gradient estimation bias comes from the fact that 
\[ f(\boldsymbol{\theta}^{K})= f(\mathbb{E}[\hat{\boldsymbol{\theta}}^{K}]) \neq \mathbb{E}[ f(\hat{\boldsymbol{\theta}}^{K})].\]
If one substitutes  the non-linear function {\small$f(\boldsymbol{\theta}^{K})$} with {\small$\nabla_{\boldsymbol{\theta}^{K}} J^{\text {Out }}(\boldsymbol{\phi}, \boldsymbol{\theta}^{K})$ and $\nabla_{\boldsymbol{\phi}} J^{\text {Out }}(\boldsymbol{\phi}, \boldsymbol{\theta}^{K})$}, then a typical meta-gradient estimation in GMRL introduces compositional bias:
\begin{equation}
\begin{aligned}
	&\mathbb{E}[\nabla_{\hat{\boldsymbol{\theta}}^{K}} \hat{J}^{\text {Out}}(\boldsymbol{\phi}, \hat{\boldsymbol{\theta}}^{K},\tau_3)]
	= \mathbb{E}[\nabla_{\hat{\boldsymbol{\theta}}^{K}} J^{\text {Out}}(\boldsymbol{\phi}, \hat{\boldsymbol{\theta}}^{K})] \neq
	\nabla_{\boldsymbol{\theta}^{K}} J^{\text {Out }}(\boldsymbol{\phi}, \boldsymbol{\theta}^{K}),  \\  
	&\mathbb{E}[\nabla_{\boldsymbol{\phi}}\hat{J}^{\text {Out }}(\boldsymbol{\phi}, \hat{\boldsymbol{\theta}}^{K},\tau_3)]=
	\mathbb{E}[\nabla_{\boldsymbol{\phi}} J^{\text {Out}}(\boldsymbol{\phi}, \hat{\boldsymbol{\theta}}^{K})]
	\neq\nabla_{\boldsymbol{\phi}} J^{\text {Out }}(\boldsymbol{\phi}, \boldsymbol{\theta}^{K}),
\end{aligned}
\end{equation}
which leads to meta-gradient estimation bias. The following lemma characterises compositional bias. 
\vspace{-10pt}
\begin{restatable}[Compositional Bias]{lemma}{compositionalbias}
\label{multi_step_Adaption_Error}
Suppose that Assumption \ref{assumption_1} and \ref{assumption_2} hold, let $\hat{\Delta}_{C}= \mathbb{E} [\| f(\hat{\boldsymbol{\theta}}^{K})  -  f(\boldsymbol{\theta}^{K})\| ]$ be the compositional bias and $C_0$ the Lipschitz constant of $ f(\cdot)$, $|\boldsymbol{\tau}|$ denote number of trajectories used to estimate inner-loop gradient in each inner-loop update step, $\alpha$ the learning rate, then we have,
\begin{equation}
\hat{\Delta}_{C} \leq C_0 \mathbb{E} [\|\hat{\boldsymbol{\theta}}^{K}  - \boldsymbol{\theta}^{K}\| ]\leq C_0 \left((1+\alpha c_2)^{K} -1\right) \frac{\hat{\sigma}_{\text{In}}}{c_2\sqrt{|\boldsymbol{\tau}|}},
\end{equation}
where $\hat{\sigma}_{\text{In}} = \max_{i} \sqrt{\mathbb{V}[\nabla_{\boldsymbol{\theta}^{i}}\hat{J}^{\text{In}}(\boldsymbol{\phi}, \boldsymbol{\theta}^{i}, \boldsymbol{\tau}_{0}^{i})]}$, $i\in \{0,..,K-1\}$.
\end{restatable}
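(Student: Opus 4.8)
The plan is to establish the two inequalities in succession. The first is immediate from the hypothesis that $f$ is $C_0$-Lipschitz: pointwise $\|f(\hat{\boldsymbol{\theta}}^{K}) - f(\boldsymbol{\theta}^{K})\| \le C_0\|\hat{\boldsymbol{\theta}}^{K} - \boldsymbol{\theta}^{K}\|$, and taking expectations gives $\hat{\Delta}_{C} \le C_0\,\mathbb{E}[\|\hat{\boldsymbol{\theta}}^{K} - \boldsymbol{\theta}^{K}\|]$. So the entire content of the lemma reduces to controlling the expected deviation of the stochastic inner-loop iterate $\hat{\boldsymbol{\theta}}^{K}$ (driven by sampled gradients $\nabla\hat{J}^{\text{In}}$) from its exact expected-gradient counterpart $\boldsymbol{\theta}^{K}$.

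For the second inequality I would track $\delta_i := \mathbb{E}[\|\hat{\boldsymbol{\theta}}^{i} - \boldsymbol{\theta}^{i}\|]$, which satisfies $\delta_0 = 0$ because both recursions start from the same $\boldsymbol{\theta}^{0}$. Subtracting the two update rules yields
\[
\hat{\boldsymbol{\theta}}^{i+1} - \boldsymbol{\theta}^{i+1} = (\hat{\boldsymbol{\theta}}^{i} - \boldsymbol{\theta}^{i}) + \alpha\big(\nabla_{\hat{\boldsymbol{\theta}}^{i}}\hat{J}^{\text{In}}(\boldsymbol{\phi},\hat{\boldsymbol{\theta}}^{i},\boldsymbol{\tau}_{0}^{i}) - \nabla_{\boldsymbol{\theta}^{i}} J^{\text{In}}(\boldsymbol{\phi},\boldsymbol{\theta}^{i})\big),
\]
and the key manoeuvre is to insert $\nabla_{\hat{\boldsymbol{\theta}}^{i}} J^{\text{In}}(\boldsymbol{\phi},\hat{\boldsymbol{\theta}}^{i})$ to split the bracket into a \emph{stochastic} part $\nabla_{\hat{\boldsymbol{\theta}}^{i}}\hat{J}^{\text{In}}(\boldsymbol{\phi},\hat{\boldsymbol{\theta}}^{i},\boldsymbol{\tau}_{0}^{i}) - \nabla_{\hat{\boldsymbol{\theta}}^{i}} J^{\text{In}}(\boldsymbol{\phi},\hat{\boldsymbol{\theta}}^{i})$ and a \emph{deterministic} part $\nabla_{\hat{\boldsymbol{\theta}}^{i}} J^{\text{In}}(\boldsymbol{\phi},\hat{\boldsymbol{\theta}}^{i}) - \nabla_{\boldsymbol{\theta}^{i}} J^{\text{In}}(\boldsymbol{\phi},\boldsymbol{\theta}^{i})$. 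Applying the triangle inequality and taking expectations, the deterministic part is bounded via Lipschitz continuity of $\nabla_{\boldsymbol{\theta}} J^{\text{In}}(\boldsymbol{\phi},\cdot)$ (Assumption~\ref{assumption_1}, constant $c_2$) by $c_2\delta_i$. For the stochastic part I would condition on $\hat{\boldsymbol{\theta}}^{i}$, which is measurable with respect to the batches drawn at steps $0,\dots,i-1$ and hence independent of the fresh batch $\boldsymbol{\tau}_{0}^{i}$; unbiasedness (Assumption~\ref{assumption_2}) makes it mean-zero given $\hat{\boldsymbol{\theta}}^{i}$, so its conditional second moment equals the variance of the $|\boldsymbol{\tau}|$-sample estimator, namely $\hat{\sigma}_{\text{In}}^{2}/|\boldsymbol{\tau}|$, and Jensen's inequality $\mathbb{E}\|X\| \le \sqrt{\mathbb{E}\|X\|^{2}}$ bounds its expected norm by $\hat{\sigma}_{\text{In}}/\sqrt{|\boldsymbol{\tau}|}$.

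Combining these gives the affine recurrence $\delta_{i+1} \le (1+\alpha c_2)\delta_i + \alpha\hat{\sigma}_{\text{In}}/\sqrt{|\boldsymbol{\tau}|}$, which I would unroll from $\delta_0 = 0$ and close using the geometric sum $\sum_{j=0}^{K-1}(1+\alpha c_2)^{j} = \frac{(1+\alpha c_2)^{K}-1}{\alpha c_2}$, producing exactly $\delta_K \le \big((1+\alpha c_2)^{K}-1\big)\frac{\hat{\sigma}_{\text{In}}}{c_2\sqrt{|\boldsymbol{\tau}|}}$. The main obstacle is the stochastic-term estimate: one must take care that $\hat{\sigma}_{\text{In}}$, although defined at the deterministic iterate $\boldsymbol{\theta}^{i}$, is effectively applied at the \emph{random} iterate $\hat{\boldsymbol{\theta}}^{i}$, so the argument tacitly requires the per-step inner-loop variance to be bounded by $\hat{\sigma}_{\text{In}}^{2}$ uniformly along the stochastic trajectory, and it leans on the independence of successive sample batches to invoke conditional unbiasedness. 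Everything after that decomposition is a routine Lipschitz-plus-Gr\"onwall-type unrolling.
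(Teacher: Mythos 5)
Your proposal is correct and follows essentially the same route as the paper's proof: the paper likewise reduces to bounding $\mathbb{E}[\|\hat{\boldsymbol{\theta}}^{K}-\boldsymbol{\theta}^{K}\|]$, inserts the conditional mean $\mathbb{E}_{\boldsymbol{\tau}^{i}_0}[\nabla_{\hat{\boldsymbol{\theta}}^{i}}\hat{J}^{\text{In}}]$ (equal to $\nabla_{\hat{\boldsymbol{\theta}}^{i}}J^{\text{In}}$ by Assumption~\ref{assumption_2}) to split the increment into a Lipschitz-controlled deterministic part and a mean-zero stochastic part bounded by $\hat{\sigma}_{\text{In}}/\sqrt{|\boldsymbol{\tau}|}$ via the conditional variance and Jensen, and then unrolls the resulting recurrence $\delta_{i+1}\le(1+\alpha c_2)\delta_i+\alpha\hat{\sigma}_{\text{In}}/\sqrt{|\boldsymbol{\tau}|}$ into the geometric sum. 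Your closing caveat — that $\hat{\sigma}_{\text{In}}$ must uniformly bound the per-step variance along the random trajectory — is exactly how the paper's proof uses it (the max is taken over the conditional variances at the random iterates $\hat{\boldsymbol{\theta}}^{i}$).
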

\begin{proof}
See Appendix~\ref{multi_step_Adaption_Error_pf} for a detailed proof.
\vspace{-10pt}
\end{proof}
Lemma \ref{multi_step_Adaption_Error}  indicates that the compositional bias comes from the inner-loop policy gradient estimate, concerning the learning rate $\alpha$, the sample size $|\boldsymbol{\tau}|$ and the variance of policy gradient estimator $\hat{\sigma}_{\text{In}}$. This is a fundamental issue in many existing GMRL algorithms \cite{foerster2017learning,xu2018meta} since applying stochastic policy gradient update can introduce estimation errors, possibly due to large sampling variance, therefore  $\hat{\boldsymbol{\theta}}^{K}\neq\boldsymbol{\theta}^{K}$. It also  implies that the bias issue becomes more serious under the multi-step formulation since each policy gradient step introduces estimation error, resulting in  composite biases.
\subsection{The Multi-step Hessian  Bias}
\label{hessian bias}
Recall the analytical form of the exact meta-gradient in Eq. \eqref{GMRL_exact_gradient}, estimating $\nabla_{\boldsymbol{\phi}}\boldsymbol{\theta}^{K}$ involves computing Hessian $\nabla^{2}_{\boldsymbol{\theta}^{j}}J^{\text{In}}(\boldsymbol{\phi}, \boldsymbol{\theta}^{j})$. In Eq. \eqref{GMRL_estimate_gradient},  the Hessian term is estimated by $\nabla^{2}_{\hat{\boldsymbol{\theta}}^{j}}J^{\text{In}}(\boldsymbol{\phi}, \hat{\boldsymbol{\theta}}^{j},\tau^{j}_2)$ where $j \in \{1,\ldots,K-1\}$.  Hessian estimation is a non-trivial problem in GMRL, biased Hessian estimation issue has been brought up in various MAML-RL papers\cite{liu2019taming,rothfuss2018promp,tang2021unifying}, we offer a brief summary in Appendix \ref{biased_hessian} due to page limit. 
Beyond MAML-RL, many recent GMRL work suffers from  the same bias due to applying direct automatic differentiation. For example, existing work such as Eq. (3) in \citet{oh2020discovering}, Eq. (4) in \citet{zheng2020can}, Eq. (12), (13), (14) in \citet{xu2018meta} and Eq. (5), (6), (7) in \citet{xu2020meta} suffers from this issue.  Interestingly, most of them are \textbf{coincidentally} unbiased if they only conduct only one-step policy gradient update in the inner-loop. For $K$-step GMRL when $K=1$, $\nabla_{\boldsymbol{\phi}} \boldsymbol{\theta}^{1}$ in meta-gradient writes as:
\begin{equation}\label{GMRL_one_step}
\nabla_{\boldsymbol{\phi}} \boldsymbol{\theta}^{1}= \nabla_{\boldsymbol{\phi}} \nabla_{\boldsymbol{\theta}^{1}} J^{\text{In}}(\boldsymbol{\phi}, \boldsymbol{\theta}^{1}) .
\end{equation}
We can see from Eq.~\eqref{GMRL_one_step} that it would not involve Hessian $\nabla^{2}J^{\text {In }}$ computation if $\boldsymbol{\phi}\neq \boldsymbol{\theta}$. To further illustrate, in one-step MGRL, we can show that the estimation of $\nabla_{\boldsymbol{\phi}} \boldsymbol{\theta}^{1}$ are unbiased because it takes derivatives \emph{w.r.t} meta-parameters $\boldsymbol{\phi}=(\gamma, \lambda)$ which don't have gradient dependency on the trajectory distribution. However, when it takes more than one-step inner-loop policy gradient updates, the meta-gradient estimation will get  the hessian estimation bias. As a result, 
for the reason that when $K>1$ in $K$-step GMRL, the $\nabla_{\boldsymbol{\phi}} \boldsymbol{\theta}^{K}$ in meta-gradient takes the form:
\begin{equation}\label{GMRL_K_step}
\nabla_{\boldsymbol{\phi}}\boldsymbol{\theta}^{K} = \sum_{i=0}^{K-1}  \nabla_{ \boldsymbol{\phi}} \nabla_{\boldsymbol{\theta}^{i}} J^{\text{In}}\left(\boldsymbol{\phi}, \boldsymbol{\theta}^{i}\right) \prod_{j=t+1}^{K-1}\left(I+\alpha \nabla^{2}_{\boldsymbol{\theta}^{j}}J^{\text{In}}\left(\boldsymbol{\phi}, \boldsymbol{\theta}^{j}\right)\right).
\end{equation} 
This is the reason why we name it by \textbf{multi-step} Hessian bias. 

\subsection{Theoretical Bias-Variance Analysis}\label{theoretical_analysis}

Based on Lemma \ref{multi_step_Adaption_Error} and the discussion in Sec. \ref{hessian bias}, we can  derive the upper bound on the bias and variance of  the meta-gradient with $K$-step inner-loop updates.
\begin{restatable}[Upper bound for the bias and the variance]{theorem}{biasvariance}
\label{theorem_1}
Suppose that Assumption \ref{assumption_1} and \ref{assumption_2} and \ref{assumption_3} hold. Let $J_{\boldsymbol{\phi}, \boldsymbol{\theta}}$ denote $\nabla_{\boldsymbol{\phi}} \nabla_{\boldsymbol{\theta}} J^{\text {In}}$, $H_{\boldsymbol{\theta}, \boldsymbol{\theta}}$ denote $ \nabla^{2}_{\boldsymbol{\theta}} J^{\text {In}}$, $\hat{\Delta}^{K}= \|\mathbb{E}[\nabla_{\boldsymbol{\phi}} \hat{J}^{K}(\boldsymbol{\phi})] - \nabla_{\boldsymbol{\phi}} J^{K}(\boldsymbol{\phi})\|$ be the meta-gradient estimation bias, set $B=1+\alpha c_2$.
Then the bound of bias hold: 
\begin{equation}
\hat{\Delta}^{K}  
\leq 
\mathcal{O}\bigg(
(B + \hat{\Delta}_{H})^{K-1}
\left(\mathbb{E} [\|\hat{\boldsymbol{\theta}}^{K}  - \boldsymbol{\theta}^{K}\|]+
\hat{\Delta}_{J} +(K-1)
\right) \bigg).
\end{equation} 
Let $(\hat{\sigma}^{K})^{2}= \mathbb{V} \left[\nabla_{\boldsymbol{\phi}}\hat{J}^{K}(\boldsymbol{\phi})\right]$ be the meta-gradient estimation variance, set $V_1=(1+\alpha c_2)^{2}$, $V_2=2\alpha^{2} (m^{2}_{1} +3 \sigma_2^{2})$ the estimation variance is given by
\begin{equation}
\begin{aligned}
    &(\hat{\sigma}^{K})^{2}
\leq \mathcal{O}\Bigg(
    (V_1+\hat{\Delta}^{2}_{H})^{K-1}  
    \left(\mathbb{E} [\|\hat{\boldsymbol{\theta}}^{K}  - \boldsymbol{\theta}^{K}\|^{2}] +
    (K-1)\right)
    \\
    &  \qquad \qquad  +
    \left(
    V_2+(V_1+\hat{\Delta}^{2}_{H} + \hat{\sigma}^{2}_{H})^{K-1}-
    (V_1+\hat{\Delta}^{2}_{H})^{K-1}\right)
    (\hat{\Delta}^{2}_{J} + \hat{\sigma}^{2}_{J})
    \Bigg).
\end{aligned}
\end{equation}
{where $\hat{\Delta}_{J}=\max_{\boldsymbol{\phi}\times\boldsymbol{\theta}} \|\mathbb{E}[\hat{J}_{\boldsymbol{\phi},\boldsymbol{\theta}}]-J_{\boldsymbol{\phi}, \boldsymbol{\theta}}\|$. $\hat{\Delta}_{H}=\max_{\boldsymbol{\theta}}\|\mathbb{E}[\hat{H}_{\boldsymbol{\theta}, \boldsymbol{\theta}}]  -  H_{\boldsymbol{\theta}, \boldsymbol{\theta}}\|$. $(\hat{\sigma}_{J})^{2}= \frac{\max_{\boldsymbol{\phi}\times\boldsymbol{\theta}}\mathbb{V} [\hat{J}_{\boldsymbol{\phi}, \boldsymbol{\theta}}]}{|\tau|}$. $(\hat{\sigma}_{H})^{2}= \frac{\max_{\boldsymbol{\theta}}\mathbb{V} [\hat{H}_{\boldsymbol{\theta}, \boldsymbol{\theta}}]}{|\tau|}$.}
\vspace{-5pt}
\label{theorem_bias_variance}
\end{restatable}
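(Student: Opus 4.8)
The plan is to split the estimation error of $\nabla_{\boldsymbol{\phi}}\hat{J}^{K}(\boldsymbol{\phi})$ into the two additive pieces of Eq.~\eqref{GMRL_exact_gradient}: the direct outer term $\nabla_{\boldsymbol{\phi}}J^{\text{Out}}$ and the indirect term $\alpha\,\nabla_{\boldsymbol{\phi}}\boldsymbol{\theta}^{K}\nabla_{\boldsymbol{\theta}^{K}}J^{\text{Out}}$, and to bound the bias (and then the variance) of each. For the direct term, Assumption~\ref{assumption_2} makes $\nabla_{\boldsymbol{\phi}}\hat{J}^{\text{Out}}$ conditionally unbiased at the point $\hat{\boldsymbol{\theta}}^{K}$, so its only bias is compositional; Lipschitz continuity of the outer gradient (Assumption~\ref{assumption_1}) together with Jensen's inequality bounds it by a smoothness constant times $\mathbb{E}[\|\hat{\boldsymbol{\theta}}^{K}-\boldsymbol{\theta}^{K}\|]$, which is exactly the quantity controlled by Lemma~\ref{multi_step_Adaption_Error}. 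The work is therefore concentrated on the indirect term, and in particular on the bias of the meta-Jacobian $\nabla_{\boldsymbol{\phi}}\hat{\boldsymbol{\theta}}^{K}$.

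First I would replace the explicit product form of $\nabla_{\boldsymbol{\phi}}\boldsymbol{\theta}^{K}$ by the equivalent one-step recursion $G_{i+1}=\alpha J_i + G_i(I+\alpha H_i)$, with $G_i=\nabla_{\boldsymbol{\phi}}\boldsymbol{\theta}^{i}$, $J_i=\nabla_{\boldsymbol{\phi}}\nabla_{\boldsymbol{\theta}^{i}}J^{\text{In}}$, $H_i=\nabla^2_{\boldsymbol{\theta}^{i}}J^{\text{In}}$, and its hatted estimator $\hat{G}_{i+1}=\alpha\hat{J}_i+\hat{G}_i(I+\alpha\hat{H}_i)$. Conditioning on the inner-loop trajectory $\hat{\boldsymbol{\theta}}^{0:K}$, the fresh samples $\boldsymbol{\tau}_1^{i},\boldsymbol{\tau}_2^{i}$ make each $\hat{J}_i,\hat{H}_i$ conditionally unbiased up to the autodiff biases: $\|\mathbb{E}[\hat{H}_i\mid\hat{\boldsymbol{\theta}}^{i}]-H(\hat{\boldsymbol{\theta}}^{i})\|\le\hat{\Delta}_H$, and similarly $\hat{\Delta}_J$ for the Jacobian. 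Using $\|I+\alpha H_i\|\le 1+\alpha c_2=B$ (the Hessian norm is bounded by the Lipschitz constant $c_2$ of $\nabla_{\boldsymbol{\theta}}J^{\text{In}}$), each conditional factor satisfies $\|I+\alpha\,\mathbb{E}[\hat{H}_i\mid\cdot]\|\le B+\alpha\hat{\Delta}_H$, so unrolling the recursion and telescoping the product difference $\prod_j(I+\alpha\hat{H}_j)-\prod_j(I+\alpha H_j)$ gives the factor $(B+\hat{\Delta}_H)^{K-1}$, while the $K-1$ summands of $\nabla_{\boldsymbol{\phi}}\boldsymbol{\theta}^{K}$ each contribute one increment of $\hat{\Delta}_J$ and one increment of compositional error $\mathbb{E}[\|\hat{\boldsymbol{\theta}}^{i}-\boldsymbol{\theta}^{i}\|]$, producing the additive $\hat{\Delta}_J+(K-1)$ inside the bound. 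A final product-rule split $\|\hat{A}\hat{B}-AB\|\le\|\hat{A}\|\,\|\hat{B}-B\|+\|B\|\,\|\hat{A}-A\|$ with $A=\nabla_{\boldsymbol{\phi}}\boldsymbol{\theta}^{K}$ and $B=\nabla_{\boldsymbol{\theta}^{K}}J^{\text{Out}}$ recombines the meta-Jacobian bias with the compositional bias of the outer gradient, yielding the stated $\hat{\Delta}^{K}$ bound.

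For the variance I would follow the same recursion but propagate second moments. Splitting the variance across the two terms via $\mathbb{V}[X+Y]\le 2\mathbb{V}[X]+2\mathbb{V}[Y]$, the direct term contributes through $\sigma_1$ and the indirect term through the law of total variance applied to the product $\nabla_{\boldsymbol{\phi}}\hat{\boldsymbol{\theta}}^{K}\,\nabla_{\hat{\boldsymbol{\theta}}^{K}}\hat{J}^{\text{Out}}$; the constant $V_2=2\alpha^2(m_1^2+3\sigma_2^2)$ collects the Lipschitz/gradient-norm bound of $J^{\text{Out}}$ and the outer-gradient variance $\sigma_2$ from Assumption~\ref{assumption_3}. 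The crucial step is bounding the second moment of the matrix product: conditional independence of the $\hat{H}_j$ across steps gives $\mathbb{E}\|\prod_j(I+\alpha\hat{H}_j)\|^2\le\prod_j\mathbb{E}\|I+\alpha\hat{H}_j\|^2\le(V_1+\hat{\Delta}_H^2+\hat{\sigma}_H^2)^{K-1}$, where $V_1=(1+\alpha c_2)^2$ and $\hat{\Delta}_H^2+\hat{\sigma}_H^2$ is the second moment of the per-step Hessian estimation error. Subtracting the pure-bias second moment $(V_1+\hat{\Delta}_H^2)^{K-1}$ isolates the genuine variance carried by $\hat{\sigma}_H^2$, which is exactly the bracketed difference multiplying $(\hat{\Delta}_J^2+\hat{\sigma}_J^2)$ in the statement, while the remaining $(V_1+\hat{\Delta}_H^2)^{K-1}(\mathbb{E}\|\hat{\boldsymbol{\theta}}^{K}-\boldsymbol{\theta}^{K}\|^2+(K-1))$ piece is the second-moment analogue of the bias recursion.

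The hardest part will be the dependency bookkeeping in the indirect term: the inner iterates $\hat{\boldsymbol{\theta}}^{0:K}$ are shared across, and correlated with, every Jacobian and Hessian estimator, so the product of estimators does not factorise under the unconditional expectation. The key device is to condition on the iterate path before invoking independence of the fresh samples, which cleanly separates the autodiff estimation bias ($\hat{\Delta}_H,\hat{\Delta}_J$, frozen at the random evaluation point) from the compositional bias (the displacement $\hat{\boldsymbol{\theta}}^{i}-\boldsymbol{\theta}^{i}$ absorbed through Lipschitzness). Making this separation without double-counting, so that the bias factor is $(B+\hat{\Delta}_H)^{K-1}$ rather than one also inflated by $\hat{\sigma}_H$, is the delicate accounting that drives the whole argument.
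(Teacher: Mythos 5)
Your proposal follows essentially the same route as the paper's proof: the same split into the direct outer term (handled by conditional unbiasedness plus Lipschitz continuity, giving the compositional piece) and the indirect meta-Jacobian term, the same device of conditioning on the inner-loop iterate path before invoking independence of the fresh samples, the same telescoping of the Hessian product difference to produce the $(B+\hat{\Delta}_{H})^{K-1}$ factor, and the same law-of-total-variance decomposition with second-moment propagation for the variance bound. Your one-step recursion for $\nabla_{\boldsymbol{\phi}}\boldsymbol{\theta}^{i}$ is just the unrolled form of the paper's explicit sum-of-products, so the argument is the same in substance.
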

\begin{proof}
See Appendix \ref{theorem_1_pf} for a detailed proof.
\vspace{-5pt}
\end{proof}
Theorem \ref{theorem_1} shows that the upper bound of bias and variance consists of two parts: the first term indicates compositional bias $\mathbb{E} [\|\hat{\boldsymbol{\theta}}^{K}  - \boldsymbol{\theta}^{K}\|]$ in Lemma \ref{multi_step_Adaption_Error}, while the second term refers to the second-order  estimation bias ($\hat{\Delta}_{J},\hat{\Delta}_{H}$) and the variance ($\hat{\sigma}_{J},\hat{\sigma}_{H}$). Several observations can be made based on Theorem \ref{theorem_1}: 1) the compositional bias exists in both upper bound of bias and variance; 2) the multi-step Hessian  bias has polynomial impact on the upper bound of bias and variance; 3) most importantly, many existing GMRL algorithms suffer from the  compositional bias; moreover, the Hessian  bias can significantly increase meta-gradient  bias in the multi-step inner-loop setting. 

\subsection{Understanding Existing Mitigations for Meta-gradient Biases}\label{fix}
Based on the above theoretical analysis, we now try to explore and understand how existing methods can handle these two estimation biases.  
\par
\textbf{Off-policy Learning.} From the Lemma~\ref{multi_step_Adaption_Error} and the discussion in Sec.~\ref{compositional bias}, we know that the compositional bias is caused by the estimation error between $\hat{\theta}_{t}^{K}$ and $\theta_{t}^{K}$. In this case, a simple idea is that one can leverage off-policy learning technique \cite{sutton2018reinforcement} to handle the compositional bias problem by reusing samples $\tau^{i}_{0:t-1,0}$ together with $\tau^{i}_{t,0}$ to approximate $ \theta_{t}^{K}$, we want $\hat{\theta}_{t}^{K}$ to stay close to $\theta_{t}^{K}$. The intuition behind this correction is to enlarge the sample size $|\boldsymbol{\tau}|$ so that the compositional bias can be minimised according to Lemma~\ref{multi_step_Adaption_Error}. 
Specifically, we can apply importance sampling technique to correct the compositional bias, i.e.,  
$
    \mathbb{E}_{\boldsymbol{\tau} \sim p(\boldsymbol{\tau} ; \boldsymbol{\theta})}
    \left[
    \sum_{t=0}^{H-1} 
    \frac{\pi_{\theta}(\boldsymbol{a}_{t}|\boldsymbol{s}_{t})}
    {\mu(\boldsymbol{a}_{t}|\boldsymbol{s}_{t}) } 
    \mathcal{R}_{\boldsymbol{\phi}}(\boldsymbol{\tau})
    \right].
$
In practice, we also need to manually control the  level of off-policyness to prevent  the variance introduced by the importance sampling process from increasing the estimation variance conversely. 
\textbf{Multi-step Hessian Estimator.} From the theoretical analysis in Sec.~\ref{theoretical_analysis}, we can know that the Hessian estimation bias can significantly increase the meta-gradient estimation bias in the multi-step inner-loop setting. Many low-bias hessian estimator have been proposed in the scope of MAML-RL. However, the effect of them have never been verified in the general GMRL context. Here we have a second look at the Low Variance Curvature (LVC) method \cite{rothfuss2018promp},  one can replace the original log-likelihood with the LVC operator in the policy gradient step. As such, the Hessian estimator $\nabla^{2}_{\theta}J^{\text{In}}_{\text{LVC}}(\boldsymbol{\phi},\boldsymbol{\theta})=$ takes the form:
$
    \nabla_{\theta}
    \mathbb{E}_{\boldsymbol{\tau} \sim p(\boldsymbol{\tau} ; \boldsymbol{\theta})}
    \left[
    \sum_{t=0}^{H-1} 
    \frac{\nabla_{\boldsymbol{\theta}}\pi_{\boldsymbol{\theta}}(\boldsymbol{a}_{t}|\boldsymbol{s}_{t})}
    {\perp\pi_{\boldsymbol{\theta}}(\boldsymbol{a}_{t}|\boldsymbol{s}_{t})} 
     \mathcal{R}_{\boldsymbol{\phi}}(\boldsymbol{\tau})
    \right]
$
where $\perp$ is the stop-gradient operation which detaches the gradient dependency from the computation graph. As shown in \cite{rothfuss2018promp}, the LVC operator will ensure an unbiased first-order policy gradient and low-biased low-variance second-order policy gradient. Essentially, this operator only corrects the meta-gradient update and leaves the inner-loop gradient estimation formula untouched.

\vspace{-10pt}
\section{Experiments}\label{experiment}
\vspace{-5pt}
In this section we conduct empirical evaluation of our proposed bias analysis and the proposed methods to mitigate the biases, and our experiments cover all 3 GMRL fields listed in Table~\ref{tab:unifying framework}. In particular, we conduct a tabular MDP experiment to show the existence of two biases discussed in Sec.~\ref{analysis} using MAML \cite{finn2017model} and LIRPG \cite{zheng2018learning}. In order to show how the proposed methods can mitigate the compositional bias, we consider a Iterated Prisoner Dilemma (IPD) problem and use LOLA \cite{foerster2017learning} with off-policy corrections in Sec.~\ref{fix}; Similarly, we conduct evaluations on Atari games using MGRL \cite{xu2018meta} with LVC corrections in Sec.~\ref{fix} to mitigate the Hessian estimation bias.  We open source our code at \url{https://github.com/Benjamin-eecs/Theoretical-GMRL}.

\subsection{Investigate the correlation and bias of meta-gradient estimators}
\label{exp_tabular}
Firstly, we conduct experiments to study different meta-gradient estimators using a tabular example using MAML and LIRPG. To align with existing works in the literature, we adopt the settings of random MDPs in \cite{tang2021unifying} with the focus on meta-gradient estimation. We refer readers to Appendix \ref{apx:tabular_setting} for more experimental details.

\textbf{MAML-RL.} Recall that the meta-gradient in MAML is $\nabla_{\boldsymbol{\theta}^{0}}J^{\text{Out}}(\boldsymbol{\theta}^{K})$. To control the effect of gradient estimations, we use the three estimators to estimate following three terms: (I) inner-loop policy gradient $\nabla_{\boldsymbol{\theta}}J^{\text{In}}(\boldsymbol{\theta})$; (II) Jacobian/Hessian $\nabla_{\boldsymbol{\theta}}^{2}J^{\text{In}}(\boldsymbol{\theta})$; (III) outer-loop policy gradient $\nabla_{\boldsymbol{\theta}^{K}}J^{\text{Out}}(\boldsymbol{\theta}^{K})$. Refer to Appendix \ref{apx:implement} for the implementation of decomposing meta-gradient estimation with different estimators.

To show how the estimation is biased, we use a stochastic estimator denoted as S, and exact analytic calculator denoted as E, for all three derivative terms. Thus, we can have 7 valid permutations in the experiment to validate the estimation, where the rest EEE estimator is the exact gradient.

\begin{figure}[t]
    \centering
    \includegraphics[width=\linewidth]{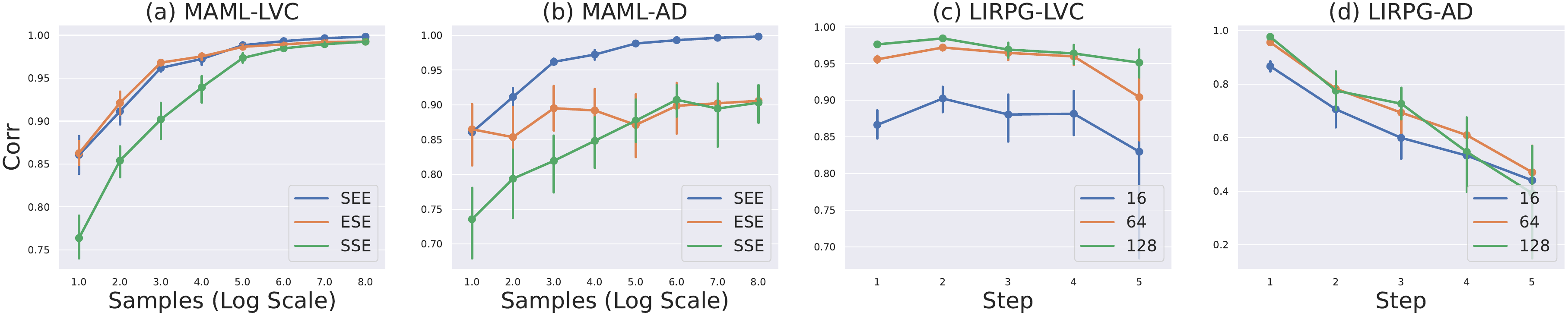}
    \vspace{-0.4cm}
    \caption{(a, b) Ablation study on sample size and estimators in MAML-RL. ``S" is for stochastic estimation while ``E" is for exact solution. AD refers to automatic differentiation. (c,d) Ablation study on sample size, steps and estimators in LIRPG.}
    \label{fig:sample}
\end{figure}

\textbf{Firstly, we conduct our ablation studies by comparing the correlation between of meta-gradient with the exact one.} The correlation metric, which is determined by bias and variance, can show how the final estimation quality is influenced by these two bias terms. As the quality of stochastic estimators vary from many factors, we conduct this ablation study under extensive combinations of estimation algorithms (including DiCE \cite{foerster2018dice}, Loaded-DiCE \cite{farquhar2019loaded}, LVC \cite{rothfuss2018promp}, and pure automatic differentiation in original MAML \cite{finn2017model}), learning rates, sample sizes, .etc. Due to the page limit, the results illustrated in Fig.~\ref{fig:sample}(a,b) only include ablation study on sample size using LVC/automatic differentiation and estimation using exact gradients for (III), the outer-loop policy gradient. For the rest ablation study and more evaluation metrics (variance of estimation), refer to Appendix \ref{apx:tabular_result}.

We start our evaluation by increasing the sample size. For simplicity, we only conduct one inner step here. In Fig.~\ref{fig:sample}(a), we leverage the  LVC \cite{rothfuss2018promp} estimator. We can see that a correct inner-loop policy estimation and/or a Hessian estimation can significantly improve the estimation quality (as $\mathrm{SEE} \approx \mathrm{ESE} > \mathrm{SSE}$ in low sample size case). By increasing the sample size, we can see the gap between them is shrinking, which verifies the finding in Lemma~\ref{multi_step_Adaption_Error}. In this case the compositional bias correction shares the same importance with the Hessian bais correction ($\mathrm{SEE} \approx \mathrm{ESE}$). We also compare it with the original yet biased gradient estimator of MAML in Fig.~\ref{fig:sample}(b), in which $\mathrm{SEE} > \mathrm{ESE}$ in all sample size settings. In fact, since the gradient estimation is biased, only SEE achieves near $1.0$ correlation provided sufficient samples, again confirms the importance of Hessian bias corrections in Sec. \ref{hessian bias}.

\begin{figure*}[tbhp]
    \centering
    \includegraphics[width=\linewidth]{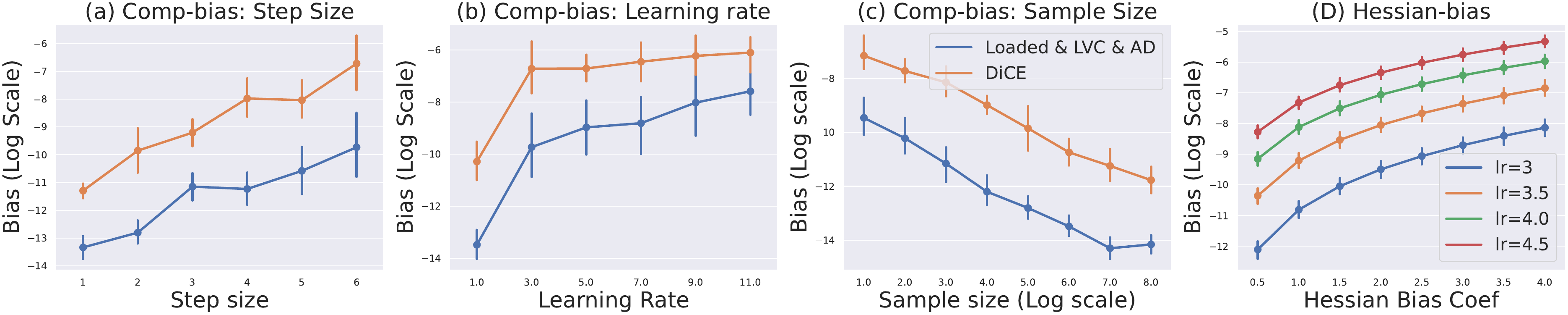}
    \caption{(a, b, c) Ablation study of meta-gradient bias due to the compositional bias in different estimators, step sizes, learning rates. Loaded-DiCE, LVC and AD achieve exactly the same compositional bias because they have the same first-order gradient,  (d) Ablation study of meta-gradient bias due to the Hessian bias in different learning rates and Hessian bias coefficients.}
    \label{fig:tabular_meta_bias}
\end{figure*}
\textbf{Beside the correlation result above, we also add additional experimental results in Fig. \ref{fig:tabular_meta_bias} over the pure meta-gradient bias term introduced by compositional bias and Hessian bias.} It can also be regarded as an empirical verification of our Lemma 4.4 and Theorem 4.5. In Fig. \ref{fig:tabular_meta_bias} (a, b, c), we mainly study How (a) the inner-loop step size, (b) learning rate and (c) sample size influence final meta-gradient bias. It successfully validates our Lemma 4.4 ($\mathcal{O}\big(K\alpha^{K}\hat{\sigma}_{\text{In}}|\tau|^{-0.5}\big)$) about the exponential impact from the inner-loop step $K$ (approximately linear relationship between log-scale bias and step size in (a)), the polynomial impact from the learning rate $\alpha$ (approximately Concave downward relationship between log-scale bias and learning rate in (b)) and the polynomial impact from the sample size $\alpha$ (approximately negative linear relationship between log-scale bias and log-scale sample size in (c)). In the second MAML-Hessian experiment, we conduct experiments to verify the polynomial impact $\mathcal{O}\big((K-1)(\hat{\Delta}_{H})^{K-1}\big)$ on the meta-gradient bias introduced by the multi-step Hessian estimation bias $\hat{\Delta}_{H}$ (The Concave downward relationship in (d)). In our implementation, we manually add the Hessian bias error into the estimation and control the quantity of it by multiplying different coefficients.
\textbf{LIRPG.} In this setting, we follow the algorithm of intrinsic reward generator presented in \cite{zheng2018learning}. In tabular MDP, we have an additional meta intrinsic reward matrix $\phi$. Starting from $\boldsymbol{\theta}^{0}$, the inner-loop process takes policy gradient based on the new reward matrix $R_{\text{new}} = R + \boldsymbol{\phi}$: $\boldsymbol{\theta}^{i+1}=\boldsymbol{\theta}^{i} + \alpha \nabla_{\boldsymbol{\theta}^{i}} J^{\text{In}}(\boldsymbol{\theta}^{i}, \phi), i \in \{0,1...K-1\}$. The meta-gradient estimation of the intrinsic reward matrix $\nabla_{\boldsymbol{\phi}}J^{\text{Out}}(\boldsymbol{\theta}^{K})$ is needed in this case. Note that in the outer loss we use the original reward matrix $R$ so the outer loss is $J^{\text{Out}}(\boldsymbol{\theta}^{K})$ rather than $J^{\text{Out}}( \boldsymbol{\phi},\boldsymbol{\theta}^{K})$. Compared with MAML-RL,  the object of meta-update (intrinsic matrix) and the object of inner-update (policy parameters) are different, which help us identify the problem mentioned in Sec. \ref{hessian bias}.\par

In this case, we choose the LVC and AD estimator. We conduct ablation study on inner-step and sample size shown in Fig.~\ref{fig:sample}(c,d). With more sample size and less step size, the correlation increases for both estimator. Two important features are: (1) With 1-step inner-loop setting, both estimator performs similarly in the correlation. (2) With multi-step inner-loop setting, LVC based estimator can still reach relatively high correlation while MAML-biased estimator directly reaches low correlation after 5-step inner-loop. The phenomenon shown here corresponds exactly to the Hessian estimation issue we discuss in Sec. \ref{hessian bias} and the bias issue will be more severe with multi-step inner-loop setting.
\begin{figure}[t]
    \centering
    \includegraphics[width=\linewidth]{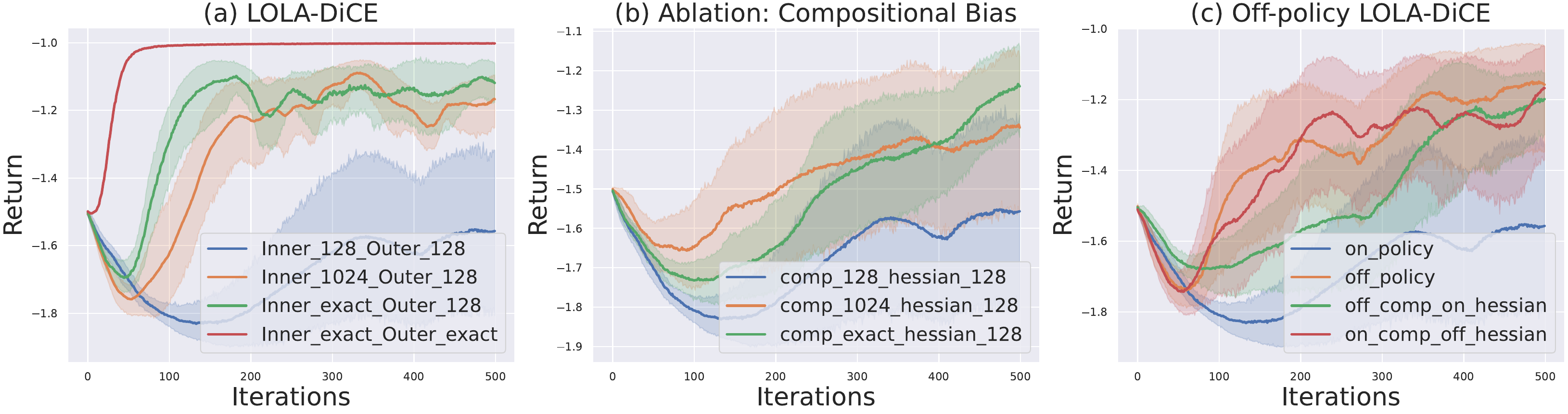}
    \caption{Experiment result of LOLA-DiCE over 10 seeds. The Inner$\_A\_$Outer$\_B$ legend means we use $A$ samples to estimate inner-loop gradient while $B$ samples to estimate outer-loop gradient. The 'exact' means we use analytical solution of policy gradient instead of estimation. 
    }
    \label{fig:lola}
    \vspace{-10pt}
\end{figure}
\subsection{Compositional bias/off-policy learning in LOLA}
\label{exp_lola}
In this subsection, we conduct three experiments on Iterated Prisoner Dilemma (IPD) with the LOLA algorithm to show: (1) The effect brought by different inner/outer estimators. (2) The effect brought compisitional bias (Sec. \ref{compositional bias}) (3) How off-policy correction (Sec. \ref{fix}) can help the LOLA algorithm. 
Please refer to Appendix \ref{apx_lola_dice}, \ref{apx_lola_dice_add} for more experimental setting and results.

\textbf{Ablation on LOLA-DiCE inner/outer estimation.} We report the result of conducting ablation study for different inner/outer-loop estimation of LOLA-DiCE in the Fig.~\ref{fig:lola}(a). Here the inner-loop estimation refers to $\nabla_{\boldsymbol{\theta}} J^{\mathrm{In}}(\boldsymbol{\phi},\boldsymbol{\theta})$ while outer-loop estimation refers to $\nabla_{\boldsymbol{\theta}^{1}}J^{\mathrm{Out}}(\boldsymbol{\phi}, \boldsymbol{\theta}^{1})$ and $\nabla_{\boldsymbol{\phi}}J^{\mathrm{Out}}(\boldsymbol{\phi}, \boldsymbol{\theta}^{1})$. The return shown in Fig.~\ref{fig:lola}(a) reveals us two findings: 1) The inner-loop gradient estimation plays an important role for making LOLA   work---the default batch size 128 fails while the batch size 1024 succeeds. 2) The outer-loop gradient estimation is also  crucial to the performance of LOLA-Exact. 
Furthermore,  we continue conducting ablation studies on the inner-loop gradient update.

\textbf{Ablation on compositional bias.} Since the unbiased DiCE estimator is used in LOLA-DiCE algorithm, there is no Hessian estimation bias in the LOLA algorithm. Thus, we mainly discuss the problem of compositional bias brought in Fig.~\ref{fig:lola}(b). We also apply the implementation   in Sec. \ref{exp_tabular} to decompose meta-gradient estimation with different estimators. 
Fig.~\ref{fig:lola}(b) show us the ablation study over compositional bias, which reveals that: compositional bias may decrease the performance and by adding more samples or using analytical solution, the performance can start to improve. 

\textbf{Off-policy DiCE and ablation study.} We use the off-policy learning to conduct inner-loop update and keep the outer-loop gradient same as before. By combing DiCE and off-policy learning, we have off-policy DiCE $J^{\mathrm{OFF-DICE}}$:
$
\label{LOLA_off_policy_corrction}\mathbb{E}_{\boldsymbol{\tau}}
    \left[
\sum_{t=0}^{H-1}\left(\prod_{t^{\prime}=0}^{t} \frac{\pi_{\phi}(\boldsymbol{a}_{t^{\prime}}^{1} \mid \boldsymbol{s}_{t^{\prime}}^{1}) \pi_{\theta}(\boldsymbol{a}_{t^{\prime}}^{2} \mid \boldsymbol{s}_{t^{\prime}}^{2})}{\mu_{1}(\boldsymbol{a}_{t^{\prime}}^{1} \mid \boldsymbol{s}_{t^{\prime}}^{1}) \mu_{2}(\boldsymbol{a}_{t^{\prime}}^{2} \mid \boldsymbol{s}_{t^{\prime}}^{2})}\right) R_{t}
\right],
$
where $\phi, \theta$ refer to the current policy, $\mu_{1}, \mu_{2}$ refer the behaviour policy for agent 1 and agent 2, respectively. H is the trajectory length and $R$ refers to the reward for agent. Note that the off-policy DiCE here can not only lower the compositional bias by lowering the first-order policy gradient error, but also helps lower the Hessian variance theoretically. By the decomposition trick, we conduct experiments by traversing over all learning settings for  (off-off/off-on/on-off/on-on), which are shown at Fig.~\ref{fig:lola}(d). Comparisons between different settings verify that off-policy DiCE can increase performance by either lowering the  compositional bias, or the hessian variance, or both. 
\begin{figure}[t]
    \centering
    \includegraphics[width=\linewidth]{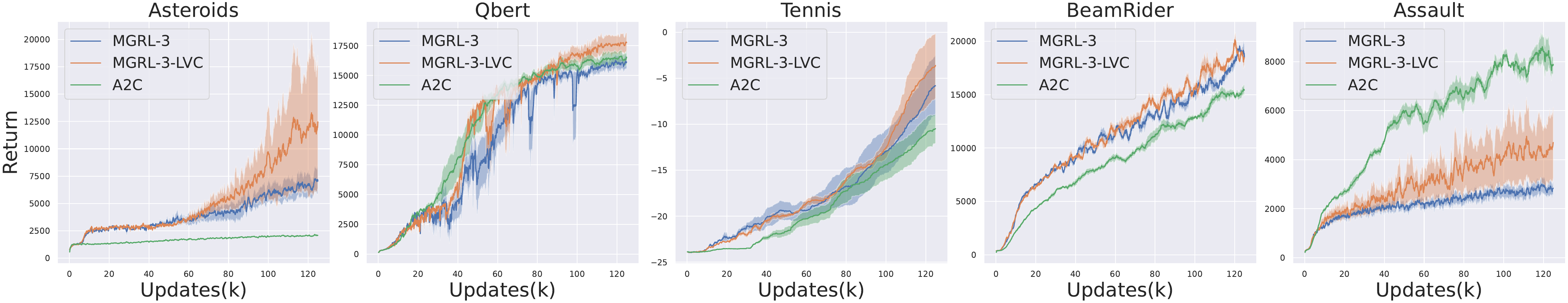}
    \caption{Experimental results on Atari game over 5 random seeds.}
    \label{fig:mgrl}
\end{figure}
\subsection{Multi-step Hessian correction on MGRL}
\label{exp_mgrl}
Finally, we conduct experiment over MGRL \cite{xu2018meta}. See Appendix \ref{apx_mgrl_setting} and \ref{apx: mgrl_dis} for experimental settings. When applying the LVC estimator in MGRL, we get the new inner-loop update equation:
$
    \mathbb{E}_{\boldsymbol{\tau} \sim p(\boldsymbol{\tau} ; \boldsymbol{\theta})}
    \left[
\sum_{t=0}^{H-1}
    \frac{\nabla_{\boldsymbol{\theta}}\pi_{\boldsymbol{\theta}}(\boldsymbol{a}_{t}|\boldsymbol{s}_{t})}
    {\perp\pi_{\boldsymbol{\theta}}(\boldsymbol{a}_{t}|\boldsymbol{s}_{t})} 
\left(g_{\boldsymbol{\phi}}(\boldsymbol{\tau})-v_{\boldsymbol{\theta}}(\boldsymbol{s}_{t})\right) 
    \right],
$
where $\boldsymbol{\phi}=(\gamma, \lambda)$ refer to meta-paramters and $\theta$ refers to the RL policy, $g_{\boldsymbol{\phi}}(\tau)$ denotes $\lambda$-return, $v_{\boldsymbol{\theta}}(s_{t})$ denotes value prediction.
We conduct experiment on eight environments of Atari games. We follow previous work \cite{bonnet2021one} to use the "discard" strategy in which we conduct multiple virtual inner-loop updates for meta gradient estimation. This strategy is designed to keep the inner learning update unchanged. 
 
In Fig. (\ref{fig:mgrl}),  we show five environments comparing three variants of algorithm: 1) Baseline Advantage Actor-critic(A2C) algorithm \cite{mnih2016asynchronous}; 2) 3-step MGRL + A2C; 4) 3-step MGRL + A2C + LVC correction. The "3-step"  means we take 3 inner-loop RL virtual updates for calculating meta-gradient. Refer to Appendix \ref{apx: mgrl_all} for experimental results on all eight environments. Compared with 3-step MGRL, the MGRL with LVC correction can substantially improves the performance, which validates the effectiveness of the multi-step Hessian correction in Sec. \ref{fix} for handling meta-gradient estimation bias and bring in better hyperparameter-tuning in RL. Note that the fact that  A2C algorithms can achieve better results compared with 3-step MGRL is consistent with the results in \cite{xu2018meta}.

\section{Conclusion}

In this paper, we introduce a unified framework for studying generic meta-gradient estimations in gradient-based Meta-RL. Based on this framework, we offer two theoretical insights that \textbf{1)} the compositional bias has an upper bound of $\mathcal{O}\big(K\alpha^{K}\hat{\sigma}_{\text{In}}|\tau|^{-0.5}\big)$ with respect to the inner-loop update step $K$, the learning rate $\alpha$, the estimate variance $\hat{\sigma}^{2}_{\text{In}}$ and the sample size $|\tau|$, and \textbf{2)} the multi-step Hessian bias $\hat{\Delta}_{H}$ has a polynomial impact of $\mathcal{O}\big((K-1)(\hat{\Delta}_{H})^{K-1}\big)$. To validate our theoretical discoveries, we conduct a comprehensive list of ablation studies. Empirical results over tabular MDP, LOLA-DiCE and MGRL validate our theories and the effectiveness of  correction methods. We believe our work can inspire more future work  on unbiased meta-gradient estimations in GMRL.

\section*{Acknowledgements}
We would like to thank Yunhao Tang for insightful discussion and help for tabular experiments.
    
\bibliography{_reference}


\section*{Checklist}

\begin{enumerate}

\item For all authors...
\begin{enumerate}
  \item Do the main claims made in the abstract and introduction accurately reflect the paper's contributions and scope?
    \answerYes{}
  \item Did you describe the limitations of your work?
    \answerYes{See Appendix \ref{dis_epg}}.
  \item Did you discuss any potential negative societal impacts of your work?
    \answerNA{}
  \item Have you read the ethics review guidelines and ensured that your paper conforms to them?
    \answerYes{}
\end{enumerate}

\item If you are including theoretical results...
\begin{enumerate}
  \item Did you state the full set of assumptions of all theoretical results?
    \answerYes{See Sec. \ref{analysis}}
        \item Did you include complete proofs of all theoretical results?
    \answerYes{See Appendix~\ref{lemmas} and Appendix~\ref{theorems}}
\end{enumerate}

\item If you ran experiments...
\begin{enumerate}
  \item Did you include the code, data, and instructions needed to reproduce the main experimental results (either in the supplemental material or as a URL)?
    \answerYes{}
  \item Did you specify all the training details (e.g., data splits, hyperparameters, how they were chosen)?
    \answerYes{}, See Appendix \ref{apx: experiment}
        \item Did you report error bars (e.g., with respect to the random seed after running experiments multiple times)?
    \answerYes{}, See Fig. \ref{fig:sample}, \ref{fig:lola}, \ref{fig:mgrl}.
        \item Did you include the total amount of compute and the type of resources used (e.g., type of GPUs, internal cluster, or cloud provider)?
    \answerYes{See Appendix \ref{apx: experiment}}
\end{enumerate}

\item If you are using existing assets (e.g., code, data, models) or curating/releasing new assets...
\begin{enumerate}
  \item If your work uses existing assets, did you cite the creators?
    \answerYes{}
  \item Did you mention the license of the assets?
    \answerYes{}
  \item Did you include any new assets either in the supplemental material or as a URL?
    \answerNo{}
  \item Did you discuss whether and how consent was obtained from people whose data you're using/curating?
    \answerNA{}
  \item Did you discuss whether the data you are using/curating contains personally identifiable information or offensive content?
    \answerYes{}
\end{enumerate}

\item If you used crowdsourcing or conducted research with human subjects...
\begin{enumerate}
  \item Did you include the full text of instructions given to participants and screenshots, if applicable?
    \answerNA{}
  \item Did you describe any potential participant risks, with links to Institutional Review Board (IRB) approvals, if applicable?
    \answerNA{}
  \item Did you include the estimated hourly wage paid to participants and the total amount spent on participant compensation?
    \answerNA{}
\end{enumerate}

\end{enumerate}

\newpage
\appendix
\onecolumn

\vspace{3ex}
\begin{center}
{\Large\textbf{Supplementary Material}}
\end{center}
\vspace{2ex}

The supplementary material is organized as follows. \Cref{more_topics} offers more algorithm illustration for the 4 topics we discuss in Sec. \ref{framework}. In \cref{dis_epg} we discuss our choice for EPG formulation and the truncated setting in GMRL. In \Cref{biased_hessian} we briefly summarise biased Hessian estimation issue in MAML-RL mentioned in Section~\ref{hessian bias}, In \Cref{assumption_limitations} we illustrate how realistic are Assumption \ref{assumption_1}-\ref{assumption_3} of Section~\ref{analysis}. \Cref{propositions,lemmas,theorems} contain the proofs for the results presented in the paper. In \Cref{supporting_lemmas} we  provide statements and proofs for some auxiliary lemmas which are instrumental for the main results. For convenience of the reader, before each proof we also restate the corresponding theorem. Finally, in \Cref{apx: experiment} we present additional experiments results.
\section{More topics on GMRL}\label{more_topics}
\subsection{Few-shot Reinforcement Learning}
One important research field in Meta Reinforcement Learning is few-shot Reinforcement Learning. The main objective of this research field is to enable Reinforcement Learning agent with fast adaptation ability. Instead of thousands of interactions in traditional Reinforcement Learning algorithms, agent in few-shot setting is only allowed to interact with the new environment for a few trajectories. One of the most classical gradient based algorithms in this field is \textbf{Model Agnostic Meta Learning (MAML-RL)}. \cite{finn2017model} aims at learning neural network's initial parameters for fast adaptation on new environments. It assumes distribution $\rho(\mathcal{T})$ over RL environment $\mathcal{T}$ and tries to optimise $\boldsymbol{\theta}$ which leads to high-performing updated policy $\boldsymbol{\theta}^{\prime}$. The objective equation for one-step MAML-RL can be shown as follows:
\begin{equation}
\begin{aligned}
&J(\boldsymbol{\theta})=\mathbb{E}_{\mathcal{T} \sim \rho(\mathcal{T})}\left[\mathbb{E}_{\boldsymbol{\boldsymbol{\tau}}^{\prime} \sim P_{\mathcal{T}}\left(\boldsymbol{\boldsymbol{\tau}}^{\prime} \mid \boldsymbol{\theta}^{\prime}\right)}\left[R\left(\boldsymbol{\boldsymbol{\tau}}^{\prime}\right)\right]\right] \quad \\
&\text { with } \quad \boldsymbol{\theta}^{\prime}=\boldsymbol{\theta}+\alpha \nabla_{\boldsymbol{\theta}} \mathbb{E}_{\boldsymbol{\boldsymbol{\tau}} \sim P_{\mathcal{T}}(\boldsymbol{\boldsymbol{\tau}} \mid \boldsymbol{\theta})}[R(\boldsymbol{\boldsymbol{\tau}})]
\end{aligned}
\end{equation}
where in practice we use the limited trajectories sampled from the new environment to estimate $\nabla_{\boldsymbol{\theta}} \mathbb{E}_{\boldsymbol{\boldsymbol{\tau}} \sim P_{\mathcal{T}}(\boldsymbol{\boldsymbol{\tau}} \mid \boldsymbol{\theta})}[R(\boldsymbol{\boldsymbol{\tau}})]$. During training, by estimating meta policy gradient $\nabla_{\boldsymbol{\theta}}J(\boldsymbol{\theta})$, MAML can conduct meta update on the initial policy parameters. 

In the scope of Eq. (\ref{GMRL_objective}), MAML-RL optimizes over meta initial parameters to maximize the return of one-step adapted policy: $\boldsymbol{\theta}^{\prime} = \boldsymbol{\theta} + \alpha \nabla_{\boldsymbol{\theta}}J^{\text{In}}(\boldsymbol{\theta})$. In MAML-RL, $J^{\text{Out}}(\boldsymbol{\phi}, \boldsymbol{\theta}^\prime)$ degenerates to $J^{\text{Out}}(\boldsymbol{\theta}^\prime)$ and $\boldsymbol{\phi}$ and $\boldsymbol{\theta}$ represent the same initial parameters. The meta-gradient can be derived with the following equation:
\begin{equation}
\nabla_{\boldsymbol{\theta}} J(\boldsymbol{\theta}) = \nabla_{\boldsymbol{\theta}}\boldsymbol{\theta}^{\prime} \nabla_{\boldsymbol{\theta}^{\prime}} J^{\text {Out }}\left(\boldsymbol{\theta}^{\prime}\right), \nabla_{\boldsymbol{\theta}}\boldsymbol{\theta}^{\prime}= I + \alpha\nabla_{\boldsymbol{\theta}}^{2}J^{\text{In}}(\boldsymbol{\theta})
\end{equation}
\par

\subsection{Meta-gradient in Opponent Shaping}
\label{apx:opponent_shaping}
Opponent shaping \cite{foerster2018learning, kim2020policy,letcher2018stable} is a powerful tool in multi-agent learning process for different purposes. For instance, \citet{foerster2018learning} and \citet{letcher2018stable} have shown that putting other-players learning dynamic into self-learning process can bring in cooperation behaviors, which may help to reach better social welfare compared with purely independent learning. Meta-gradient estimation is needed when ego-agent takes derivatives of other-agent policy gradient step. \textbf{Learning with Opponent-Learning Awareness (LOLA)}
\cite{foerster2018learning} proposed a new learning objective by including an additional term accounting for the impact of ego policy to the anticipated opponent gradient update. Specifically, in the two-player setting, with agent 1 policy $\boldsymbol{\phi}$ and agent 2 policy $\boldsymbol{\theta}$, the traditional independent learning (IL) and 1-step LOLA algorithm can result in different updates for agent 1:
\begin{equation}
    \begin{aligned}
    &\boldsymbol{\phi}^{\prime}_{\text{IL}}  =\boldsymbol{\phi} + \beta \nabla_{\boldsymbol{\phi}}J^{\text{Out}}(\boldsymbol{\phi}, \boldsymbol{\theta})\\
    &\boldsymbol{\phi}^{\prime}_{\text{LOLA}}=\boldsymbol{\phi} + \beta \nabla_{\boldsymbol{\phi}}J^{\text{Out}}(\boldsymbol{\phi}, \boldsymbol{\theta}^{\prime})\\ &\text{where } \boldsymbol{\theta}^{\prime}=\boldsymbol{\theta}+\alpha\nabla_{\boldsymbol{\theta}} J^{\text{In}}(\boldsymbol{\phi}, \boldsymbol{\theta})
    \end{aligned}
\end{equation}
Where $\beta$ refers to the outer learning rate and $J^{\text{In/Out}}$ refers to the value function for agent 2 and agent 1 respectively. For meta-agent 1 with parameters $\boldsymbol{\phi}$, it will optimise its return over one-step-lookahead opponent parameters $\boldsymbol{\theta}^{\prime}$. Thus the meta-gradient of meta-agent corresponds exactly to Eq.~\eqref{GMRL_objective} with  $\nabla_{\boldsymbol{\phi}}{\boldsymbol{\theta}^{\prime}}=\alpha\nabla_{\boldsymbol{\phi}}\nabla_{\boldsymbol{\theta}}J^{\text{In}}(\boldsymbol{\phi}, \boldsymbol{\theta})$. Note that this one-step-lookahead is a just virtual update considered in the optimisation of agent 1. Agent 2 can also choose this LOLA update by conducting one-step-lookahead over agent 1.
\par

\subsection{Single-lifetime Meta-gradient RL}\label{framework_mgrl}

In this setting, the main objective is to self-tune the meta parameters ($\gamma$ in \cite{xu2018meta}) or meta models (intrinsic model in \cite{zheng2018learning}) along with the underlying normal RL updates. It is called online because it only involves one single RL life-time. This research field is also related with online hyperparameter optimisation in supervised learning such as \cite{baydin2017online,franceschi2017forward}. \citet{xu2018meta} proposed meta-gradient reinforcement learning (MGRL) to tune the discount factor $\gamma$ and bootstrapping parameter $\lambda$ in an online manner. It tries to differentiate through one RL inner update to optimize the meta-parameters and maximise one-step policy return.
\begin{equation}\label{equ:ac}
\begin{aligned}
\max _{\boldsymbol{\eta}} V^{\pi_{\boldsymbol{\theta}^{\prime}}}, &\text{where } \boldsymbol{\theta}^{\prime} = \boldsymbol{\theta} + \alpha \nabla_{\boldsymbol{\theta}} J(\boldsymbol{\tau}, \boldsymbol{\theta}, \boldsymbol{\eta}), \text{and}\\
\nabla_{\boldsymbol{\theta}} J(\boldsymbol{\tau}, \boldsymbol{\theta}, \boldsymbol{\eta})&=\left(g_{\boldsymbol{\eta}}(\boldsymbol{\tau})-v_{\boldsymbol{\theta}}(S)\right) \nabla_{\boldsymbol{\theta}} \log \pi_{\boldsymbol{\theta}}(A \mid S)\\
&+\left(g_{\boldsymbol{\eta}}(\boldsymbol{\tau})-v_{\boldsymbol{\theta}}(S)\right) \nabla_{\boldsymbol{\theta}} v_{\boldsymbol{\theta}}(S)\\
&+ \nabla_{\boldsymbol{\theta}} H\left(\pi_{\boldsymbol{\theta}}(\cdot \mid S)\right)
\end{aligned}
\end{equation}
where $\eta$ refers to $(\gamma, \lambda)$, $\boldsymbol{\tau}$ refers to trajectories, $g_{\boldsymbol{\eta}}$, $v_{\boldsymbol{\theta}}$, H represent GAE estimation, value function and entropy respectively. Eq. \eqref{equ:ac} combines actor loss, critic loss and entropy loss, which are commonly used in typical Actor-Critic \cite{mnih2016asynchronous} algorithms.
Specifically, the meta parameters $(\gamma, \lambda)$ corresponds to $\boldsymbol{\phi}$ in Eq. \eqref{GMRL_objective} . After the policy parameters $\boldsymbol{\theta}$ take one policy gradient update to become $\boldsymbol{\theta}^{\prime}$($\boldsymbol{\theta}^{\prime} = \boldsymbol{\theta} + \alpha \nabla_{\boldsymbol{\theta}} J^{\text{In}}(\boldsymbol{\theta}, \boldsymbol{\phi})$), we can calculate the meta-gradient by backpropogating from $J^{\text{Out}}$ to meta parameters. In MGRL, $J^{\text{Out}}(\boldsymbol{\phi}, \boldsymbol{\theta}^\prime)$ degenerates to $J^{\text{Out}}(\boldsymbol{\theta}^\prime)$. The meta-gradient can be shown as:
\begin{equation}
\nabla_{\boldsymbol{\phi}} J(\boldsymbol{\phi}) = \nabla_{\boldsymbol{\phi}}\boldsymbol{\theta}^{\prime} \nabla_{\boldsymbol{\theta}^{\prime}} J^{\text {Out }}\left(\boldsymbol{\theta}^{\prime}\right), \nabla_{\boldsymbol{\phi}}\boldsymbol{\theta}^{\prime}= \alpha\nabla_{\boldsymbol{\phi}}\nabla_{\boldsymbol{\theta}}J^{\text{In}}(\boldsymbol{\theta}, \boldsymbol{\phi})
\end{equation}
Here for simplicity we omit the critic and entropy loss. Usually work in this research field only conduct one-step inner-loop update before taking meta update. Some recent works such as \cite{veeriah2019discovery,bonnet2021one} have also shown that multi-step online meta-gradient can achieve better performance.

\subsection{Multi-lifetime Meta-gradient RL}\label{framework_inverse_design}

Existing work like \cite{oh2020discovering,zheng2020can,xu2020meta,feng2021neural} are trying to learn some fundamental/generalizable meta module across different environments such as a neural RL algorithm in \cite{oh2020discovering}(LPG). An important feature of multi-lifetime Meta-gradient RL is that it inherently needs multi-step inner-loop to account for the effect of fundamental meta module over the RL process. The objective of LPG is to learn a neural network based RL algorithm, by which a RL agent can be properly trained. The mathmatical formulation can be shown as follows:
\begin{gather}
J(\boldsymbol{\phi})=\mathbb{E}_{\mathcal{T} \sim \rho(\mathcal{T})}\left[\mathbb{E}_{\boldsymbol{\tau}^{K} \sim P_{\mathcal{T}}\left(\boldsymbol{\tau}^{K} \mid \theta^{K}\right)}\left[R\left(\boldsymbol{\tau}^{K}\right)\right]\right], \text{with}\\ \theta^{i}=\theta^{i-1}+\alpha \nabla_{\theta^{i-1}} \mathbb{E}_{\boldsymbol{\tau} \sim P_{\mathcal{T}}(\boldsymbol{\tau} \mid \theta^{i-1})}[ f_{\boldsymbol{\phi}}(\boldsymbol{\tau})]
\end{gather}
where $f_{\boldsymbol{\phi}}(\boldsymbol{\tau})$ is the output of meta-network $\boldsymbol{\phi}$ for conducting inner-loop neural policy gradient and $k$ can be large to show the long-range impact brought by neural RL algorithm. We omit the kl inner loss used in \cite{oh2020discovering} for simplicity. In the scope of Eq. \eqref{meta_objective}, $J^{\text{In/Out}}$ refers to the value function, $\boldsymbol{\theta}$ represents the RL agent policy parameters and $\boldsymbol{\phi}$ is the meta-parameter of neural RL algorithm. Most of works are under a multi-task/environment (or a distribution over environment) and multi-lifetime setting. \cite{xu2020meta} is a special case in these work because it is also under the online setting. We believe the main reason is that the training iterations/sample complexity in \cite{xu2020meta} is real large (1e9) and makes it become a special case of 'multi-lifetime' setting.


\section{Discussion of expected policy gradient (EPG) formulation and truncated setting}
\label{dis_epg}
We discuss 4 research topics in Section \ref{framework}: few-shot RL(MAML-RL), opponent shaping(LOLA-DiCE), online meta gradient RL(MGRL) and meta gradient based inverse design(LPG). And we need to discuss how this multi-step EPG inner-loop formulation differs in these topics. Though they all need meta policy gradient estimation, the differences between setting and final objective require us to discuss them separately.

\textbf{Different setting}: MAML-RL and most inverse design algorithms are under multi-lifetime setting which can renew an environment and restart the RL training from the very beginning. Work in online meta gradient RL/LOLA only happen in a single lifetime RL process. There only exists one RL training process.

\textbf{Different objectives}: For MAML-RL, the main objective is to maximise the return of few-step adapted policy. Thus the objective corresponds exactly to few-step inner-loop formulation. However, for topics beyond few-shot RL, in most case they need to measure the influence of meta module over RL final (after thousands of steps) performance.

There are two important issues in this EPG formulation. The first one is that it assumes an expected policy gradient inner-loop update. And the second one is because we only consider few-step inner-loop update so they are under a truncated estimation setting which might bring in bias. Recently, one work \cite{vuorio2021no} argues that: (1) the general unbiased meta gradient for MAML-RL (\cite{finn2017model}) and Online Meta Gradient (\cite{xu2018meta},\cite{zheng2018learning}) should be the K-sample inner-loop meta gradient shown in E-MAML \cite{al2017continuous} rather than the expected policy gradient inner-loop meta gradient used in many recent work \cite{liu2019taming, rothfuss2018promp, tang2021unifying}. (2) The gradient estimator in online meta gradient utilise truncated optimization and the unbiased meta gradient should be the one in untruncated setting. 

Overall we agree that: (1) The K-sample inner-loop meta gradient estimator is unbiased for MAML-RL problem when sampled policy gradient are used. (2) To learn an schedule (rather than a global meta module) of meta-parameter/meta-module for MGRL or to learn some fundamental concepts in inverse-design, the gradient estimator in untruncated setting is unbiased. However, we argue that (1) For MAML-related problem, the variance of sampling correction term in K-sample inner-loop meta gradient estimator is large because it needs to sum up all $k$ terms and that is why \cite{vuorio2021no} proposes to use one coefficient to control. The EPG can achieve lower variance estimation and perform better empirically \cite{rothfuss2018promp} (2) For meta gradient based inverse design with multi-lifetime, the few-step meta gradient estimation under truncated setting is biased. 

However, in online meta gradient setting (MGRL) or online opponent modelling (LOLA) with single-lifetime, things are completely different thus a direct transform of K-sample inner-loop formulation from MAML to MGRL might not be that straightforward. There exists a large gap between the implementation of online meta gradient algorithm and the final objective (meta-module/hyperparameters schedule) we may wish. First, it's an online setting so the multiple lifetime setting where the algorithm can restart from the very beginning and reiterate the whole process is banned here. This makes the estimation of unbiased meta gradient impossible because the algorithm cannot access to the future dynamic for gradient estimation. The experiments with multi-lifetime training in \cite{vuorio2021no} is in fact out of the scope of online meta gradient setting and are more like meta gradient based inverse design. Second, in implementation of MGRL they only maintain one running $\gamma$ or intrinsic model rather than multiple meta modules as a real schedule needs. Also, recently there exist one work \cite{bonnet2021one} discussing multi-step MGRL and use one fixed meta parameters rather than a schedule for multi-step inner-loop, which may show a different understanding about untruncated gradient. In all, we believe that what online meta algorithm/opponent shaping like MGRL or LOLA optimizes and what the best they can achieve in such online setting are still open questions and remain to be further explored. It is really hard to simply formulate the unbiased meta gradient since the gap between implementation and objective is still not clear. 

Thus, in our paper, we still focuses on the previous work (MAML/MGRL and LOLA) objectives with EPG inner-loop setting and use its meta gradient as our target gradient. All bias term we discuss is the bias w.r.t. the expected meta gradient in this EPG inner-loop and truncated setting. That is our work's limitation and we leave more things for future work: (1)The gap between EPG inner-loop meta gradient and K-sample inner-loop meta gradient in MAML-RL related problem. (2) The gap between truncated EPG inner-loop meta gradient and what the best gradient estimation we can get in online meta gradient/opponent shaping. (3) The gap between truncated EPG inner-loop meta gradient and the untruncated gradient in meta gradient based inverse design.



\section{Brief summary on biased Hessian estimation in MAML-RL}
\label{biased_hessian}

We will briefly introduce the reasons of biased Hessian estimation with automatic differentiation in one-step MAML-RL. Firstly, we can derive the analytic form of $\boldsymbol{\theta}^{1}$ and $\nabla_{\boldsymbol{\theta}^{0}}\boldsymbol{\theta}^{1}$ 
\begin{equation}
    \boldsymbol{\theta}^{1} = \boldsymbol{\theta}^{0} + \alpha \mathbb{E}_{\boldsymbol{\tau} \sim p(\boldsymbol{\tau} ; \boldsymbol{\theta}^{0})}[\nabla_{\boldsymbol{\theta}^{0}} \log \pi(\boldsymbol{\tau} ) \mathcal{R}(\boldsymbol{\tau})]
\end{equation}
\begin{equation}
\nabla_{\boldsymbol{\theta}^{0}} \boldsymbol{\theta}^{1}=I+\mathbb{E}_{\boldsymbol{\tau} \sim p(\boldsymbol{\tau} ; \boldsymbol{\theta}^{0})}\left[\mathcal{R}(\boldsymbol{\tau})\left(\nabla_{\boldsymbol{\theta}^{0}}^{2} \log \pi_{\boldsymbol{\theta}^{0}}(\boldsymbol{\tau})+\nabla_{\boldsymbol{\theta}^{0}} \log \pi_{\boldsymbol{\theta}^{0}}(\boldsymbol{\tau}) \nabla_{\boldsymbol{\theta}^{0}} \log \pi_{\boldsymbol{\theta}^{0}}(\boldsymbol{\tau})^{\top}\right)\right]
\end{equation}
Typically we need to use trajectory samples $\boldsymbol{\tau}_{n}$ to estimate the policy gradient, we can get the adapted policy estimate.
\begin{equation}
    \hat{\boldsymbol{\theta}}^{1} = \boldsymbol{\theta}^{0} + \alpha \frac{1}{N} \sum_{\boldsymbol{\tau}_{n}} \sum_{t=0}^{H-1} \nabla_{\boldsymbol{\theta}}\log \pi_{\boldsymbol{\theta}}(\boldsymbol{a}^{n}_{t} \mid \boldsymbol{s}^{n}_{t})\left(\sum_{t^{\prime}=0}^{H} \gamma^{t} r\left(\boldsymbol{s}^{n}_{t^{\prime}}, \boldsymbol{a}^{n}_{t^{\prime}}\right)\right)
\end{equation}
Finally, implementation of MAML-RL derives the gradient estimate by automatic differentation. The corresponding estimation is biased:
\begin{equation}
\begin{aligned}
\mathbb{E}[\nabla_{\boldsymbol{\theta}^{0}} \hat{\boldsymbol{\theta}}^{1}]&=I+\alpha\mathbb{E}_{\boldsymbol{\tau} \sim p(\boldsymbol{\tau} ; \boldsymbol{\theta}^{0})}\left[\frac{1}{N} \sum_{\boldsymbol{\tau}_{n}} \sum_{t=0}^{H-1} \nabla_{\boldsymbol{\theta}^{0}}^{2}\log \pi_{\boldsymbol{\theta}^{0}}(\boldsymbol{a}^{n}_{t} \mid \boldsymbol{s}^{n}_{t})\left(\sum_{t^{\prime}=0}^{H} \gamma^{t} r(\boldsymbol{s}^{n}_{t^{\prime}}, \boldsymbol{a}^{n}_{t^{\prime}})\right)\right]\\
&=I+\alpha\mathbb{E}_{\boldsymbol{\tau} \sim p(\boldsymbol{\tau} ; \boldsymbol{\theta}^{0})}\left[\mathcal{R}(\boldsymbol{\tau})\nabla_{\boldsymbol{\theta}}^{2} \log \pi_{\boldsymbol{\theta}^{0}}(\boldsymbol{\tau})\right]\not=\nabla_{\boldsymbol{\theta}^{0}}\boldsymbol{\theta}^{1}
\end{aligned}
\end{equation}
The main reason of biased Hessian estimation is that automatic differentiation tools only consider the dependency of $\boldsymbol{\theta}$ in $\nabla_{\boldsymbol{\theta}}\log\pi_{\boldsymbol{\theta}}$ while ignoring the dependency in expectation $\mathbb{E}_{\boldsymbol{\tau} \sim p(\boldsymbol{\tau} ; \boldsymbol{\theta}^{0})}$. In practice, the $\mathbb{E}_{\boldsymbol{\tau} \sim p(\boldsymbol{\tau} ; \boldsymbol{\theta}^{0})}$ is represented by trajectory sampling so the gradient term $\nabla_{\boldsymbol{\theta}}\mathbb{E}_{\boldsymbol{\tau} \sim p(\boldsymbol{\tau} ; \boldsymbol{\theta}^{0})}$ is 0 using automatic differentiation. We need to add additional terms to further derive the gradient $\nabla_{\boldsymbol{\theta}}\mathbb{E}_{\boldsymbol{\tau} \sim p(\boldsymbol{\tau} ; \boldsymbol{\theta}^{0})}$ brought by sampling dependency.

\section{Limitations on Assumptions}
\label{assumption_limitations}

Assumption \ref{assumption_1}-\ref{assumption_3} are standard assumptions used in various theoretical MAML-RL papers \cite{fallah2020convergence,fallah2021convergence,ji2020multi}. 
The Lipschitz continuity assumptions in Assumption \ref{assumption_1} make sure we can work with nonconvex inner and outer objectives. 
The unbiased first-order gradient estimators assumptions in Assumption \ref{assumption_2} can highlight our findings on two source of biases, which is also a plausible assumption in GMRL settings. As typically adopted in the analysis for stochastic optimization, we make the bounded-variance assumption in Assumption \ref{assumption_3}.
Assumption \ref{assumption_1}-\ref{assumption_3} can be conveniently verified for e.g., inner-loop RL optimization in tabular MDP settings (finite state space and action space) with soft-max parameterisation of the policy, where
$\pi_{\boldsymbol{\theta}}(\boldsymbol{a} \mid \boldsymbol{s})\propto \exp (\boldsymbol{\theta}(\boldsymbol{s}, \boldsymbol{a}))$ with parameter $\boldsymbol{\theta}={\boldsymbol{\theta}(\boldsymbol{s}, \boldsymbol{a})}$. But in large-scale RL settings like atari games, Assumption \ref{assumption_1}-\ref{assumption_3} will not hold anymore.



\section{Proof of Proposition in Section~\ref{framework}}\label{propositions}
In this section, we provide the proof for Proposition~\ref{exact_meta_gradient} in Section~\ref{framework}.
\subsection{Proof of Proposition~\ref{exact_meta_gradient}}
\label{GMRL_exact_gradient_pf}

\exactmetagradient*
\begin{proof}
According to post-update inner parameters $\boldsymbol{\theta}^{K}=
\boldsymbol{\theta}^{0} + \alpha\sum_{i=0}^{K-1} \nabla_{\boldsymbol{\theta}^{i}} J^{\text{In}}(\boldsymbol{\phi},\boldsymbol{\theta}^{i})$ and the fact that $\nabla_{\boldsymbol{\theta}^{i}} J^{\text{In}}(\boldsymbol{\phi},\boldsymbol{\theta}^{i})$ is differentiable w.r.t. $\boldsymbol{\phi}$, we can treat $\boldsymbol{\theta}^{K}$ as a differentiable function w.r.t. $\boldsymbol{\phi}$. Based on the chain rule, we can get

\begin{equation}\label{chain_rule}
\begin{aligned}
    \nabla_{\boldsymbol{\phi}} J^{K}(\boldsymbol{\phi}) 
    &= \nabla_{\boldsymbol{\phi}} J^{\text {Out}}(\boldsymbol{\phi}, \boldsymbol{\theta}^{K}) + \nabla_{\boldsymbol{\phi}} \boldsymbol{\theta}^{K} \nabla_{\boldsymbol{\theta}^{K}} J^{\text {Out }}(\boldsymbol{\phi}, \boldsymbol{\theta}^{K})\\
\end{aligned}
\end{equation}

Based on the iterative updates that $\boldsymbol{\theta}^{i+1}=\boldsymbol{\theta}^{i}+\alpha \nabla_{\boldsymbol{\theta}^{i}} J^{In}(\boldsymbol{\phi}, \boldsymbol{\theta}^{i}) $, for 
$i=0, \ldots, K-1$ and similarly treat $\boldsymbol{\theta}^{i}$ as a differentiable function w.r.t. $\boldsymbol{\phi}$, we have

\begin{equation}
\begin{aligned}
\nabla_{\boldsymbol{\phi}} \boldsymbol{\theta}^{i+1}
=&\nabla_{\boldsymbol{\phi}} \boldsymbol{\theta}^{i} + \alpha \nabla_{\boldsymbol{\phi}} \nabla_{\boldsymbol{\theta}^{i}}J^{\text{In}}(\boldsymbol{\phi}, \boldsymbol{\theta}^{i}) +\alpha \nabla_{\boldsymbol{\phi}} \boldsymbol{\theta}^{i} \nabla^{2}_{\boldsymbol{\theta}^{i}}J^{\text{In}}(\boldsymbol{\phi}, \boldsymbol{\theta}^{i})  \\
=& \nabla_{\boldsymbol{\phi}} \boldsymbol{\theta}^{i} \left(I+\alpha \nabla^{2}_{\boldsymbol{\theta}^{i}}J^{\text{In}}(\boldsymbol{\phi}, \boldsymbol{\theta}^{i})\right) + \alpha \nabla_{\boldsymbol{\phi}} \nabla_{\boldsymbol{\theta}^{i}} J^{\text{In}}(\boldsymbol{\phi}, \boldsymbol{\theta}^{i})\\
\end{aligned}
\end{equation}
Telescoping the above equality over $i$ from 0 to $K-1$, we can get

\begin{equation}\label{best_response}
\begin{aligned}
\nabla_{\boldsymbol{\phi}} \boldsymbol{\theta}^{K}=&
\nabla_{\boldsymbol{\phi}} \boldsymbol{\theta}^{0} \prod_{i=0}^{K-1}\left(I+\alpha \nabla^{2}_{\boldsymbol{\theta}^{i}}J^{\text{In}}(\boldsymbol{\phi}, \boldsymbol{\theta}^{i})\right)+\alpha \sum_{i=0}^{K-1} \nabla_{\boldsymbol{\phi}} \nabla_{\boldsymbol{\theta}^{i}} J^{\text{In}}(\boldsymbol{\phi}, \boldsymbol{\theta}^{i}) \prod_{j=i+1}^{K-1}\left(I+\alpha \nabla^{2}_{\boldsymbol{\theta}^{j}}J^{\text{In}}(\boldsymbol{\phi}, \boldsymbol{\theta}^{j})\right) \\
=&\alpha \sum_{i=0}^{K-1} \nabla_{\boldsymbol{\phi}} \nabla_{\boldsymbol{\theta}^{i}} J^{\text{In}}(\boldsymbol{\phi}, \boldsymbol{\theta}^{i}) \prod_{j=i+1}^{K-1}\left(I+\alpha \nabla^{2}_{\boldsymbol{\theta}^{j}}J^{\text{In}}(\boldsymbol{\phi}, \boldsymbol{\theta}^{j})\right)
\end{aligned}
\end{equation}

Combining Eq.~\eqref{chain_rule} and Eq.~\eqref{best_response} finishes the proof of Proposition~\ref{exact_meta_gradient}.
\end{proof}
\section{Proof of Lemma in Section~\ref{analysis}}\label{lemmas}

\subsection{Proof of Lemma \ref{multi_step_Adaption_Error}} \label{multi_step_Adaption_Error_pf}

\compositionalbias*
\begin{proof}
In expected policy gradient inner-loop update setting, the iterative updates takes the form
\begin{equation}\label{lemma:eq_1}
    \boldsymbol{\theta}^{i+1} = \boldsymbol{\theta}^{i} + \alpha \nabla_{\boldsymbol{\theta}^{i}} J^{\text {In}}(\boldsymbol{\phi}, \boldsymbol{\theta}^{i}) ,\text{ }i=0, \ldots, K-1
\end{equation}
In Eq.~\eqref{GMRL_estimate_gradient}, $\boldsymbol{\theta}^{i+1}$ are estimated using samples $\boldsymbol{\tau}^{0:i}_{0}$, then we have
\begin{equation}\label{lemma:eq_2}
    \hat{\boldsymbol{\theta}}^{i+1} = \hat{\boldsymbol{\theta}}^{i} + \alpha \nabla_{\hat{\boldsymbol{\theta}}^{i}} \hat{J}^{\text {In}}(\boldsymbol{\phi}, \hat{\boldsymbol{\theta}}^{i},\boldsymbol{\tau}^{i}_{0}) ,\text{ } \hat{\boldsymbol{\theta}}^{0} =\boldsymbol{\theta}^{0},\text{ } i=0, \ldots, K-1
\end{equation}
According to the assumption that non-linear compositional vector-valued $f(\cdot)$ is Lipschitz continuous with constant $C_0$, we can get 
\begin{equation}\label{lemma:eq_0}
    \mathbb{E}_{\boldsymbol{\tau}^{0:K-1}_0} [\| f(\hat{\boldsymbol{\theta}}^{K})  -  f(\boldsymbol{\theta}^{K})\| ] \leq
    C_0 \mathbb{E}_{\boldsymbol{\tau}^{0:K-1}_0} \left[\left\|\hat{\boldsymbol{\theta}}^{K}  - \boldsymbol{\theta}^{K}\right\| \right]
\end{equation}
Based on Eq.~\eqref{lemma:eq_1} and Eq.~\eqref{lemma:eq_2}, we can get
\begin{equation}
\begin{aligned}
&\mathbb{E}_{\boldsymbol{\tau}^{0:K-1}_0} \left[\left\|\hat{\boldsymbol{\theta}}^{K}  - \boldsymbol{\theta}^{K}\right\| \right] \\
=& \mathbb{E}_{\boldsymbol{\tau}^{0:K-1}_0} \left[\bigg\|\hat{\boldsymbol{\theta}}^{K-1}  - \boldsymbol{\theta}^{K-1} + \alpha \nabla_{\boldsymbol{\theta}^{K-1}}J^{\text{In}}(\boldsymbol{\phi},\boldsymbol{\theta}^{K-1}) - \alpha \nabla_{\hat{\boldsymbol{\theta}}^{K-1}}\hat{J}^{\text{In}}(\boldsymbol{\phi},\hat{\boldsymbol{\theta}}^{K-1},\boldsymbol{\tau}^{K-1}_0)\bigg\|  \right]\\
\overset{(i)}\leq& \mathbb{E}_{\boldsymbol{\tau}^{0:K-1}_0}\left[\bigg\| \hat{\boldsymbol{\theta}}^{K-1}  - \boldsymbol{\theta}^{K-1} \bigg\|\right] + \alpha \mathbb{E}_{\boldsymbol{\tau}^{0:K-1}_0}\left[\bigg\| \nabla_{
\boldsymbol{\theta}^{K-1}}J^{\text{In}}(\boldsymbol{\phi},\boldsymbol{\theta}^{K-1})
-
\mathbb{E}_{\boldsymbol{\tau}^{K-1}_0}[\nabla_{\hat{\boldsymbol{\theta}}^{K-1}}\hat{J}^{\text{In}}(\boldsymbol{\phi},\hat{\boldsymbol{\theta}}^{K-1},\boldsymbol{\tau}^{K-1}_0)] \bigg\|\right] +\\
&\alpha \mathbb{E}_{\boldsymbol{\tau}^{0:K-1}_0}\left[\bigg\| \mathbb{E}_{\boldsymbol{\tau}^{K-1}_0}[\nabla_{
\hat{\boldsymbol{\theta}}^{K-1}}\hat{J}^{\text{In}}(\boldsymbol{\phi},\hat{\boldsymbol{\theta}}^{K-1},\boldsymbol{\tau}^{K-1}_0)] - \nabla_{
\hat{\boldsymbol{\theta}}^{K-1}}\hat{J}^{\text{In}}(\boldsymbol{\phi},\hat{\boldsymbol{\theta}}^{K-1},\boldsymbol{\tau}^{K-1}_0) \bigg\|\right]\\
\leq& \mathbb{E}_{\boldsymbol{\tau}^{0:K-1}_0}\left[\bigg\| \hat{\boldsymbol{\theta}}^{K-1}  - \boldsymbol{\theta}^{K-1} \bigg\|\right] + \alpha c_2 \mathbb{E}_{\boldsymbol{\tau}^{0:K-1}_0}\left[\bigg\| \hat{\boldsymbol{\theta}}^{K-1}
-
 \boldsymbol{\theta}^{K-1} \bigg\|\right] +\\
&\alpha \mathbb{E}_{\boldsymbol{\tau}^{0:K-1}_0}\left[\bigg\| \mathbb{E}_{\boldsymbol{\tau}^{K-1}_0}[\nabla_{
\hat{\boldsymbol{\theta}}^{K-1}}\hat{J}^{\text{In}}(\boldsymbol{\phi},\hat{\boldsymbol{\theta}}^{K-1},\boldsymbol{\tau}^{K-1}_0)] - \nabla_{\hat{\boldsymbol{\theta}}^{K-1}}\hat{J}^{\text{In}}(\boldsymbol{\phi},\hat{\boldsymbol{\theta}}^{K-1},\boldsymbol{\tau}^{K-1}_0) \bigg\|\right]\\
\leq& (1+\alpha c_{2})\mathbb{E}_{\boldsymbol{\tau}^{0:K-1}_0}\left[\bigg\| \hat{\boldsymbol{\theta}}^{K-1}  - \boldsymbol{\theta}^{K-1} \bigg\|\right] + \\
&\alpha \mathbb{E}_{\boldsymbol{\tau}^{0:K-2}_0}\Bigg[\mathbb{E}_{\boldsymbol{\tau}^{K-1}_0}\left[\bigg\| \mathbb{E}_{\boldsymbol{\tau}^{K-1}_0}[\nabla_{
\hat{\boldsymbol{\theta}}^{K-1}}\hat{J}^{\text{In}}(\boldsymbol{\phi},\hat{\boldsymbol{\theta}}^{K-1},\boldsymbol{\tau}^{K-1}_0)] - \nabla_{\hat{\boldsymbol{\theta}}^{K-1}}\hat{J}^{\text{In}}(\boldsymbol{\phi},\hat{\boldsymbol{\theta}}^{K-1},\boldsymbol{\tau}^{K-1}_0) \bigg\|\mid\boldsymbol{\tau}^{0:K-2}_0\right]\Bigg]\\
\leq&(1+\alpha c_{2})\mathbb{E}_{\boldsymbol{\tau}^{0:K-1}_0} \left[\left\|\hat{\boldsymbol{\theta}}^{K-1}  - \boldsymbol{\theta}^{K-1}\right\|  \right] + 
\alpha \mathbb{E}_{\boldsymbol{\tau}^{0:K-2}_0}\left[
\sqrt{\frac{\mathbb{V}\left[\nabla_{\hat{\boldsymbol{\theta}}^{K-1}}\hat{J}^{\text{In}} \left(\boldsymbol{\phi},\hat{\boldsymbol{\theta}}^{K-1},\boldsymbol{\tau}^{K-1}_0\right)\mid\boldsymbol{\tau}^{0:K-2}_0\right]}{|\boldsymbol{\tau}^{K-1}_0|}}
\right]
\end{aligned}
\end{equation}

where $(i)$ follows from Lemma~\ref{aux_lemma_2} and Assumption~\ref{assumption_2}.

Let $\hat{\sigma}_{\text{In}} = \max_{i} \sqrt{\mathbb{V}[\nabla_{\hat{\boldsymbol{\theta}}^{i}}\hat{J}^{\text{In}}(\boldsymbol{\phi}, \hat{\boldsymbol{\theta}}^{i}, \tau_0^{i})]}$, $|\boldsymbol{\tau}| = |\tau_{0}^{i}|$, $i\in \{0,\ldots,K-1\}$

\begin{equation}
\mathbb{E}_{\boldsymbol{\tau}^{0:K-1}_0} \left[\left\|\hat{\boldsymbol{\theta}}^{K}  - \boldsymbol{\theta}^{K}\right\| \right] \\
\leq(1+\alpha c_{2})\mathbb{E}_{\boldsymbol{\tau}^{0:K-1}_0} \left[\left\|\hat{\boldsymbol{\theta}}^{K-1}  - \boldsymbol{\theta}^{K-1}\right\|  \right] + 
\alpha 
\frac{\hat{\sigma}_{\text{In}}}{\sqrt{|\boldsymbol{\tau}|}}
\end{equation}

Iteratively, we can get

\begin{equation}
\begin{aligned}
\mathbb{E}_{\boldsymbol{\tau}^{0:K-1}_0} \left[\left\|\hat{\boldsymbol{\theta}}^{K}  - \boldsymbol{\theta}^{K}\right\| \right] 
&\leq\left(1+\ldots+(1+\alpha c_{2})^{K-1} \right) 
\alpha \frac{\hat{\sigma}_{\text{In}}}{\sqrt{|\boldsymbol{\tau}|}}\\
&= \left((1+\alpha c_2)^{K} -1\right) \frac{\hat{\sigma}_{\text{In}}}{c_2\sqrt{|\boldsymbol{\tau}|}}
\end{aligned}
\end{equation}
which concludes the proof of Lemma \ref{multi_step_Adaption_Error}.
\end{proof}

\section{Proof of Theorem in Section \ref{analysis}}\label{theorems}

\subsection{Proof of Theorem \ref{theorem_1}}\label{theorem_1_pf}
\biasvariance*

\begin{proof}
According to Proposition~\ref{exact_meta_gradient}, exact meta-gradient $\nabla_{\boldsymbol{\phi}} J^{\text{K}}(\boldsymbol{\phi})$ takes the form
\begin{equation}
\nabla_{\boldsymbol{\phi}} J^{\text {Out }}(\boldsymbol{\phi}, \boldsymbol{\theta}^{K})+\alpha \sum_{i=0}^{K-1}  \nabla_{\boldsymbol{\phi}} \nabla_{\boldsymbol{\theta}^{i}} J^{\text{In}} (\boldsymbol{\phi}, \boldsymbol{\theta}^{i}) \prod_{j=i+1}^{K-1}\left(I+\alpha \nabla^{2}_{\boldsymbol{\theta}^{j}}J^{\text{In}}(\boldsymbol{\phi}, \boldsymbol{\theta}^{j})\right) \nabla_{\boldsymbol{\theta}^{K}} J^{\text {Out }}(\boldsymbol{\phi}, \boldsymbol{\theta}^{K})
\end{equation}

where
\begin{equation}
    \boldsymbol{\theta}^{i+1} = \boldsymbol{\theta}^{i} + \alpha \nabla_{\boldsymbol{\theta}^{i}} J^{\text {In}}(\boldsymbol{\phi}, \boldsymbol{\theta}^{i}) ,\text{ }i=0, \ldots, K-1
\end{equation}

Acoordingly, in Eq.~\eqref{GMRL_estimate_gradient}, $K$-step meta-gradient estimator $\nabla_{\boldsymbol{\phi}} \hat{J}^{\text{K}}(\boldsymbol{\phi})$ takes the form

\begin{equation}
    \nabla_{\boldsymbol{\phi}} \hat{J}^{\text {Out }}(\boldsymbol{\phi}, \hat{\boldsymbol{\theta}}^{K},\boldsymbol{\tau}_3)
    +\alpha  \sum_{i=0}^{K-1}
    \nabla_{\boldsymbol{\phi}} \nabla_{\hat{\boldsymbol{\theta}}^{i}} \hat{J}^{\text{In}}(\boldsymbol{\phi},\hat{\boldsymbol{\theta}}^{i},\boldsymbol{\tau}^{i}_1) 
    \prod_{j=i+1}^{K-1}
    \left(I+\alpha  \nabla^{2}_{\hat{\boldsymbol{\theta}}^{j}}\hat{J}^{\text{In}} (\boldsymbol{\phi}, \hat{\boldsymbol{\theta}}^{j},\boldsymbol{\tau}^{j}_{2})\right) 
    \nabla_{\hat{\boldsymbol{\theta}}^{K}} \hat{J}^{\text {Out }}(\boldsymbol{\phi}, \hat{\boldsymbol{\theta}}^{K},\tau_{3})
\end{equation}

where
\begin{equation}
    \hat{\boldsymbol{\theta}}^{i+1} = \hat{\boldsymbol{\theta}}^{i} + \alpha \nabla_{\hat{\boldsymbol{\theta}}^{i}} \hat{J}^{\text {In}}(\boldsymbol{\phi}, \hat{\boldsymbol{\theta}}^{i},\boldsymbol{\tau}^{i}_{0}) ,\text{ } \hat{\boldsymbol{\theta}}^{0} =\boldsymbol{\theta}^{0},\text{ } i=0, \ldots, K-1
\end{equation}

Hence the expectation of meta-gradient estimator takes the form

\begin{equation}
\begin{aligned}
    &\mathbb{E}_{\boldsymbol{\tau}^{0:K-1}_0, \boldsymbol{\tau}^{0:K-1}_1, \boldsymbol{\tau}^{1:K-1}_2, \boldsymbol{\tau}_3}[\nabla_{\boldsymbol{\phi}} \hat{J}^{\text{K}}(\boldsymbol{\phi})] \\
    =& \mathbb{E}_{\boldsymbol{\tau}^{0:K-1}_0} \Bigg[ \mathbb{E}_{\boldsymbol{\tau}_3 } [\nabla_{\boldsymbol{\phi}} \hat{J}^{\text {Out }}(\boldsymbol{\phi}, \hat{\boldsymbol{\theta}}^{K}, \boldsymbol{\tau}_3)\mid \boldsymbol{\tau}^{0:K-1}_0 ]
    +\alpha  \sum_{i=0}^{K-1} \mathbb{E}_{\boldsymbol{\tau}^{i}_1} [\nabla_{\boldsymbol{\phi}} \nabla_{\hat{\boldsymbol{\theta}}^{i}} \hat{J}^{\text{In}}(\boldsymbol{\phi}, \hat{\boldsymbol{\theta}}^{i},\boldsymbol{\tau}^{i}_1) \mid \boldsymbol{\tau}^{0:i-1}_0] \times\\
    &\prod_{j=i+1}^{K-1} \mathbb{E}_{\boldsymbol{\tau}^{j}_2} [I+\alpha \nabla^{2}_{\hat{\boldsymbol{\theta}}^{j}}\hat{J}^{\text{In}}(\boldsymbol{\phi}, \hat{\boldsymbol{\theta}}^{j} ,\boldsymbol{\tau}^{j}_{2}) \mid \boldsymbol{\tau}^{0:j-1}_0]  \times \mathbb{E}_{\boldsymbol{\tau}_3 } [ \nabla_{\hat{\boldsymbol{\theta}}^{K}} \hat{J}^{\text {Out }}(\boldsymbol{\phi}, \hat{\boldsymbol{\theta}}^{K},\tau_{3}) \mid \boldsymbol{\tau}^{0:K-1}_0 ] \Bigg]
\end{aligned}
\end{equation}

we can then derive meta-gradient bias in $K$-step expected policy gradient setting,

\begin{equation}
\begin{aligned}
&\left\|\mathbb{E}_{\boldsymbol{\tau}^{0:K-1}_0, \boldsymbol{\tau}^{0:K-1}_1, \boldsymbol{\tau}^{0:K-1}_2, \boldsymbol{\tau}_3}[\nabla_{\boldsymbol{\phi}} \hat{J}^{\text{K}}(\boldsymbol{\phi})] - \nabla_{\boldsymbol{\phi}} J^{\text{K}}(\boldsymbol{\phi})\right\|\\
\leq &\mathbb{E}_{\boldsymbol{\tau}^{0:K-1}_0} \Bigg[ \bigg\| \mathbb{E}_{\boldsymbol{\tau}_3 } [\nabla_{\boldsymbol{\phi}} \hat{J}^{\text {Out }}(\boldsymbol{\phi}, \hat{\boldsymbol{\theta}}^{K}, \boldsymbol{\tau}_3)\mid \boldsymbol{\tau}^{0:K-1}_0 ] +\alpha  \sum_{i=0}^{K-1} \mathbb{E}_{\boldsymbol{\tau}^{i}_1} [\nabla_{\boldsymbol{\phi}} \nabla_{\hat{\boldsymbol{\theta}}^{i}} \hat{J}^{\text{In}}(\boldsymbol{\phi}, \hat{\boldsymbol{\theta}}^{i},\boldsymbol{\tau}^{i}_1) \mid \boldsymbol{\tau}^{0:i-1}_0] \times\\
&\prod_{j=i+1}^{K-1} \mathbb{E}_{\boldsymbol{\tau}^{j}_2} [I+\alpha \nabla^{2}_{\hat{\boldsymbol{\theta}}^{j}}\hat{J}^{\text{In}}(\boldsymbol{\phi}, \hat{\boldsymbol{\theta}}^{j} ,\boldsymbol{\tau}^{j}_{2}) \mid \boldsymbol{\tau}^{0:i-1}_0]  \times\mathbb{E}_{\boldsymbol{\tau}_3 } [ \nabla_{\hat{\boldsymbol{\theta}}^{K}} \hat{J}^{\text {Out }}(\boldsymbol{\phi}, \hat{\boldsymbol{\theta}}^{K},\tau_{3}) \mid \boldsymbol{\tau}^{0:K-1}_0 ] - \\
&\nabla_{\boldsymbol{\phi}} J^{\text {Out }}(\boldsymbol{\phi}, \boldsymbol{\theta}^{K})-\alpha \sum_{i=0}^{K-1} \nabla_{\boldsymbol{\phi}} \nabla_{\boldsymbol{\theta}^{i}} J^{\text{In}} (\boldsymbol{\phi}, \boldsymbol{\theta}^{i}) \prod_{j=i+1}^{K-1}\left(I+\alpha \nabla^{2}_{\boldsymbol{\theta}^{j}}J^{\text{In}}(\boldsymbol{\phi}, \boldsymbol{\theta}^{j})\right) \nabla_{\boldsymbol{\theta}^{K}} J^{\text {Out }}(\boldsymbol{\phi}, \boldsymbol{\theta}^{K})\bigg\|\Bigg] \\
\end{aligned}
\end{equation}

\vspace{-10pt}
\begin{equation}
\begin{aligned}
\leq&\mathbb{E}_{\boldsymbol{\tau}^{0:K-1}_0} \Bigg[ \bigg\| \mathbb{E}_{\boldsymbol{\tau}_3 } [\nabla_{\boldsymbol{\phi}} \hat{J}^{\text {Out }}(\boldsymbol{\phi}, \hat{\boldsymbol{\theta}}^{K}, \boldsymbol{\tau}_3)\mid \boldsymbol{\tau}^{0:K-1}_0 ] - \nabla_{\boldsymbol{\phi}} J^{\text {Out }}(\boldsymbol{\phi}, \boldsymbol{\theta}^{K}) \bigg\|\Bigg] \\
&+\mathbb{E}_{\boldsymbol{\tau}^{0:K-1}_0} \Bigg[ \bigg\| \alpha  \sum_{i=0}^{K-1} \mathbb{E}_{\boldsymbol{\tau}^{i}_1} [\nabla_{\boldsymbol{\phi}} \nabla_{\hat{\boldsymbol{\theta}}^{i}} \hat{J}^{\text{In}}(\boldsymbol{\phi}, \hat{\boldsymbol{\theta}}^{i},\boldsymbol{\tau}^{i}_1) \mid \boldsymbol{\tau}^{0:i-1}_0] \times\prod_{j=i+1}^{K-1} \mathbb{E}_{\boldsymbol{\tau}^{j}_2} [I+\alpha \nabla^{2}_{\hat{\boldsymbol{\theta}}^{j}}\hat{J}^{\text{In}}(\boldsymbol{\phi}, \hat{\boldsymbol{\theta}}^{j} ,\boldsymbol{\tau}^{j}_{2}) \mid \boldsymbol{\tau}^{0:i-1}_0]  \\
&\times \mathbb{E}_{\boldsymbol{\tau}_3 } [ \nabla_{\hat{\boldsymbol{\theta}}^{K}} \hat{J}^{\text {Out }}(\boldsymbol{\phi}, \hat{\boldsymbol{\theta}}^{K},\tau_{3}) \mid \boldsymbol{\tau}^{0:K-1}_0 ]  \\
&-\alpha \sum_{i=0}^{K-1} \nabla_{\boldsymbol{\phi}} \nabla_{\boldsymbol{\theta}^{i}} J^{\text{In}} (\boldsymbol{\phi}, \boldsymbol{\theta}^{i}) \prod_{j=i+1}^{K-1}\left(I+\alpha \nabla^{2}_{\boldsymbol{\theta}^{j}}J^{\text{In}}\left(\boldsymbol{\phi}, \boldsymbol{\theta}^{j}\right)\right) \nabla_{\boldsymbol{\theta}^{K}} J^{\text {Out}}(\boldsymbol{\phi}, \boldsymbol{\theta}^{K})\bigg\|\Bigg] \\
\overset{(i)}\leq&\mathbb{E}_{\boldsymbol{\tau}^{0:K-1}_0} \Bigg[\bigg\|   \nabla_{\boldsymbol{\phi}} J^{\text {Out}}(\boldsymbol{\phi}, \hat{\boldsymbol{\theta}}^{K})  - \nabla_{\boldsymbol{\phi}} J^{\text {Out}}(\boldsymbol{\phi}, \boldsymbol{\theta}^{K})  \bigg\|\mid \boldsymbol{\tau}^{0:K-1}_0 \Bigg] \\
&+\mathbb{E}_{\boldsymbol{\tau}^{0:K-1}_0} \Bigg[ \bigg\| \alpha  \sum_{i=0}^{K-1} \mathbb{E}_{\boldsymbol{\tau}^{i}_1} [\nabla_{\boldsymbol{\phi}} \nabla_{\hat{\boldsymbol{\theta}}^{i}} \hat{J}^{\text{In}}(\boldsymbol{\phi}, \hat{\boldsymbol{\theta}}^{i},\boldsymbol{\tau}^{i}_1) \mid \boldsymbol{\tau}^{0:i-1}_0] \times\prod_{j=i+1}^{K-1} \mathbb{E}_{\boldsymbol{\tau}^{j}_2} [I+\alpha \nabla^{2}_{\hat{\boldsymbol{\theta}}^{j}}\hat{J}^{\text{In}}(\boldsymbol{\phi}, \hat{\boldsymbol{\theta}}^{j} ,\boldsymbol{\tau}^{j}_{2}) \mid \boldsymbol{\tau}^{0:i-1}_0]  \\
&\times \mathbb{E}_{\boldsymbol{\tau}_3 } [ \nabla_{\hat{\boldsymbol{\theta}}^{K}} \hat{J}^{\text {Out }}(\boldsymbol{\phi}, \hat{\boldsymbol{\theta}}^{K},\tau_{3}) \mid \boldsymbol{\tau}^{0:K-1}_0 ] - \\
&\alpha \sum_{i=0}^{K-1} \nabla_{\boldsymbol{\phi}} \nabla_{\boldsymbol{\theta}^{i}} J^{\text{In}}(\boldsymbol{\phi}, \boldsymbol{\theta}^{i}) \prod_{j=i+1}^{K-1}\left(I+\alpha \nabla^{2}_{\boldsymbol{\theta}^{j}}J^{\text{In}}\left(\boldsymbol{\phi}, \boldsymbol{\theta}^{j}\right)\right) \nabla_{\boldsymbol{\theta}^{K}} J^{\text {Out }}(\boldsymbol{\phi}, \boldsymbol{\theta}^{K})\bigg\|\Bigg] \\
\end{aligned}
\end{equation}
where $(i)$ follows from Assumption~\ref{assumption_2}.
\begin{equation}
\begin{aligned}
\overset{(ii)}\leq& \mu_1\mathbb{E}_{\boldsymbol{\tau}^{0:K-1}_0} \Bigg[\bigg\|   \hat{\boldsymbol{\theta}}^{K}  - \boldsymbol{\theta}^{K}  \bigg\|\mid \boldsymbol{\tau}^{0:K-1}_0 \Bigg] \\
&+\mathbb{E}_{\boldsymbol{\tau}^{0:K-1}_0} \Bigg[ \bigg\| \alpha  \sum_{i=0}^{K-1} \mathbb{E}_{\boldsymbol{\tau}^{i}_1} [\nabla_{\boldsymbol{\phi}} \nabla_{\hat{\boldsymbol{\theta}}^{i}} \hat{J}^{\text{In}}(\boldsymbol{\phi}, \hat{\boldsymbol{\theta}}^{i},\boldsymbol{\tau}^{i}_1) \mid \boldsymbol{\tau}^{0:i-1}_0] \times\prod_{j=i+1}^{K-1} \mathbb{E}_{\boldsymbol{\tau}^{j}_2} [I+\alpha \nabla^{2}_{\hat{\boldsymbol{\theta}}^{j}}\hat{J}^{\text{In}}(\boldsymbol{\phi}, \hat{\boldsymbol{\theta}}^{j} ,\boldsymbol{\tau}^{j}_{2}) \mid \boldsymbol{\tau}^{0:i-1}_0]  
\\
&\times \mathbb{E}_{\boldsymbol{\tau}_3 } [ \nabla_{\hat{\boldsymbol{\theta}}^{K}} \hat{J}^{\text {Out }}(\boldsymbol{\phi}, \hat{\boldsymbol{\theta}}^{K},\tau_{3}) \mid \boldsymbol{\tau}^{0:K-1}_0 ] -\alpha \sum_{i=0}^{K-1} \nabla_{\boldsymbol{\phi}} \nabla_{\boldsymbol{\theta}^{i}} J^{\text{In}}(\boldsymbol{\phi}, \boldsymbol{\theta}^{i}) \prod_{j=i+1}^{K-1}\left(I+\alpha \nabla^{2}_{\boldsymbol{\theta}^{j}}J^{\text{In}}\left(\boldsymbol{\phi}, \boldsymbol{\theta}^{j}\right)\right) \nabla_{\boldsymbol{\theta}^{K}} J^{\text {Out }}(\boldsymbol{\phi}, \boldsymbol{\theta}^{K})\bigg\|\Bigg] \\
\end{aligned}
\end{equation}
where $(ii)$ follows from Assumption~\ref{assumption_1} on Lipschitz Continuity of $\nabla_{\boldsymbol{\phi}} J^{\text {Out}}$ 

\begin{equation}
\begin{aligned}
\overset{(iii)}\leq& \mu_1\mathbb{E}_{\boldsymbol{\tau}^{0:K-1}_0} \Bigg[\bigg\|   \hat{\boldsymbol{\theta}}^{K}  - \boldsymbol{\theta}^{K}  \bigg\|\mid \boldsymbol{\tau}^{0:K-1}_0 \Bigg] \\
&+ \alpha \sum_{i=0}^{K-1} \mathbb{E}_{\boldsymbol{\tau}^{0:K-1}_0} \Bigg[ \bigg\|    \mathbb{E}_{\boldsymbol{\tau}^{i}_1} [\nabla_{\boldsymbol{\phi}} \nabla_{\hat{\boldsymbol{\theta}}^{i}} \hat{J}^{\text{In}}(\boldsymbol{\phi}, \hat{\boldsymbol{\theta}}^{i},\boldsymbol{\tau}^{i}_1) \mid \boldsymbol{\tau}^{0:i-1}_0] \times\\
&\prod_{j=i+1}^{K-1} \mathbb{E}_{\boldsymbol{\tau}^{j}_2} [I+\alpha \nabla^{2}_{\hat{\boldsymbol{\theta}}^{j}}\hat{J}^{\text{In}}(\boldsymbol{\phi}, \hat{\boldsymbol{\theta}}^{j} ,\boldsymbol{\tau}^{j}_{2}) \mid \boldsymbol{\tau}^{0:i-1}_0]  \times \mathbb{E}_{\boldsymbol{\tau}_3 } [ \nabla_{\hat{\boldsymbol{\theta}}^{K}} \hat{J}^{\text {Out }}(\boldsymbol{\phi}, \hat{\boldsymbol{\theta}}^{K},\tau_{3}) \mid \boldsymbol{\tau}^{0:K-1}_0 ] - \\
& \nabla_{\boldsymbol{\phi}} \nabla_{\boldsymbol{\theta}^{i}} J^{\text{In}}(\boldsymbol{\phi}, \boldsymbol{\theta}^{i}) \prod_{j=i+1}^{K-1}\left(I+\alpha \nabla^{2}_{\boldsymbol{\theta}^{j}}J^{\text{In}}\left(\boldsymbol{\phi}, \boldsymbol{\theta}^{j}\right)\right) \nabla_{\boldsymbol{\theta}^{K}} J^{\text {Out }}(\boldsymbol{\phi}, \boldsymbol{\theta}^{K})\bigg\|\Bigg] \\
\end{aligned}
\end{equation}
where $(iii)$ follows from Lemma~\ref{aux_lemma_2}. Using the similar add-minus trick in the proof of Lemma \ref{multi_step_Adaption_Error}, we can have
\begin{equation}
\begin{aligned}
\overset{(iv)}\leq& \mu_1\mathbb{E}_{\boldsymbol{\tau}^{0:K-1}_0} \Bigg[\bigg\|   \hat{\boldsymbol{\theta}}^{K}  - \boldsymbol{\theta}^{K}  \bigg\|\mid \boldsymbol{\tau}^{0:K-1}_0 \Bigg] \\
&+ \alpha \sum_{i=0}^{K-1} \mathbb{E}_{\boldsymbol{\tau}^{0:K-1}_0} \Bigg[ \bigg\| \prod_{j=i+1}^{K-1} \mathbb{E}_{\boldsymbol{\tau}^{j}_2} [I+\alpha \nabla^{2}_{\hat{\boldsymbol{\theta}}^{j}}\hat{J}^{\text{In}}(\boldsymbol{\phi}, \hat{\boldsymbol{\theta}}^{j} ,\boldsymbol{\tau}^{j}_{2}) \mid \boldsymbol{\tau}^{0:i-1}_0]  \times \mathbb{E}_{\boldsymbol{\tau}_3 } [ \nabla_{\hat{\boldsymbol{\theta}}^{K}} \hat{J}^{\text {Out }}(\boldsymbol{\phi}, \hat{\boldsymbol{\theta}}^{K},\tau_{3}) \mid \boldsymbol{\tau}^{0:K-1}_0 ] - \\
&\prod_{j=i+1}^{K-1}\left(I+\alpha \nabla^{2}_{\boldsymbol{\theta}^{j}}J^{\text{In}}\left(\boldsymbol{\phi}, \boldsymbol{\theta}^{j}\right)\right) \nabla_{\boldsymbol{\theta}^{K}} J^{\text {Out }}(\boldsymbol{\phi}, \boldsymbol{\theta}^{K})\bigg\|\Bigg] \times \|\nabla_{\boldsymbol{\phi}} \nabla_{\boldsymbol{\theta}^{i}} J^{\text{In}}(\boldsymbol{\phi}, \boldsymbol{\theta}^{i})\| + \\
&\mathbb{E}_{\boldsymbol{\tau}^{0:K-1}_0} \Bigg[ \bigg\|\mathbb{E}_{\boldsymbol{\tau}^{i}_1} [\nabla_{\boldsymbol{\phi}} \nabla_{\hat{\boldsymbol{\theta}}^{i}} \hat{J}^{\text{In}}(\boldsymbol{\phi}, \hat{\boldsymbol{\theta}}^{i},\boldsymbol{\tau}^{i}_1) \mid \boldsymbol{\tau}^{0:i-1}_0] - \nabla_{\boldsymbol{\phi}} \nabla_{\boldsymbol{\theta}^{i}} J^{\text{In}}(\boldsymbol{\phi}, \boldsymbol{\theta}^{i})\bigg\|\Bigg] \times \\
&\Bigg\| \mathbb{E}_{\boldsymbol{\tau}^{0:K-1}_0}\bigg[ \prod_{j=i+1}^{K-1} \mathbb{E}_{\boldsymbol{\tau}^{j}_2} [I-\nabla^{2}_{\hat{\boldsymbol{\theta}}^{j}}\hat{J}^{\text{In}}(\boldsymbol{\phi}, \hat{\boldsymbol{\theta}}^{j} ,\boldsymbol{\tau}^{j}_{2}) \mid \boldsymbol{\tau}^{0:i-1}_0]  \times \mathbb{E}_{\boldsymbol{\tau}_3 } [ \nabla_{\hat{\boldsymbol{\theta}}^{K}} \hat{J}^{\text {Out }}(\boldsymbol{\phi}, \hat{\boldsymbol{\theta}}^{K},\tau_{3}) \mid \boldsymbol{\tau}^{0:K-1}_0 ]\bigg]\Bigg\|\\
\end{aligned}
\end{equation}
Based on Assumption~\ref{assumption_1} and Assumption~\ref{assumption_2}, we can change the expectation of unbiased first-order stochastic estimator to respective first-order gradient function, then we can replace it with Lipschitz constants.
\begin{equation}
\begin{aligned}
\leq& \mu_1\mathbb{E}_{\boldsymbol{\tau}^{0:K-1}_0} \Bigg[\bigg\|   \hat{\boldsymbol{\theta}}^{K}  - \boldsymbol{\theta}^{K}  \bigg\|\mid \boldsymbol{\tau}^{0:K-1}_0 \Bigg] \\
&+ \alpha \sum_{i=0}^{K-1}\Bigg[ \mathbb{E}_{\boldsymbol{\tau}^{0:K-1}_0} \bigg[ \bigg\| \prod_{j=i+1}^{K-1} \mathbb{E}_{\boldsymbol{\tau}^{j}_2} [I+\alpha \nabla^{2}_{\hat{\boldsymbol{\theta}}^{j}}\hat{J}^{\text{In}}(\boldsymbol{\phi}, \hat{\boldsymbol{\theta}}^{j} ,\boldsymbol{\tau}^{j}_{2}) \mid \boldsymbol{\tau}^{0:i-1}_0] - \prod_{j=i+1}^{K-1}\left(I+\alpha \nabla^{2}_{\boldsymbol{\theta}^{j}}J^{\text{In}}\left(\boldsymbol{\phi}, \boldsymbol{\theta}^{j}\right)\right)\bigg\|\bigg] \times\\
&\bigg\| \nabla_{\boldsymbol{\theta}^{K}} J^{\text{Out}}(\boldsymbol{\phi}, \boldsymbol{\theta}^{K})\bigg\| + \mathbb{E}_{\boldsymbol{\tau}^{0:K-1}_0} \Bigg[ \bigg\|\mathbb{E}_{\boldsymbol{\tau}_3 } [ \nabla_{\hat{\boldsymbol{\theta}}^{K}} \hat{J}^{\text {Out }}(\boldsymbol{\phi}, \hat{\boldsymbol{\theta}}^{K},\tau_{3}) \mid \boldsymbol{\tau}^{0:K-1}_0 ] - \nabla_{\boldsymbol{\theta}^{K}} J^{\text{Out}}(\boldsymbol{\phi}, \boldsymbol{\theta}^{K})\bigg\|\Bigg] \times\\
&\bigg\|\mathbb{E}_{\boldsymbol{\tau}^{0:K-1}_0} \bigg[  \prod_{j=i+1}^{K-1} \mathbb{E}_{\boldsymbol{\tau}^{j}_2} [I+\alpha \nabla^{2}_{\hat{\boldsymbol{\theta}}^{j}}\hat{J}^{\text{In}}(\boldsymbol{\phi}, \hat{\boldsymbol{\theta}}^{j} ,\boldsymbol{\tau}^{j}_{2}) \mid \boldsymbol{\tau}^{0:i-1}_0]\bigg] \bigg\|\Bigg] \times \|\nabla_{\boldsymbol{\phi}} \nabla_{\boldsymbol{\theta}^{i}} J^{\text{In}}(\boldsymbol{\phi}, \boldsymbol{\theta}^{i})\| + \\
&\mathbb{E}_{\boldsymbol{\tau}^{0:K-1}_0} \Bigg[ \bigg\|\mathbb{E}_{\boldsymbol{\tau}^{i}_1} [\nabla_{\boldsymbol{\phi}} \nabla_{\hat{\boldsymbol{\theta}}^{i}} \hat{J}^{\text{In}}(\boldsymbol{\phi}, \hat{\boldsymbol{\theta}}^{i},\boldsymbol{\tau}^{i}_1) \mid \boldsymbol{\tau}^{0:i-1}_0] - \nabla_{\boldsymbol{\phi}} \nabla_{\boldsymbol{\theta}^{i}} J^{\text{In}}(\boldsymbol{\phi}, \boldsymbol{\theta}^{i})\bigg\|\Bigg] \times \\
& \mathbb{E}_{\boldsymbol{\tau}^{0:K-1}_0}\bigg[\Bigg\| \prod_{j=i+1}^{K-1} \mathbb{E}_{\boldsymbol{\tau}^{j}_2} [I+\alpha \nabla^{2}_{\hat{\boldsymbol{\theta}}^{j}}\hat{J}^{\text{In}}(\boldsymbol{\phi}, \hat{\boldsymbol{\theta}}^{j} ,\boldsymbol{\tau}^{j}_{2}) \mid \boldsymbol{\tau}^{0:i-1}_0] \Bigg\| \times \Bigg\|\mathbb{E}_{\boldsymbol{\tau}_3 } [ \nabla_{\hat{\boldsymbol{\theta}}^{K}} \hat{J}^{\text {Out }}(\boldsymbol{\phi}, \hat{\boldsymbol{\theta}}^{K},\tau_{3}) \mid \boldsymbol{\tau}^{0:K-1}_0 ]\Bigg\|\bigg]\\ 
\end{aligned}
\end{equation}

\begin{equation}
\begin{aligned}
\leq& \mu_1\mathbb{E}_{\boldsymbol{\tau}^{0:K-1}_0} \Bigg[\bigg\|   \hat{\boldsymbol{\theta}}^{K}  - \boldsymbol{\theta}^{K}  \bigg\| \Bigg] \\
&+ \alpha \sum_{i=0}^{K-1} c_1 \Bigg[ m_2 \mathbb{E}_{\boldsymbol{\tau}^{0:K-1}_0} \bigg[ \bigg\| \prod_{j=i+1}^{K-1} \mathbb{E}_{\boldsymbol{\tau}^{j}_2} [I+\alpha \nabla^{2}_{\hat{\boldsymbol{\theta}}^{j}}\hat{J}^{\text{In}}(\boldsymbol{\phi}, \hat{\boldsymbol{\theta}}^{j} ,\boldsymbol{\tau}^{j}_{2}) \mid \boldsymbol{\tau}^{0:i-1}_0] - \prod_{j=i+1}^{K-1}\left(I+\alpha \nabla^{2}_{\boldsymbol{\theta}^{j}}J^{\text{In}}\left(\boldsymbol{\phi}, \boldsymbol{\theta}^{j}\right)\right)\bigg\|\bigg] \\
&+ \mu_2\mathbb{E}_{\boldsymbol{\tau}^{0:K-1}_0} \Bigg[\bigg\|   \hat{\boldsymbol{\theta}}^{K}  - \boldsymbol{\theta}^{K}  \bigg\| \Bigg] \times\bigg\|\mathbb{E}_{\boldsymbol{\tau}^{0:K-1}_0} \bigg[  \prod_{j=i+1}^{K-1} \mathbb{E}_{\boldsymbol{\tau}^{j}_2} [I+\alpha \nabla^{2}_{\hat{\boldsymbol{\theta}}^{j}}\hat{J}^{\text{In}}(\boldsymbol{\phi}, \hat{\boldsymbol{\theta}}^{j} ,\boldsymbol{\tau}^{j}_{2}) \mid \boldsymbol{\tau}^{0:i-1}_0]\bigg] \bigg\|\Bigg]  + \\
&\mathbb{E}_{\boldsymbol{\tau}^{0:K-1}_0} \Bigg[ \bigg\|\mathbb{E}_{\boldsymbol{\tau}^{i}_1} [\nabla_{\boldsymbol{\phi}} \nabla_{\hat{\boldsymbol{\theta}}^{i}} \hat{J}^{\text{In}}(\boldsymbol{\phi}, \hat{\boldsymbol{\theta}}^{i},\boldsymbol{\tau}^{i}_1) \mid \boldsymbol{\tau}^{0:i-1}_0] - \nabla_{\boldsymbol{\phi}} \nabla_{\boldsymbol{\theta}^{i}} J^{\text{In}}(\boldsymbol{\phi}, \boldsymbol{\theta}^{i})\bigg\|\Bigg] \times \\
&m_2\Bigg\| \mathbb{E}_{\boldsymbol{\tau}^{0:K-1}_0}\bigg[ \prod_{j=i+1}^{K-1} \mathbb{E}_{\boldsymbol{\tau}^{j}_2} [I+\alpha \nabla^{2}_{\hat{\boldsymbol{\theta}}^{j}}\hat{J}^{\text{In}}(\boldsymbol{\phi}, \hat{\boldsymbol{\theta}}^{j} ,\boldsymbol{\tau}^{j}_{2}) \mid \boldsymbol{\tau}^{0:i-1}_0] \bigg] \Bigg\|\\ 
\end{aligned}
\end{equation}

\begin{equation}\label{K-step-bias}
\begin{aligned}
\leq& \mu_1\mathbb{E}_{\boldsymbol{\tau}^{0:K-1}_0} \Bigg[\bigg\|   \hat{\boldsymbol{\theta}}^{K}  - \boldsymbol{\theta}^{K}  \bigg\| \Bigg] \\
&+ \alpha \sum_{i=0}^{K-1} c_1  m_2 \underbrace{\mathbb{E}_{\boldsymbol{\tau}^{0:K-1}_0} \bigg[ \bigg\| \prod_{j=i+1}^{K-1} \mathbb{E}_{\boldsymbol{\tau}^{j}_2} [I+\alpha \nabla^{2}_{\hat{\boldsymbol{\theta}}^{j}}\hat{J}^{\text{In}}(\boldsymbol{\phi}, \hat{\boldsymbol{\theta}}^{j} ,\boldsymbol{\tau}^{j}_{2}) \mid \boldsymbol{\tau}^{0:i-1}_0] - \prod_{j=i+1}^{K-1}\left(I+\alpha \nabla^{2}_{\boldsymbol{\theta}^{j}}J^{\text{In}}\left(\boldsymbol{\phi}, \boldsymbol{\theta}^{j}\right)\right)\bigg\|\bigg]}_{\textbf{Term (i)}} \\
&+ \Bigg[c_1 \mu_2\mathbb{E}_{\boldsymbol{\tau}^{0:K-1}_0} \Bigg[\bigg\|   \hat{\boldsymbol{\theta}}^{K}  - \boldsymbol{\theta}^{K}  \bigg\| \Bigg]  + m_2\underbrace{\mathbb{E}_{\boldsymbol{\tau}^{0:K-1}_0} \Bigg[ \bigg\|\mathbb{E}_{\boldsymbol{\tau}^{i}_1} [\nabla_{\boldsymbol{\phi}} \nabla_{\hat{\boldsymbol{\theta}}^{i}} \hat{J}^{\text{In}}(\boldsymbol{\phi}, \hat{\boldsymbol{\theta}}^{i},\boldsymbol{\tau}^{i}_1) \mid \boldsymbol{\tau}^{0:i-1}_0] - \nabla_{\boldsymbol{\phi}} \nabla_{\boldsymbol{\theta}^{i}} J^{\text{In}}(\boldsymbol{\phi}, \boldsymbol{\theta}^{i})\bigg\|\Bigg]}_{\textbf{Term (ii)}}
\Bigg]\times \\
&\underbrace{\Bigg\| \mathbb{E}_{\boldsymbol{\tau}^{0:K-1}_0}\bigg[ \prod_{j=i+1}^{K-1} \mathbb{E}_{\boldsymbol{\tau}^{j}_2} [I+\alpha \nabla^{2}_{\hat{\boldsymbol{\theta}}^{j}}\hat{J}^{\text{In}}(\boldsymbol{\phi}, \hat{\boldsymbol{\theta}}^{j} ,\boldsymbol{\tau}^{j}_{2}) \mid \boldsymbol{\tau}^{0:i-1}_0] \bigg] \Bigg\|}_{\textbf{Term (iii)}} \\ 
\end{aligned}
\end{equation}
Let $J_{\boldsymbol{\phi}, \boldsymbol{\theta}}$ denote $\nabla_{\boldsymbol{\phi}} \nabla_{\boldsymbol{\theta}} J^{\text {In}}$, $H_{\boldsymbol{\theta}, \boldsymbol{\theta}}$ denote $ \nabla^{2}_{\boldsymbol{\theta}} J^{\text {In}}$ ,$\hat{\Delta}_{J}=\max \|\mathbb{E}[\hat{J}_{\boldsymbol{\phi},\boldsymbol{\theta}}]-J_{\boldsymbol{\phi}, \boldsymbol{\theta}}\|$. $\hat{\Delta}_{H}=\max\|\mathbb{E}[\hat{H}_{\boldsymbol{\theta}, \boldsymbol{\theta}}]  -  H_{\boldsymbol{\theta}, \boldsymbol{\theta}}\|$. $(\hat{\sigma}_{J})^{2}= \frac{\max\mathbb{V} [\hat{J}_{\boldsymbol{\phi}, \boldsymbol{\theta}}]}{|\tau|}$. $(\hat{\sigma}_{H})^{2}= \frac{\max\mathbb{V} [\hat{H}_{\boldsymbol{\theta}, \boldsymbol{\theta}}]}{|\tau|}$. We upper bound terms (i)-(ii) in Eq. \eqref{K-step-bias} respectively, that is, 

\vspace{4pt}
\noindent
\textbf{Term (i).} According to 

\begin{equation}
\begin{aligned}
&\mathbb{E}_{\boldsymbol{\tau}^{0:K-1}_0} \bigg[ \bigg\| \prod_{j=i+1}^{K-1} \mathbb{E}_{\boldsymbol{\tau}^{j}_2} [I+\alpha \nabla^{2}_{\hat{\boldsymbol{\theta}}^{j}}\hat{J}^{\text{In}}(\boldsymbol{\phi}, \hat{\boldsymbol{\theta}}^{j} ,\boldsymbol{\tau}^{j}_{2}) \mid \boldsymbol{\tau}^{0:i-1}_0] - \prod_{j=i+1}^{K-1}\left(I+\alpha \nabla^{2}_{\boldsymbol{\theta}^{j}}J^{\text{In}}\left(\boldsymbol{\phi}, \boldsymbol{\theta}^{j}\right)\right)\bigg\|\bigg]\\
\leq & \mathbb{E}_{\boldsymbol{\tau}^{0:K-1}_0} \bigg[ \prod_{j=i+1}^{K-2} (I+\alpha  c_2 +\alpha \hat{\Delta}_{H} )\bigg] \bigg(\alpha\hat{\Delta}_{H}
+ \alpha \rho_2 \left((1+\alpha c_2)^{K-1} -1\right) \frac{\hat{\sigma}_{\text{In}}}{c_2\sqrt{|\boldsymbol{\tau}|}}
\bigg)\\
&+ (1+ \alpha c_2) \times\\
&\mathbb{E}_{\boldsymbol{\tau}^{0:K-1}_0} \bigg[ \bigg\| \prod_{j=i+1}^{K-2} \mathbb{E}_{\boldsymbol{\tau}^{j}_2} [I+\alpha \nabla_{\hat{\boldsymbol{\theta}}^{j}}^{2} \hat{J}^{In}(\boldsymbol{\phi}, \hat{\boldsymbol{\theta}}^{j} ,\boldsymbol{\tau}^{j}_{2}) \mid \boldsymbol{\tau}^{0:i-1}_0] - \prod_{j=i+1}^{K-2}\left(I+\alpha \nabla_{\boldsymbol{\theta}^{j}}^{2} J^{In}\left(\boldsymbol{\phi}, \boldsymbol{\theta}^{j}\right)\right)\bigg\|\bigg]\\
\leq& \alpha \bigg[ (1+ \alpha c_2 + \alpha \hat{\Delta}_{H})^{K-i-1} - (1+ \alpha c_2 )^{K-i-1} \bigg] \\
& + \frac{\rho_2}{c_2} \left((1+\alpha c_2+\alpha\hat{\Delta}_{H})^{K-i-1} -1\right)(1+\alpha c_2)^{K-1} \frac{\hat{\sigma}_{\text{In}}}{c_2\sqrt{|\boldsymbol{\tau}|}}
\end{aligned}
\end{equation}

\vspace{4pt}
\noindent
\textbf{Term (ii).}

\begin{equation}
\begin{aligned}
&\mathbb{E}_{\boldsymbol{\tau}^{0:K-1}_0} \Bigg[ \bigg\|\mathbb{E}_{\boldsymbol{\tau}^{i}_1} [\nabla_{\boldsymbol{\phi}} \nabla_{\hat{\boldsymbol{\theta}}^{i}} \hat{J}^{\text{In}}(\boldsymbol{\phi}, \hat{\boldsymbol{\theta}}^{i},\boldsymbol{\tau}^{i}_1) \mid \boldsymbol{\tau}^{0:i-1}_0] - \nabla_{\boldsymbol{\phi}} \nabla_{\boldsymbol{\theta}^{i}} J^{\text{In}}(\boldsymbol{\phi}, \boldsymbol{\theta}^{i})\bigg\|\Bigg]\\
\leq& \mathbb{E}_{\boldsymbol{\tau}^{0:K-1}_0} \Bigg[ \bigg\|\mathbb{E}_{\boldsymbol{\tau}^{i}_1} [\nabla_{\boldsymbol{\phi}} \nabla_{\hat{\boldsymbol{\theta}}^{i}} \hat{J}^{\text{In}}(\boldsymbol{\phi}, \hat{\boldsymbol{\theta}}^{i},\boldsymbol{\tau}^{i}_1)\mid \boldsymbol{\tau}^{0:i-1}_0]  -  \nabla_{\boldsymbol{\phi}} \nabla_{\hat{\boldsymbol{\theta}}^{i}} J^{\text{In}}(\boldsymbol{\phi}, \hat{\boldsymbol{\theta}}^{i})\bigg\|\Bigg] +\\
& \mathbb{E}_{\boldsymbol{\tau}^{0:K-1}_0} \Bigg[ \bigg\|
\nabla_{\boldsymbol{\phi}} \nabla_{\hat{\boldsymbol{\theta}}^{i}} J^{\text{In}}(\boldsymbol{\phi}, \hat{\boldsymbol{\theta}}^{i}) - \nabla_{\boldsymbol{\phi}} \nabla_{\boldsymbol{\theta}^{i}} J^{\text{In}}(\boldsymbol{\phi}, \boldsymbol{\theta}^{i})\mid \boldsymbol{\tau}^{0:i-1}_0]\bigg\|\Bigg]\\
\end{aligned}
\end{equation}


\begin{equation}
\begin{aligned}
&\mathbb{E}_{\boldsymbol{\tau}^{0:K-1}_0} \Bigg[ \bigg\|\mathbb{E}_{\boldsymbol{\tau}^{i}_1} [\nabla_{\boldsymbol{\phi}} \nabla_{\hat{\boldsymbol{\theta}}^{i}} \hat{J}^{\text{In}}(\boldsymbol{\phi}, \hat{\boldsymbol{\theta}}^{i},\boldsymbol{\tau}^{i}_1) \mid \boldsymbol{\tau}^{0:i-1}_0] - \nabla_{\boldsymbol{\phi}} \nabla_{\boldsymbol{\theta}^{i}} J^{\text{In}}(\boldsymbol{\phi}, \boldsymbol{\theta}^{i})\bigg\|\Bigg]\\
\leq& \mathbb{E}_{\boldsymbol{\tau}^{0:K-1}_0} \bigg[\hat{\Delta}_{J} \bigg] + \lambda_2 \mathbb{E}_{\boldsymbol{\tau}^{0:K-1}_0} \Bigg[\bigg\|   \hat{\boldsymbol{\theta}}^{i}  - \boldsymbol{\theta}^{i}  \bigg\| \Bigg]\\
\leq& \hat{\Delta}_{J}  + \lambda_2 \mathbb{E}_{\boldsymbol{\tau}^{0:K-1}_0} \Bigg[\bigg\|   \hat{\boldsymbol{\theta}}^{i}  - \boldsymbol{\theta}^{i}  \bigg\| \Bigg]\\
\end{aligned}
\end{equation}

\vspace{4pt}
\noindent
\textbf{Term (iii).}

\begin{equation}
\begin{aligned}
&\Bigg\| \mathbb{E}_{\boldsymbol{\tau}^{0:K-1}_0}\bigg[ \prod_{j=i+1}^{K-1} \mathbb{E}_{\boldsymbol{\tau}^{j}_2} [I+\alpha \nabla^{2}_{\hat{\boldsymbol{\theta}}^{j}}\hat{J}^{\text{In}}(\boldsymbol{\phi}, \hat{\boldsymbol{\theta}}^{j} ,\boldsymbol{\tau}^{j}_{2}) \mid \boldsymbol{\tau}^{0:i-1}_0] \bigg] \Bigg\|\\
\leq & \mathbb{E}_{\boldsymbol{\tau}^{0:K-1}_0} \bigg[ \prod_{j=i+1}^{K-1} \bigg(I+\alpha\bigg\|  \mathbb{E}_{\boldsymbol{\tau}^{j}_2}  [\nabla^{2}_{\hat{\boldsymbol{\theta}}^{j}}\hat{J}^{\text{In}}(\boldsymbol{\phi}, \hat{\boldsymbol{\theta}}^{j} ,\boldsymbol{\tau}^{j}_{2})\mid \boldsymbol{\tau}^{0:i-1}_0]  -  \nabla^{2}_{\hat{\boldsymbol{\theta}}^{j}}\hat{J}^{\text{In}}(\boldsymbol{\phi}, \hat{\boldsymbol{\theta}}^{j} ) \bigg\| +\alpha \bigg\|   \nabla^{2}_{\hat{\boldsymbol{\theta}}^{j}}\hat{J}^{\text{In}}(\boldsymbol{\phi}, \hat{\boldsymbol{\theta}}^{j} )  \bigg\| \bigg)\bigg] \\
\end{aligned}
\end{equation}


\begin{equation}
\begin{aligned}
&\Bigg\| \mathbb{E}_{\boldsymbol{\tau}^{0:K-1}_0}\bigg[ \prod_{j=i+1}^{K-1} \mathbb{E}_{\boldsymbol{\tau}^{j}_2} [I+\alpha \nabla^{2}_{\hat{\boldsymbol{\theta}}^{j}}\hat{J}^{\text{In}}(\boldsymbol{\phi}, \hat{\boldsymbol{\theta}}^{j} ,\boldsymbol{\tau}^{j}_{2}) \mid \boldsymbol{\tau}^{0:i-1}_0] \bigg] \Bigg\|\\
\leq& \mathbb{E}_{\boldsymbol{\tau}^{0:K-1}_0} \bigg[ \prod_{j=i+1}^{K-1} (1+ \alpha c_2 + \alpha \hat{\Delta}_{H})
\mid \boldsymbol{\tau}^{0:i-1}_0]\bigg]\\
\leq&  (1+ \alpha c_2 + \alpha \hat{\Delta}_{H})^{K-i-1}\\
\end{aligned}
\end{equation}

Then combine terms (i)-(iii)  together, that is

\begin{equation}
\begin{aligned}
&\left\|\mathbb{E}_{\boldsymbol{\tau}^{0:K-1}_0, \boldsymbol{\tau}^{0:K-1}_1, \boldsymbol{\tau}^{0:K-1}_2, \boldsymbol{\tau}_3}[\nabla_{\boldsymbol{\phi}} \hat{J}^{\text{K}}(\boldsymbol{\phi})] - \nabla_{\boldsymbol{\phi}} J^{\text{K}}(\boldsymbol{\phi})\right\|\\
\leq& \mu_1\left((1+\alpha c_2)^{K} -1\right) \frac{\hat{\sigma}_{\text{In}}}{c_2\sqrt{|\boldsymbol{\tau}|}}
 \\
&+ \alpha \sum_{i=0}^{K-1} \alpha c_1  m_2  \bigg[ (1+ \alpha c_2 + \alpha \hat{\Delta}_{H})^{K-i-1} - (1+ \alpha c_2 )^{K-i-1} \bigg] \\
& + c_1  m_2 \frac{\rho_2}{c_2} \left((1+\alpha c_2+\hat{\Delta}_{H})^{K-i-1} -1\right)(1+\alpha c_2)^{K-1} \frac{\hat{\sigma}_{\text{In}}}{c_2\sqrt{|\boldsymbol{\tau}|}} \\
&+ (1+ \alpha c_2 + \hat{\Delta}_{H})^{K-i-1} \Bigg[c_1 \mu_2\left((1+\alpha c_2)^{K} -1\right) \frac{\hat{\sigma}_{\text{In}}}{c_2\sqrt{|\boldsymbol{\tau}|}}  + m_2\hat{\Delta}_{J}  + m_2\lambda_2 \left((1+\alpha c_2)^{i} -1\right) \frac{\hat{\sigma}_{\text{In}}}{c_2\sqrt{|\boldsymbol{\tau}|}}
\Bigg]\\
\leq & 
    (\mu_1 + \alpha (c_1  m_2 \frac{\rho_2}{c_2} +c_1 \mu_2 +m_2 \lambda_2))  
    \left(((1+\alpha c_2 )^{K} -1\right)
    \left((1+\alpha c_2 + \alpha \hat{\Delta}_{H})^{K-1} -1\right)
    \frac{\hat{\sigma}_{In}}{c_2\sqrt{|\boldsymbol{\tau}|}}
    \\
    &+   
    (\alpha m_2) \left((1+\alpha c_2 + \alpha \hat{\Delta}_{H})^{K-1} -1\right) \hat{\Delta}_{J}\\
    &+(\alpha^{2} c_1 m_2)  \left((1+\alpha c_2 
    + \alpha \hat{\Delta}_{H})^{K-1} -
    (1+\alpha c_2 )^{K-1} \right)
\end{aligned}
\end{equation}

\begin{equation}
\begin{aligned}
&\left\|\mathbb{E}_{\boldsymbol{\tau}^{0:K-1}_0, \boldsymbol{\tau}^{0:K-1}_1, \boldsymbol{\tau}^{0:K-1}_2, \boldsymbol{\tau}_3}[\nabla_{\boldsymbol{\phi}} \hat{J}^{\text{K}}(\boldsymbol{\phi})] - \nabla_{\boldsymbol{\phi}} J^{\text{K}}(\boldsymbol{\phi})\right\| \\
\leq & \mathcal{O}\bigg(
(1+\alpha c_2 + \alpha \hat{\Delta}_{H})^{K-1}
\left(\mathbb{E} [\|\hat{\boldsymbol{\theta}}^{K}  - \boldsymbol{\theta}^{K}\|]+
\hat{\Delta}_{J} +(K-1)
\right) \bigg)
\end{aligned}
\end{equation}

which concludes the proof of upper bound of meta-gradient bias.

According to Lemma~\ref{lm:totvariance},

\begin{equation}\label{total_variance_MS}
\mathbb{V} \left[\nabla_{\boldsymbol{\phi}}\hat{J}^{\text{K}}(\boldsymbol{\phi})\right] =
\underbrace{\mathbb{V}\left[\mathbb{E}_{\boldsymbol{\tau}^{0:K-1}_1, \boldsymbol{\tau}^{1:K-1}_2, \boldsymbol{\tau}_3}\left[\nabla_{\boldsymbol{\phi}}\hat{J}^{\text{K}}(\boldsymbol{\phi}) \mid \boldsymbol{\tau}^{0:K-1}_0 \right]\right]}_{\textbf{Term (i)}} +
\underbrace{
\mathbb{E}_{\boldsymbol{\tau}^{0:K-1}_0}\left[\mathbb{V}\left[\nabla_{\boldsymbol{\phi}}\hat{J}^{\text{K}}(\boldsymbol{\phi}) \mid \boldsymbol{\tau}^{0:K-1}_0\right]\right] }_{\textbf{Term (ii)}} 
\end{equation}

We upper bound terms (i)-(ii) in Eq.~\eqref{total_variance_MS} respectively, that is, 

\vspace{4pt}
\noindent
\textbf{Term (i).} 

\begin{equation}
\begin{aligned}
&\mathbb{V}\left[\mathbb{E}_{\boldsymbol{\tau}^{0:K-1}_1, \boldsymbol{\tau}^{1:K-1}_2, \boldsymbol{\tau}_3}\left[\nabla_{\boldsymbol{\phi}}\hat{J}^{\text{K}}(\boldsymbol{\phi}) \mid \boldsymbol{\tau}^{0:K-1}_0 \right]\right]\\
=& \mathbb{E}_{\boldsymbol{\tau}^{0:K-1}_0}\left[\left\|
\mathbb{E}_{\boldsymbol{\tau}^{0:K-1}_1, \boldsymbol{\tau}^{1:K-1}_2, \boldsymbol{\tau}_3}\left[\nabla_{\boldsymbol{\phi}}\hat{J}^{\text{K}}(\boldsymbol{\phi})\mid \boldsymbol{\tau}^{0:K-1}_0 \right]-
\mathbb{E}_{ 
\boldsymbol{\tau}^{0:K-1}_0,\boldsymbol{\tau}^{0:K-1}_1, \boldsymbol{\tau}^{1:K-1}_2, \boldsymbol{\tau}_3}
\left[\nabla_{\boldsymbol{\phi}}\hat{J}^{\text{K}}(\boldsymbol{\phi})\right]\right\|^{2}
 \right]\\
\leq& \mathbb{E}_{\boldsymbol{\tau}^{0:K-1}_0}\left[\left\|
\mathbb{E}_{\boldsymbol{\tau}^{0:K-1}_1, \boldsymbol{\tau}^{1:K-1}_2, \boldsymbol{\tau}_3}\left[\nabla_{\boldsymbol{\phi}}\hat{J}^{\text{K}}(\boldsymbol{\phi})\mid \boldsymbol{\tau}^{0:K-1}_0 \right]-
\nabla_{\boldsymbol{\phi}} J^{\text{K}}(\boldsymbol{\phi})
\right\|^{2}
 \right]\\
\end{aligned}
\end{equation}

According to proof of upper bound of bias term, together with Lemma~\ref{aux_lemma_3} \ref{lem:varprop_ii}.

\begin{equation}
\begin{aligned}
&\left\|\mathbb{E}_{\boldsymbol{\tau}^{0:K-1}_0, \boldsymbol{\tau}^{0:K-1}_1, \boldsymbol{\tau}^{0:K-1}_2, \boldsymbol{\tau}_3}[\nabla_{\boldsymbol{\phi}} \hat{J}^{\text{K}}(\boldsymbol{\phi})] - \nabla_{\boldsymbol{\phi}} J^{\text{K}}(\boldsymbol{\phi})\right\|\\
\leq& \mu_1\left((1+\alpha c_2)^{K} -1\right) \frac{\hat{\sigma}_{\text{In}}}{c_2\sqrt{|\boldsymbol{\tau}|}}
 \\
&+ \alpha \sum_{i=0}^{K-1} \alpha c_1  m_2  \bigg[ (1+ \alpha c_2 + \hat{\Delta}_{H})^{K-i-1} - (1+ \alpha c_2 )^{K-i-1} \bigg] \\
& + c_1  m_2 \frac{\rho_2}{c_2} \left((1+\alpha c_2+\hat{\Delta}_{H})^{K-i-1} -1\right)(1+\alpha c_2)^{K-1} \frac{\hat{\sigma}_{\text{In}}}{c_2\sqrt{|\boldsymbol{\tau}|}} \\
&+ (1+ \alpha c_2 + \hat{\Delta}_{H})^{K-i-1} \Bigg[c_1 \mu_2\left((1+\alpha c_2)^{K} -1\right) \frac{\hat{\sigma}_{\text{In}}}{c_2\sqrt{|\boldsymbol{\tau}|}}  + m_2\hat{\Delta}_{J}  + m_2\lambda_2 \left((1+\alpha c_2)^{i} -1\right) \frac{\hat{\sigma}_{\text{In}}}{c_2\sqrt{|\boldsymbol{\tau}|}}
\Bigg]\\
\end{aligned}
\end{equation}

\begin{equation}
\begin{aligned}
    &\mathbb{V}\left[\mathbb{E}_{\boldsymbol{\tau}^{0:K-1}_1, \boldsymbol{\tau}^{1:K-1}_2, \boldsymbol{\tau}_3}\left[\nabla_{\boldsymbol{\phi}}\hat{J}^{\text{K}}(\boldsymbol{\phi}) \mid \boldsymbol{\tau}^{0:K-1}_0 \right]\right]\\
    \leq & 
    4^{K}\left(2\mu^{2}_1 + 2K \alpha^{2} (c_1  m_2 \frac{\rho_2}{c_2} +c_1 \mu_2 +m_2 \lambda_2)^{2}\right)  \times\\
    &\bigg(\left((1+\alpha c_2)^{2} +\alpha^{2} c^{2}_2\right)^{K} -1\bigg)
    \bigg(\left((1+\alpha c_2)^{2} + \alpha^{2} (\hat{\Delta}_{H})^{2}\right)^{K-1} -1\bigg)
    \frac{(\hat{\sigma}_{\text{In}})^{2}}{c^{2}_2|\boldsymbol{\tau}|}
    \\
    &+   
    (2K \alpha^{2} m^{2}_2) \bigg(\left((1+\alpha c_2)^{2} + \alpha^{2} (\hat{\Delta}_{H})^{2}\right)^{K-1} -1\bigg) (\hat{\Delta}_{J})^{2}\\
    &+(2K\alpha^{4} c^{2}_1 m^{2}_2) \bigg( \left((1+\alpha c_2)^{2} + \alpha^{2} (\hat{\Delta}_{H})^{2}\right)^{K-1}  - 
    \left((1+\alpha c_2)^{2} +\alpha^{2} c^{2}_2\right)^{K-1}\bigg)
\end{aligned}
\end{equation}

\vspace{4pt}
\noindent
\textbf{Term (ii).}

\begin{equation}
\begin{aligned}
&\mathbb{E}_{\boldsymbol{\tau}^{0:K-1}_0}\left[\mathbb{V}\left[\nabla_{\boldsymbol{\phi}}\hat{J}^{\text{K}}(\boldsymbol{\phi}) \mid \boldsymbol{\tau}^{0:K-1}_0\right]\right]
\\
=& \mathbb{E}_{\boldsymbol{\tau}^{0:K-1}_0}\left[\mathbb{E}_{\boldsymbol{\tau}^{0:K-1}_1, \boldsymbol{\tau}^{1:K-1}_2, \boldsymbol{\tau}_3}
\left[
\left\|\nabla_{\boldsymbol{\phi}}\hat{J}^{\text{K}}(\boldsymbol{\phi})-
\mathbb{E}_{\boldsymbol{\tau}^{0:K-1}_1, \boldsymbol{\tau}^{1:K-1}_2, \boldsymbol{\tau}_3}
\left[\nabla_{\boldsymbol{\phi}}\hat{J}^{\text{K}}(\boldsymbol{\phi})\right]\right\|^{2}\mid \boldsymbol{\tau}^{0:K-1}_0
\right] \right]
\\
\leq&\mathbb{E}_{\boldsymbol{\tau}^{0:K-1}_0} \Bigg[2\mathbb{E}_{ \boldsymbol{\tau}^{0:K-1}_1, \boldsymbol{\tau}^{1:K-1}_2, \boldsymbol{\tau}_3}\bigg[\bigg\|\nabla_{\boldsymbol{\phi}} \hat{J}^{\text {Out }}(\boldsymbol{\phi}, \hat{\boldsymbol{\theta}}^{K}, \boldsymbol{\tau}_3)
-\mathbb{E}_{\boldsymbol{\tau}_3 } [\nabla_{\boldsymbol{\phi}} \hat{J}^{\text {Out }}(\boldsymbol{\phi}, \hat{\boldsymbol{\theta}}^{K}, \boldsymbol{\tau}_3) ]\bigg\|^{2}\bigg]+\\
&2 \alpha^{2}\mathbb{E}_{ \boldsymbol{\tau}^{0:K-1}_1, \boldsymbol{\tau}^{1:K-1}_2, \boldsymbol{\tau}_3}\bigg[\bigg\|\sum_{i=0}^{K-1} \nabla_{\boldsymbol{\phi}} \nabla_{\hat{\boldsymbol{\theta}}^{i}} \hat{J}^{\text{In}}(\boldsymbol{\phi}, \hat{\boldsymbol{\theta}}^{i},\boldsymbol{\tau}^{i}_1) \prod_{j=i+1}^{K-1}\left(I+\alpha \nabla^{2}_{\hat{\boldsymbol{\theta}}^{j}}\hat{J}^{\text{In}}(\boldsymbol{\phi}, \hat{\boldsymbol{\theta}}^{j},\boldsymbol{\tau}^{j}_{2})\right)  
\nabla_{\hat{\boldsymbol{\theta}}^{K}} \hat{J}^{\text {Out }}(\boldsymbol{\phi}, \hat{\boldsymbol{\theta}}^{K},\tau_{3})\\
&-\sum_{i=0}^{K-1}\mathbb{E}_{\boldsymbol{\tau}^{i}_1} [\nabla_{\boldsymbol{\phi}} \nabla_{\hat{\boldsymbol{\theta}}^{i}} \hat{J}^{\text{In}}(\boldsymbol{\phi}, \hat{\boldsymbol{\theta}}^{i},\boldsymbol{\tau}^{i}_1) ] \prod_{j=i+1}^{K-1} \mathbb{E}_{\boldsymbol{\tau}^{j}_2} [I+\alpha \nabla^{2}_{\hat{\boldsymbol{\theta}}^{j}}\hat{J}^{\text{In}}(\boldsymbol{\phi}, \hat{\boldsymbol{\theta}}^{j} ,\boldsymbol{\tau}^{j}_{2}) ]\mathbb{E}_{\boldsymbol{\tau}_3 } [ \nabla_{\hat{\boldsymbol{\theta}}^{K}} \hat{J}^{\text {Out }}(\boldsymbol{\phi}, \hat{\boldsymbol{\theta}}^{K},\tau_{3})]\bigg\|^{2}\bigg]
\mid \boldsymbol{\tau}^{0:K-1}_0\Bigg]\\
\leq&\mathbb{E}_{\boldsymbol{\tau}^{0:K-1}_0} \Bigg[2(\sigma_1)^{2}+2K \alpha^{2}  \sum_{i=0}^{K-1}\bigg\| \nabla_{\boldsymbol{\phi}} \nabla_{\hat{\boldsymbol{\theta}}^{i}} \hat{J}^{\text{In}}
(\boldsymbol{\phi}, \hat{\boldsymbol{\theta}}^{i},\boldsymbol{\tau}^{i}_1)\bigg\|^{2} \times\\ &\mathbb{E}_{ \boldsymbol{\tau}^{0:K-1}_1, \boldsymbol{\tau}^{1:K-1}_2, \boldsymbol{\tau}_3}\bigg[\bigg\|  \prod_{j=i+1}^{K-1}\left(I-
\alpha \nabla^{2}_{\hat{\boldsymbol{\theta}}^{j}}\hat{J}^{\text{In}}(\boldsymbol{\phi}, \hat{\boldsymbol{\theta}}^{j},\boldsymbol{\tau}^{j}_{2})\right)  \nabla_{\hat{\boldsymbol{\theta}}^{K}} \hat{J}^{\text {Out }}(\boldsymbol{\phi}, \hat{\boldsymbol{\theta}}^{K},\tau_{3})-\\
&\prod_{j=i+1}^{K-1} \mathbb{E}_{\boldsymbol{\tau}^{j}_2} [I+\alpha \nabla^{2}_{\hat{\boldsymbol{\theta}}^{j}}\hat{J}^{\text{In}}(\boldsymbol{\phi}, \hat{\boldsymbol{\theta}}^{j} ,\boldsymbol{\tau}^{j}_{2}) ]\mathbb{E}_{\boldsymbol{\tau}_3 } [ \nabla_{\hat{\boldsymbol{\theta}}^{K}} \hat{J}^{\text {Out }}(\boldsymbol{\phi}, \hat{\boldsymbol{\theta}}^{K},\tau_{3})]\bigg\|^{2}\bigg]
+\\
&\bigg\|\prod_{j=i+1}^{K-1} \mathbb{E}_{\boldsymbol{\tau}^{j}_2} [I+\alpha \nabla^{2}_{\hat{\boldsymbol{\theta}}^{j}}\hat{J}^{\text{In}}(\boldsymbol{\phi}, \hat{\boldsymbol{\theta}}^{j} ,\boldsymbol{\tau}^{j}_{2}) ]\mathbb{E}_{\boldsymbol{\tau}_3 } [ \nabla_{\hat{\boldsymbol{\theta}}^{K}} \hat{J}^{\text {Out }}(\boldsymbol{\phi}, \hat{\boldsymbol{\theta}}^{K},\tau_{3})] \bigg\|^{2}\\
&\mathbb{E}_{ \boldsymbol{\tau}^{0:K-1}_1, \boldsymbol{\tau}^{1:K-1}_2, \boldsymbol{\tau}_3}\bigg[\bigg\|  \nabla_{\boldsymbol{\phi}} \nabla_{\hat{\boldsymbol{\theta}}^{i}} \hat{J}^{\text{In}}(\boldsymbol{\phi}, \hat{\boldsymbol{\theta}}^{i},\boldsymbol{\tau}^{i}_1)-\mathbb{E}_{\boldsymbol{\tau}^{i}_1} [\nabla_{\boldsymbol{\phi}} \nabla_{\hat{\boldsymbol{\theta}}^{i}} \hat{J}^{\text{In}}(\boldsymbol{\phi}, \hat{\boldsymbol{\theta}}^{i},\boldsymbol{\tau}^{i}_1) ] \bigg\|^{2}\bigg]\mid \boldsymbol{\tau}^{0:K-1}_0\Bigg]\\
\end{aligned}
\end{equation}

\begin{equation}
\begin{aligned}
\leq&\mathbb{E}_{\boldsymbol{\tau}^{0:K-1}_0} \Bigg[2(\sigma_1)^{2}+ 2K \alpha^{2}  \sum_{i=0}^{K-1} \Bigg[\bigg\| \nabla_{\hat{\boldsymbol{\theta}}^{K}} \hat{J}^{\text {Out }}(\boldsymbol{\phi}, \hat{\boldsymbol{\theta}}^{K},\tau_{3})\bigg\|^{2} \bigg\| \nabla_{\boldsymbol{\phi}} \nabla_{\hat{\boldsymbol{\theta}}^{i}} \hat{J}^{\text{In}}(\boldsymbol{\phi}, \hat{\boldsymbol{\theta}}^{i},\boldsymbol{\tau}^{i}_1)\bigg\|^{2}\times\\
&\mathbb{E}_{ \boldsymbol{\tau}^{0:K-1}_1, \boldsymbol{\tau}^{1:K-1}_2, \boldsymbol{\tau}_3}\bigg[\bigg\|  \prod_{j=i+1}^{K-1}\left(I+\alpha \nabla^{2}_{\hat{\boldsymbol{\theta}}^{j}}\hat{J}^{\text{In}}(\boldsymbol{\phi}, \hat{\boldsymbol{\theta}}^{j},\boldsymbol{\tau}^{j}_{2})\right)  -\prod_{j=i+1}^{K-1} \mathbb{E}_{\boldsymbol{\tau}^{j}_2} [I+\alpha \nabla^{2}_{\hat{\boldsymbol{\theta}}^{j}}\hat{J}^{\text{In}}(\boldsymbol{\phi}, \hat{\boldsymbol{\theta}}^{j} ,\boldsymbol{\tau}^{j}_{2}) ]\bigg\|^{2}\bigg]
+\\
&(\sigma_2)^{2}\bigg\|\prod_{j=i+1}^{K-1} \mathbb{E}_{\boldsymbol{\tau}^{j}_2} [I+\alpha \nabla^{2}_{\hat{\boldsymbol{\theta}}^{j}}\hat{J}^{\text{In}}(\boldsymbol{\phi}, \hat{\boldsymbol{\theta}}^{j} ,\boldsymbol{\tau}^{j}_{2}) ] \bigg\|^{2} \bigg\| 
\nabla_{\boldsymbol{\phi}} \nabla_{\hat{\boldsymbol{\theta}}^{i}} \hat{J}^{\text{In}}
(\boldsymbol{\phi}, \hat{\boldsymbol{\theta}}^{i},\boldsymbol{\tau}^{i}_1)\bigg\|^{2}\Bigg]
+\\
&m^{2}_2\bigg\|\prod_{j=i+1}^{K-1} \mathbb{E}_{\boldsymbol{\tau}^{j}_2} [I+\alpha \nabla^{2}_{\hat{\boldsymbol{\theta}}^{j}}\hat{J}^{\text{In}}(\boldsymbol{\phi}, \hat{\boldsymbol{\theta}}^{j} ,\boldsymbol{\tau}^{j}_{2}) ]\bigg\|^{2} \times\\
&\mathbb{E}_{ \boldsymbol{\tau}^{0:K-1}_1, \boldsymbol{\tau}^{1:K-1}_2, \boldsymbol{\tau}_3}\bigg[\bigg\|
\nabla_{\boldsymbol{\phi}} \nabla_{\hat{\boldsymbol{\theta}}^{i}} \hat{J}^{\text{In}}(\boldsymbol{\phi}, \hat{\boldsymbol{\theta}}^{i},\boldsymbol{\tau}^{i}_1)-\mathbb{E}_{\boldsymbol{\tau}^{i}_1} [\nabla_{\boldsymbol{\phi}} \nabla_{\hat{\boldsymbol{\theta}}^{i}} \hat{J}^{\text{In}}(\boldsymbol{\phi}, \hat{\boldsymbol{\theta}}^{i},\boldsymbol{\tau}^{i}_1) ] \bigg\|^{2}\bigg]
\mid \boldsymbol{\tau}^{0:K-1}_0\Bigg]\\
\end{aligned}
\end{equation}
\begin{equation}
\begin{aligned}
\leq&\mathbb{E}_{\boldsymbol{\tau}^{0:K-1}_0} \Bigg[2(\sigma_1)^{2} + 
2K \alpha^{2}  \sum_{i=0}^{K-1} \Bigg[ \bigg(2m^{2}_2 + 2(\sigma_2)^{2}
\bigg)\times \bigg\| \nabla_{\boldsymbol{\phi}} \nabla_{\hat{\boldsymbol{\theta}}^{i}} \hat{J}^{\text{In}}(\boldsymbol{\phi}, \hat{\boldsymbol{\theta}}^{i},\boldsymbol{\tau}^{i}_1)\bigg\|^{2}\times\\
&\underbrace{\mathbb{E}_{ \boldsymbol{\tau}^{0:K-1}_1, \boldsymbol{\tau}^{1:K-1}_2, \boldsymbol{\tau}_3}\bigg[\bigg\|  \prod_{j=i+1}^{K-1}\left(I+\alpha \nabla^{2}_{\hat{\boldsymbol{\theta}}^{j}}\hat{J}^{\text{In}}(\boldsymbol{\phi}, \hat{\boldsymbol{\theta}}^{j},\boldsymbol{\tau}^{j}_{2})\right)  -\prod_{j=i+1}^{K-1} \mathbb{E}_{\boldsymbol{\tau}^{j}_2} [I+\alpha\nabla^{2}_{\hat{\boldsymbol{\theta}}^{j}}\hat{J}^{\text{In}} (\boldsymbol{\phi}, \hat{\boldsymbol{\theta}}^{j} ,\boldsymbol{\tau}^{j}_{2}) ]\bigg\|^{2}\bigg]}_{\text{Part (I)}}
+\\
&(\sigma_2)^{2}\bigg\|\prod_{j=i+1}^{K-1} \mathbb{E}_{\boldsymbol{\tau}^{j}_2} [I+\alpha \nabla^{2}_{\hat{\boldsymbol{\theta}}^{j}}\hat{J}^{\text{In}}(\boldsymbol{\phi}, \hat{\boldsymbol{\theta}}^{j} ,\boldsymbol{\tau}^{j}_{2}) ] \bigg\|^{2} \times
\underbrace{\bigg\| \nabla_{\boldsymbol{\phi}} \nabla_{\hat{\boldsymbol{\theta}}^{i}} \hat{J}^{\text{In}}(\boldsymbol{\phi}, \hat{\boldsymbol{\theta}}^{i},\boldsymbol{\tau}^{i}_1)\bigg\|^{2}}_{\text{Part (II)}}
+\\
&m^{2}_2\underbrace{\bigg\|\prod_{j=i+1}^{K-1} \mathbb{E}_{\boldsymbol{\tau}^{j}_2} [I+\alpha \nabla^{2}_{\hat{\boldsymbol{\theta}}^{j}}\hat{J}^{\text{In}}(\boldsymbol{\phi}, \hat{\boldsymbol{\theta}}^{j} ,\boldsymbol{\tau}^{j}_{2}) ]\bigg\|^{2}}_{\text{Part (III)}} \times\\
&\mathbb{E}_{ \boldsymbol{\tau}^{0:K-1}_1, \boldsymbol{\tau}^{1:K-1}_2, \boldsymbol{\tau}_3}\bigg[\bigg\|  \nabla_{\boldsymbol{\phi}} \nabla_{\hat{\boldsymbol{\theta}}^{i}} \hat{J}^{\text{In}}(\boldsymbol{\phi}, \hat{\boldsymbol{\theta}}^{i},\boldsymbol{\tau}^{i}_1)-\mathbb{E}_{\boldsymbol{\tau}^{i}_1} [\nabla_{\boldsymbol{\phi}} \nabla_{\hat{\boldsymbol{\theta}}^{i}} \hat{J}^{\text{In}}(\boldsymbol{\phi}, \hat{\boldsymbol{\theta}}^{i},\boldsymbol{\tau}^{i}_1) ] \bigg\|^{2}\bigg]
\mid \boldsymbol{\tau}^{0:K-1}_0\Bigg]\\
\end{aligned}
\end{equation}

\vspace{4pt}
\noindent
\textbf{Part (I)} According to

\begin{equation}
\begin{aligned}
&\mathbb{E}_{ \boldsymbol{\tau}^{0:K-1}_1, \boldsymbol{\tau}^{1:K-1}_2, \boldsymbol{\tau}_3}\bigg[\bigg\|  \prod_{j=i+1}^{K-1}\left(I+\alpha \nabla^{2}_{\hat{\boldsymbol{\theta}}^{j}}\hat{J}^{\text{In}}(\boldsymbol{\phi}, \hat{\boldsymbol{\theta}}^{j},\boldsymbol{\tau}^{j}_{2})\right)  -\prod_{j=i+1}^{K-1} \mathbb{E}_{\boldsymbol{\tau}^{j}_2} [I+\alpha\nabla^{2}_{\hat{\boldsymbol{\theta}}^{j}}\hat{J}^{\text{In}} (\boldsymbol{\phi}, \hat{\boldsymbol{\theta}}^{j} ,\boldsymbol{\tau}^{j}_{2}) ]\bigg\|^{2}\bigg]\\
\leq&2 \prod_{j=i+1}^{K-2} \Bigg(3(1+\alpha c_2)^{2}
+3\alpha^{2} (\hat{\Delta}_{H})^{2}
+3\alpha^{2} (\hat{\sigma}_{H})^{2}
\Bigg)\times \alpha^{2}(\hat{\sigma}_{H})^{2} +
\\
&4\Bigg((1+\alpha c_2)^{2}
+\alpha^{2} (\hat{\Delta}_{H})^{2}
\Bigg)\times\\
& \mathbb{E}_{ \boldsymbol{\tau}^{1:K-1}_2}\bigg[\bigg\| \Bigg(\prod_{j=i+1}^{K-2}\left(I+\alpha \nabla^{2}_{\hat{\boldsymbol{\theta}}^{j}}\hat{J}^{\text{In}}(\boldsymbol{\phi}, \hat{\boldsymbol{\theta}}^{j},\boldsymbol{\tau}^{j}_{2})\right)  -
\prod_{j=i+1}^{K-2} \mathbb{E}_{\boldsymbol{\tau}^{j}_2} [I+\alpha \nabla^{2}_{\boldsymbol{\theta}}\hat{J}^{\text{In}}(\boldsymbol{\phi}, \hat{\boldsymbol{\theta}}^{j} ,\boldsymbol{\tau}^{j}_{2})]\Bigg)\bigg\|^{2}\bigg]\\
\leq& 6^{K-i-1} \bigg[ \left((1+\alpha c_2)^{2}
+\alpha^{2} (\hat{\Delta}_{H})^{2}
+\alpha^{2} (\hat{\sigma}_{H})^{2}\right)^{K-i-1} - 
\left((1+\alpha c_2)^{2}
+\alpha^{2} (\hat{\Delta}_{H})^{2}
\right)^{K-i-1} \bigg] \\
\end{aligned}
\end{equation}

\vspace{4pt}
\noindent
\textbf{Part (II)} According to the supporting Lemma~

\begin{equation}
\begin{aligned}
    & \left\|\nabla_{\boldsymbol{\phi}} \nabla_{\boldsymbol{\theta}^{0}} \hat{J}^{\text{In}}(\boldsymbol{\phi}, \boldsymbol{\theta}^{0}, \tau_1)\right\|^{2} \\
    \leq&\bigg\| \nabla_{\boldsymbol{\phi}} \nabla_{\hat{\boldsymbol{\theta}}^{i}} J^{\text{In}}(\boldsymbol{\phi}, \hat{\boldsymbol{\theta}}^{i})- 
    \nabla_{\boldsymbol{\phi}} \nabla_{\hat{\boldsymbol{\theta}}^{i}} J^{\text{In}}(\boldsymbol{\phi}, \hat{\boldsymbol{\theta}}^{i})
    +\mathbb{E}_{\boldsymbol{\tau}^{i}_1} [\nabla_{\boldsymbol{\phi}} \nabla_{\hat{\boldsymbol{\theta}}^{i}} \hat{J}^{\text{In}}\left(\boldsymbol{\phi}, \hat{\boldsymbol{\theta}}^{i},\boldsymbol{\tau}^{i}_1\right) ]\\
    &-\mathbb{E}_{\boldsymbol{\tau}^{i}_1} [\nabla_{\boldsymbol{\phi}} \nabla_{\hat{\boldsymbol{\theta}}^{i}} \hat{J}^{\text{In}}(\boldsymbol{\phi}, \hat{\boldsymbol{\theta}}^{i},\boldsymbol{\tau}^{i}_1) ]+\nabla_{\boldsymbol{\phi}} \nabla_{\hat{\boldsymbol{\theta}}^{i}} \hat{J}^{\text{In}}(\boldsymbol{\phi}, \hat{\boldsymbol{\theta}}^{i},\boldsymbol{\tau}^{i}_1)\bigg\|^{2}\\
    \leq& 
    3c^{2}_1+
    3\left(
    (\hat{\Delta}_{J})^{2}  +
    (\hat{\sigma}_{J})^{2}
    \right)\\
\end{aligned}
\end{equation}
\begin{equation}
    \left\|\nabla_{\boldsymbol{\phi}} \nabla_{\boldsymbol{\theta}^{0}} \hat{J}^{\text{In}}(\boldsymbol{\phi}, \boldsymbol{\theta}^{0}, \boldsymbol{\tau}^{i}_1)\right\|^{2} 
    \leq 
    3c^{2}_1+
    3\left(
    (\hat{\Delta}_{J})^{2}  +
    (\hat{\sigma}_{J})^{2}
    \right)
\end{equation}

\vspace{4pt}
\noindent
\textbf{Part (III)} 

\begin{equation}
\begin{aligned}
&\bigg\|\prod_{j=i+1}^{K-1} \mathbb{E}_{\boldsymbol{\tau}^{j}_2} [I+\alpha \nabla^{2}_{\hat{\boldsymbol{\theta}}^{j}}\hat{J}^{\text{In}}(\boldsymbol{\phi}, \hat{\boldsymbol{\theta}}^{j} ,\boldsymbol{\tau}^{j}_{2}) ]\bigg\|^{2}\\
\leq&\prod_{j=i+1}^{K-1} \bigg\|  I
-\alpha \nabla^{2}_{\hat{\boldsymbol{\theta}}^{j}}J^{\text{In}}(\boldsymbol{\phi}, \hat{\boldsymbol{\theta}}^{j})
+\alpha \nabla^{2}_{\hat{\boldsymbol{\theta}}^{j}}J^{\text{In}}(\boldsymbol{\phi}, \hat{\boldsymbol{\theta}}^{j})-\alpha\mathbb{E}_{\boldsymbol{\tau}^{j}_2} [ \nabla^{2}_{\hat{\boldsymbol{\theta}}^{j}}\hat{J}^{\text{In}}(\boldsymbol{\phi}, \hat{\boldsymbol{\theta}}^{j} ,\boldsymbol{\tau}^{j}_{2})] \bigg\|^{2}\\
\leq& \left((1+\alpha c_2)^{2}
+\alpha^{2} (\hat{\Delta}_{H})^{2}\right)^{K-i-1}
\end{aligned}
\end{equation}

\begin{equation}
\begin{aligned}
&\mathbb{E}_{\boldsymbol{\tau}^{0:K-1}_0}\left[\mathbb{V}\left[\nabla_{\boldsymbol{\phi}}\hat{J}^{\text{K}}(\boldsymbol{\phi}) \mid \boldsymbol{\tau}^{0:K-1}_0\right]\right]\\
\leq&\mathbb{E}_{\boldsymbol{\tau}^{0:K-1}_0} \Bigg[2(\sigma_1)^{2} + 
2K \alpha^{2}  \sum_{i=0}^{K-1} \Bigg[ \bigg(2m^{2}_2 + 2\sigma_2^{2}
\bigg)\times \bigg(    3c^{2}_1+
    3\left(
    (\hat{\Delta}_{J})^{2}  +
    (\hat{\sigma}_{J})^{2}
    \right)\bigg)\times\\
&6^{K-i-1} \bigg[ \left((1+\alpha c_2)^{2}
+\alpha^{2} (\hat{\Delta}_{H})^{2}
+\alpha^{2} (\hat{\sigma}_{H})^{2}\right)^{K-i-1} - 
\left((1+\alpha c_2)^{2}
+\alpha^{2} (\hat{\Delta}_{H})^{2}
\right)^{K-i-1} \bigg]
+\\
&(\sigma_2)^{2}\left((1+\alpha c_2)^{2}
+\alpha^{2} (\hat{\Delta}_{H})^{2}\right)^{K-i-1} \times
\bigg(    3c^{2}_1+
    3\left(
    (\hat{\Delta}_{J})^{2}  +
    (\hat{\sigma}_{J})^{2}
    \right)\bigg)
+\\
&m^{2}_2\left((1+\alpha c_2)^{2}
+\alpha^{2} (\hat{\Delta}_{H})^{2}\right)^{K-i-1} \times (\hat{\sigma}_{J})^{2}
\mid \boldsymbol{\tau}^{0:K-1}_0\Bigg]\\
\leq& 2(\sigma_1)^{2} \\  
    &+(6K \alpha^{2} \sigma^{2}_2) 
    \bigg(\left((1+\alpha c_2)^{2} + \alpha^{2} (\hat{\Delta}_{H})^{2}\right)^{K-1} -1\bigg)
    \left( c^{2}_1 + (\hat{\Delta}_{J})^{2} + (\hat{\sigma}_{J})^{2} \right)\\
    &+(2K \alpha^{2} m^{2}_2) 
    \bigg(\left((1+\alpha c_2)^{2} + \alpha^{2} (\hat{\Delta}_{H})^{2}\right)^{K-1} -1\bigg)
    (\hat{\sigma}_{J})^{2} \\
    &  +2\alpha^{2} (m^{2}_{1} +3 \sigma_2^{2})     \left( c^{2}_1 + (\hat{\Delta}_{J})^{2} + (\hat{\sigma}_{J})^{2} \right) \\
    &+(6^{K}K \alpha^{2} (12m^{2}_2 + 12\sigma_2^{2}))     \left( c^{2}_1 + (\hat{\Delta}_{J})^{2} + (\hat{\sigma}_{J})^{2} \right) \times\\ 
    &\bigg( \left((1+\alpha c_2)^{2}
+\alpha^{2} (\hat{\Delta}_{H})^{2}
+\alpha^{2} (\hat{\sigma}_{H})^{2}\right)^{K-1}- 
    \left((1+\alpha c_2)^{2} + \alpha^{2} (\hat{\Delta}_{H})^{2}\right)^{K-1}\bigg)
\end{aligned}
\end{equation}

Then combine terms (i)-(ii)  together, that is
\begin{equation}
\begin{aligned}
&\mathbb{V} \left[\nabla_{\boldsymbol{\phi}}\hat{J}^{\text{K}}(\boldsymbol{\phi})\right]\\
\leq &\mathcal{O}\Bigg(
    (V_1+\hat{\Delta}^{2}_{H})^{K-1}  
    \left(\mathbb{E} [\|\hat{\boldsymbol{\theta}}^{K}  - \boldsymbol{\theta}^{K}\|^{2}] +
    (K-1)\right)
    \\
    &+
    \left(
    V_2+(V_1+\hat{\Delta}^{2}_{H} + \hat{\sigma}^{2}_{H})^{K-1}-
    (V_1+\hat{\Delta}^{2}_{H})^{K-1}\right)
    (\hat{\Delta}^{2}_{J} + \hat{\sigma}^{2}_{J})
    \Bigg)
\end{aligned}
\end{equation}

which concludes the proof of Theorem~\ref{theorem_1}.
\end{proof}

\section{Supporting Lemmas}\label{supporting_lemmas}
In this section, we present the supporting lemmas.
\begin{definition}
Let $X$ be a random vector in $\mathbb{R}^{d}$. Then the norm of $X$ is
\begin{equation}\label{norm_def}
   \|X\| := \sqrt{\sum_i X_i^2}
\end{equation}
\end{definition}

\begin{lemma}\label{aux_lemma_1}
Let $X$ be a random vector in $\mathbb{R}^d$ with finite second moment, where $\mathbb{E}[\|X\|^{2}] \leq +\infty$. Then $\|\mathbb{E}[X]\| \leq \mathbb{E}[\|X\|]$, $\|\mathbb{E}[X]\|^{2} \leq \mathbb{E}[\|X\|^{2}]$.
\end{lemma}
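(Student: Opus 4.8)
The plan is to establish the two inequalities in order, deducing the second from the first. Since both are Jensen-type statements for the Euclidean norm, the key structural fact is the convexity of $\|\cdot\|$, which I would exploit through its dual representation $\|y\| = \sup_{\|u\|\le 1}\langle u, y\rangle$. Before doing any estimation, I would first record the integrability that makes the statement meaningful: by the Cauchy--Schwarz inequality, $\mathbb{E}[\|X\|] \le \big(\mathbb{E}[\|X\|^{2}]\big)^{1/2} < \infty$, so each coordinate of $X$ is integrable and $\mathbb{E}[X]$ is well defined componentwise.

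For the first inequality, I would fix an arbitrary $u\in\mathbb{R}^{d}$ with $\|u\|\le 1$. Using linearity of expectation and then the Cauchy--Schwarz inequality pointwise, $\langle u, \mathbb{E}[X]\rangle = \mathbb{E}[\langle u, X\rangle] \le \mathbb{E}[\|u\|\,\|X\|] \le \mathbb{E}[\|X\|]$. Since the right-hand side does not depend on $u$, taking the supremum over all $u$ with $\|u\|\le 1$ on the left and invoking $\|\mathbb{E}[X]\| = \sup_{\|u\|\le 1}\langle u, \mathbb{E}[X]\rangle$ gives $\|\mathbb{E}[X]\| \le \mathbb{E}[\|X\|]$.

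For the second inequality I would simply square the first. Both sides of $\|\mathbb{E}[X]\| \le \mathbb{E}[\|X\|]$ are nonnegative, so $\|\mathbb{E}[X]\|^{2} \le \big(\mathbb{E}[\|X\|]\big)^{2}$. Applying Cauchy--Schwarz (equivalently, Jensen's inequality for the convex map $t\mapsto t^{2}$) to the scalar random variable $\|X\|$ yields $\big(\mathbb{E}[\|X\|]\big)^{2} \le \mathbb{E}[\|X\|^{2}]$, and chaining the two bounds produces $\|\mathbb{E}[X]\|^{2} \le \mathbb{E}[\|X\|^{2}]$.

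There is essentially no hard step here; the result is a direct consequence of the convexity of the norm, and either inequality could alternatively be obtained as an instance of Jensen's inequality for vector-valued integrals. The only point requiring mild care is the integrability justification at the outset, which guarantees that $\mathbb{E}[X]$ and $\mathbb{E}[\|X\|]$ are finite so that every manipulation above is legitimate; this is immediate from the assumed finite second moment.
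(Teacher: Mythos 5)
Your proof is correct and follows essentially the same route as the paper: the first inequality via convexity of the norm (you simply make the Jensen step explicit through the dual representation $\|y\|=\sup_{\|u\|\le 1}\langle u,y\rangle$ rather than citing vector-valued Jensen directly), and the second by squaring and applying the scalar Jensen/Cauchy--Schwarz bound $(\mathbb{E}[\|X\|])^{2}\le\mathbb{E}[\|X\|^{2}]$. Incidentally, your write-up is cleaner than the paper's, whose displayed chain contains a typo ($(\mathbb{E}[\|X\|^{2}])^{2}$ where $(\mathbb{E}[\|X\|])^{2}$ is meant).
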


\begin{proof}
Due to the convexity of norm operator, we can have $\|\mathbb{E}[X]\| \leq \mathbb{E}[\|X\|]$ using Jensen's inequality. Further we can get
$\|\mathbb{E}[X]\|^{2} \leq (\mathbb{E}[\|X\|^{2}])^2 \leq 
\mathbb{E}[\|X\|^{2}]$ and the statement follows. 
\end{proof}

\begin{lemma}\label{aux_lemma_2}
Let $X$ and $Y$ be two random variables in $\mathbb{R}^d$ with finite second moment. Then $\mathbb{E}[\|X+Y\|] \leq \mathbb{E}[\|X\|] + \mathbb{E}[\|Y\|]$.
\end{lemma}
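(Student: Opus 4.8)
The plan is to derive the statement from the pointwise triangle inequality for the Euclidean norm, combined with the monotonicity and linearity of the expectation. First I would observe that the norm defined in Eq.~\eqref{norm_def} is a genuine norm on $\mathbb{R}^d$, so it obeys the triangle inequality; hence for \emph{every} realisation of the pair $(X,Y)$ we have the deterministic bound $\|X+Y\| \leq \|X\| + \|Y\|$. This reduces the probabilistic claim to a routine application of elementary properties of the expectation.

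Next I would take expectations on both sides of this pointwise inequality. Applying monotonicity of the expectation to $\|X+Y\| \leq \|X\| + \|Y\|$ yields $\mathbb{E}[\|X+Y\|] \leq \mathbb{E}[\|X\| + \|Y\|]$, and linearity of the expectation then splits the right-hand side as $\mathbb{E}[\|X\|] + \mathbb{E}[\|Y\|]$, which is exactly the desired conclusion.

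The only point that requires a word of care — rather than a genuine obstacle — is to confirm that all the expectations appearing are finite, so that the monotonicity and linearity steps are legitimate. This is where the finite-second-moment hypothesis enters: by Jensen's inequality (as already used in Lemma~\ref{aux_lemma_1}) we have $\mathbb{E}[\|X\|] \leq \sqrt{\mathbb{E}[\|X\|^2]} < \infty$, and likewise $\mathbb{E}[\|Y\|] < \infty$, so both $\|X\|$ and $\|Y\|$ are integrable and $\mathbb{E}[\|X+Y\|]$ is well defined and finite. With integrability secured, the two-line argument above completes the proof.
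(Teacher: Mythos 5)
Your proof is correct and takes essentially the same approach as the paper: the paper's one-line proof cites Minkowski's inequality at $p=1$, which is exactly the pointwise triangle inequality followed by monotonicity and linearity of expectation that you spell out. Your extra check that the finite-second-moment hypothesis guarantees integrability of $\|X\|$ and $\|Y\|$ is a harmless and slightly more careful addition that the paper omits.
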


\begin{proof}
According to Minkowski's inequality that $(\mathbb{E}[\|X+Y\|^{p}])^{1/p} \leq (\mathbb{E}[\|X\|^{p}])^{1/p} + (\mathbb{E}[\|Y\|^{p}])^{1/p}$, set $p=1$ and the statement follows.
\end{proof}

\begin{definition}
Let $X$ be a random vector with values in $\mathbb{R}^{d}$. Then the variance of $X$ is
\begin{equation}\label{var_def}
   \mathbb{V}[X] := \mathbb{E}[\|X-\mathbb{E}[X]\|^{2}]
\end{equation}
\end{definition}

\begin{lemma}[Properties of the variance]\label{aux_lemma_3}
Let $X$ and $Y$ be two independent random variables in $\mathbb{R}^d$. We also assume that $X,Y$, have finite second moment. Then the following hold.

\begin{enumerate}[label={\rm (\roman*)}]
\item\label{lem:varprop_i} 
$\mathbb{V}[X] = \mathbb{E}[\|X\|^2] - \|\mathbb{E}[X]\|^2$,
\item\label{lem:varprop_ii} 
For every $x \in \mathbb{R}^d$, $\mathbb{E}[\|X- x\|^2] = \mathbb{V}[X] + \|\mathbb{E}[X] - x\|^2$. Hence,
$\mathbb{V}[X] = \min_{x \in \mathbb{R}^d} \mathbb{E}[\|X- x\|^2]$,
\item\label{lem:varprop_iii} 
$ \mathbb{V}[X + Y] = \mathbb{V}[X] + \mathbb{V}[Y]$.
\end{enumerate}
\end{lemma}

\begin{proof}
\ref{lem:varprop_i}-\ref{lem:varprop_ii}:
Let $x \in \mathbb{R}^d$. Then, $\|X - x\|^2 = \|X- \mathbb{E}[X]\|^2 + \|\mathbb{E}[X] - x\|^2
+ 2(X- \mathbb{E}[X])^\top(\mathbb{E}[X]- x)$. Hence, taking the expectation
we get $\mathbb{E}[\|X- x\|^2] = \mathbb{V}[X] + \|\mathbb{E}[X] - x\|^2$.
Therefore, $\mathbb{E}[\|X- x\|^2] \geq \mathbb{V}[X]$ and for $x=\mathbb{E}[X]$ we get $\mathbb{E}[\|X- x\|^2] = \mathbb{V}[X]$. Finally, for $x=0$ we get \ref{lem:varprop_i}.

\ref{lem:varprop_iii}:
Let $\bar X := \mathbb{E}[X]$ and $\bar Y := \mathbb{E}[Y]$, we have
\begin{align*}
    \mathbb{V}[X + Y] &= \mathbb{E}[\|X - \bar X + Y - \bar Y\|^2] \\
    &= \mathbb{E}[\|X- \bar X\|^2] + \mathbb{E}[\|Y- \bar Y\|^2] + 2 \mathbb{E}[X - \bar X]^\top\mathbb{E}[Y - \bar Y] \\
    &= \mathbb{E}[\|X- \bar X\|^2] + \mathbb{E}[\|Y- \bar Y\|^2]
\end{align*}
Recalling the definition of $\mathbb{V}[X]$ the statement follows.
\end{proof}

\begin{definition}\textit{(Conditional Variance)}.
Let $X$ be a random variable with values in $\mathbb{R}^d$ 
and $Y$ be a random variable with values in a measurable space $\mathcal{Y}$. We call \emph{conditional variance} of $X$ given $Y$ the quantity
\begin{equation*}
    \mathbb{V}[X \mid Y] := \mathbb{E}[\|X -\mathbb{E}[X \mid Y]\|^2 \mid Y].
\end{equation*}{}
\end{definition}

\begin{lemma}{(Law of total variance)}\label{lm:totvariance}
Let $X$ and $Y$ be two random variables, we can prove that 
\begin{equation}
    \mathbb{V}[X] = \mathbb{E}[\mathbb{V}[X \mid Y]] + \mathbb{V}[\mathbb{E}[X \mid Y]]
\end{equation}
\end{lemma}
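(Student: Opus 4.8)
The plan is to reduce everything to the algebraic identity $\mathbb{V}[X]=\mathbb{E}[\|X\|^2]-\|\mathbb{E}[X]\|^2$ from Lemma~\ref{aux_lemma_3}\ref{lem:varprop_i}, applied both unconditionally and conditionally, and then to close the argument with the tower property of conditional expectation. Writing $Z:=\mathbb{E}[X\mid Y]$, I would first record that by the law of total expectation $\mathbb{E}[Z]=\mathbb{E}[\mathbb{E}[X\mid Y]]=\mathbb{E}[X]$, a fact I will invoke twice.

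First I would handle the term $\mathbb{E}[\mathbb{V}[X\mid Y]]$. Applying the identity of Lemma~\ref{aux_lemma_3}\ref{lem:varprop_i} inside the conditional expectation (i.e.\ conditioned on $Y$) gives $\mathbb{V}[X\mid Y]=\mathbb{E}[\|X\|^2\mid Y]-\|Z\|^2$. Taking the outer expectation over $Y$ and using the tower property $\mathbb{E}[\mathbb{E}[\|X\|^2\mid Y]]=\mathbb{E}[\|X\|^2]$, I obtain $\mathbb{E}[\mathbb{V}[X\mid Y]]=\mathbb{E}[\|X\|^2]-\mathbb{E}[\|Z\|^2]$.

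Next I would handle the term $\mathbb{V}[\mathbb{E}[X\mid Y]]=\mathbb{V}[Z]$. A second application of Lemma~\ref{aux_lemma_3}\ref{lem:varprop_i}, this time to the random vector $Z$, yields $\mathbb{V}[Z]=\mathbb{E}[\|Z\|^2]-\|\mathbb{E}[Z]\|^2$, and substituting $\mathbb{E}[Z]=\mathbb{E}[X]$ gives $\mathbb{V}[Z]=\mathbb{E}[\|Z\|^2]-\|\mathbb{E}[X]\|^2$. Adding the two expressions, the $\mathbb{E}[\|Z\|^2]$ contributions cancel, leaving $\mathbb{E}[\mathbb{V}[X\mid Y]]+\mathbb{V}[\mathbb{E}[X\mid Y]]=\mathbb{E}[\|X\|^2]-\|\mathbb{E}[X]\|^2$, which equals $\mathbb{V}[X]$ by one more invocation of Lemma~\ref{aux_lemma_3}\ref{lem:varprop_i}. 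This concludes the proof.

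There is no serious obstacle here: the whole argument is a short telescoping computation, and the only point requiring mild care is justifying that the identity of Lemma~\ref{aux_lemma_3}\ref{lem:varprop_i} remains valid when every expectation is replaced by its conditional counterpart given $Y$. This holds because the conditional expectation $\mathbb{E}[\,\cdot\mid Y]$ is itself a (linear, positivity-preserving) expectation operator, so the same expansion of $\|X-\mathbb{E}[X\mid Y]\|^2$ that proves part~\ref{lem:varprop_i} goes through verbatim conditionally. The remaining steps are purely the tower property and the cancellation, both routine.
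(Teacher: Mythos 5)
Your proposal is correct and is essentially the paper's own argument run in the opposite direction: the paper starts from $\mathbb{V}[X]=\mathbb{E}[\|X\|^2]-\|\mathbb{E}[X]\|^2$, inserts the tower property, and substitutes the conditional identity $\mathbb{E}[\|X\|^2\mid Y]=\mathbb{V}[X\mid Y]+\|\mathbb{E}[X\mid Y]\|^2$, whereas you expand the two right-hand terms separately and let $\mathbb{E}[\|Z\|^2]$ cancel. The ingredients (Lemma~\ref{aux_lemma_3}\ref{lem:varprop_i} applied unconditionally and conditionally, plus the tower property) are identical, so this is the same proof up to rearrangement.
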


\begin{proof}
\begin{align*}
    \mathbb{V}[X] &= \mathbb{E}[\|X - \mathbb{E}[X]\|^2] \\
     \quad 
    &= \mathbb{E}[\|X\|^2] - \|\mathbb{E}[X]\|^2 \\
     \quad
    &= \mathbb{E}[\mathbb{E}[\|X\|^2 \mid Y]] - \|\mathbb{E}[\mathbb{E}[X \mid Y]]\|^2 \\
     \quad
     &= \mathbb{E}[\mathbb{V}[X \mid Y] + \|\mathbb{E}[X \mid Y]\|^2] - \|\mathbb{E}[\mathbb{E}[X \mid Y]]\|^2 \\
    &= \mathbb{E}[\mathbb{V}[X \mid Y]] + \left(\mathbb{E}[\|\mathbb{E}[X \mid Y]\|^2] - \|\mathbb{E}[\mathbb{E}[X \mid Y]]\|^2\right)
\end{align*}
recognizing that the term inside the parenthesis is the conditional variance of $\mathbb{E}[X | Y]$ gives the result.
\end{proof}

\begin{lemma}
\label{aux_lemma_4}
Let $\zeta$ and $\eta$ be two independent random variables with values in $\mathcal{Z}$ and $\mathcal{Y}$ respectively.
Let $\psi \colon \mathcal{Y} \to \mathbb{R}^{m\times n}, \phi\colon \mathcal{Z} \to \mathbb{R}^{n\times p}$, and
$\varphi\colon \mathcal{Y} \to \mathbb{R}^{p\times q}$ matrix-valued measurable functions. Then
\begin{equation}
\mathbb{E}[\psi(\eta) (\phi(\zeta) - \mathbb{E}[\phi(\zeta)] )\varphi(\eta)] = 0
\end{equation}
\end{lemma}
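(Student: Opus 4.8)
The plan is to condition on $\eta$ and exploit the independence of $\zeta$ and $\eta$ together with linearity of matrix multiplication. Since $\psi(\eta)$ and $\varphi(\eta)$ depend only on $\eta$, once we fix $\eta$ they act as deterministic matrices that can be factored out of the expectation taken over $\zeta$; the remaining factor is the centred quantity $\phi(\zeta)-\mathbb{E}[\phi(\zeta)]$, whose conditional expectation given $\eta$ collapses to zero by independence. Throughout I assume the integrability needed for the expectations and the tower property to apply (this is implicit from the finite-second-moment context in which the lemma is used).

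Concretely, I would first invoke the tower property to write
\begin{equation}
\mathbb{E}\big[\psi(\eta)\,(\phi(\zeta)-\mathbb{E}[\phi(\zeta)])\,\varphi(\eta)\big]
= \mathbb{E}\Big[\,\mathbb{E}\big[\psi(\eta)\,(\phi(\zeta)-\mathbb{E}[\phi(\zeta)])\,\varphi(\eta)\mid\eta\big]\Big].
\end{equation}
Next, because $\psi(\eta)$ and $\varphi(\eta)$ are $\sigma(\eta)$-measurable, they may be pulled outside the inner conditional expectation, so that for each fixed value of $\eta$,
\begin{equation}
\mathbb{E}\big[\psi(\eta)\,(\phi(\zeta)-\mathbb{E}[\phi(\zeta)])\,\varphi(\eta)\mid\eta\big]
= \psi(\eta)\,\mathbb{E}\big[\phi(\zeta)-\mathbb{E}[\phi(\zeta)]\mid\eta\big]\,\varphi(\eta).
\end{equation}
Here I use that matrix multiplication is bilinear, so a deterministic left factor and a deterministic right factor commute with the (entrywise) conditional expectation.

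The final step is to apply independence. Since $\zeta$ and $\eta$ are independent, $\mathbb{E}[\phi(\zeta)\mid\eta]=\mathbb{E}[\phi(\zeta)]$, and therefore $\mathbb{E}[\phi(\zeta)-\mathbb{E}[\phi(\zeta)]\mid\eta]=0$, which forces the inner conditional expectation to be the zero matrix; taking the outer expectation then yields the claim. There is no genuine analytic difficulty here: the lemma is essentially a restatement of ``centred independent factors have zero cross-expectation,'' dressed up in matrix form. The only point requiring a little care — and hence the main thing to get right — is the bookkeeping of the matrix dimensions and the justification that the $\sigma(\eta)$-measurable matrices $\psi(\eta)$ and $\varphi(\eta)$ can be extracted from the conditional expectation on the correct sides; everything else is a direct consequence of independence and linearity.
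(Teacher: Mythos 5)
Your proof is correct and follows essentially the same route as the paper: condition on $\eta$, use linearity of $B \mapsto \psi(\eta)\,B\,\varphi(\eta)$ to pull the $\sigma(\eta)$-measurable factors out of the conditional expectation, invoke independence to reduce the middle factor to $\mathbb{E}[\phi(\zeta)-\mathbb{E}[\phi(\zeta)]]=0$, and take the outer expectation. No gaps.
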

\begin{proof}
Since, for every $y \in \mathcal{Y}$, $B \mapsto \psi(y) B \varphi(y)$ is linear and $\zeta$ and $\eta$ 
are independent, we have
\begin{equation*}
\mathbb{E}[\psi(\eta) (\psi(\zeta) - \mathbb{E}[\psi(\zeta)]) \varphi(\eta) \,\vert \eta] = \psi(\eta) \mathbb{E}\big[\phi(\zeta) - \mathbb{E}[\phi(\zeta)]\big] \varphi(\eta) =0.
\end{equation*}
Taking the expectation the statement follows.
\end{proof}

\section{Experiment}
\label{apx: experiment}

\paragraph{Computational resources.} For compute resources, We used one internal compute servers which consists consisting of $2x$ Tesla A100 cards and $256$ CPUs, however each model is trained on at most 1 card.

\subsection{Tabular MDP}
\subsubsection{Experimental Settings}
\label{apx:tabular_setting}
We adopt the tabular random MDP setting presented in \cite{tang2021unifying}. The dimension is 20 for state space and 5 for action space, so we have the reward matrix $R \in \mathbb{R}^{20\times5}$. The transition probability matrix is generated from independent Dirichlet distributions. The policy is a  matrix $\theta^{0}\in \mathbb{R}^{20\times5}$. The final policy $\pi_{\theta}$ is obtained by adopting Softmax activation on this policy matrix: $\pi_{\theta}(a \mid s)=\exp (\theta(s, a)) / \sum_{b} \exp (\theta(s, b))$. We set the initial policy as the uniform policy (by setting $\theta^{0}$ as zero matrix) in MAML and LIRPG experiment. We conduct the inner-loop update stating from the same point for several times when estimating the meta-gradient correlation and variance. For accuracy measurement between estimation $x\in \mathbb{R}^{L}$ and ground truth $y\in \mathbb{R}^{L}$, we use the following equation:
\begin{equation}
\operatorname{Acc}(x, y):=\frac{x^{T} y}{\sqrt{x^{T} x} \sqrt{y^{T} y}} .
\end{equation}
\subsubsection{Implementation for decomposing Gradient estimation}
\label{apx:implement}
To decompose the gradient estimation effects brought by different sources, such as outer estimation variance and inner estimation bias (compositional bias, hessian estimation error), we utilise the following implementation trick: Using estimator I to estimate $\theta_{c}^{\prime} = \theta + \alpha \nabla J(\theta)$, estimator II to estimate $\theta_{h}^{\prime} = \theta + \alpha \nabla J(\theta)$ and finally combine them with: $\theta^{\prime} = \perp\theta_{c}^{\prime} + \theta_{h}^{\prime} - \perp \theta_{h}^{\prime}$, where $\perp$ is the "stop gradient" operator. By this implementation trick, we can have the following property: $\theta^{\prime} \rightarrow \theta_{c}^{\prime}$ and $\nabla_{\theta}\theta^{\prime}=\nabla_{\theta}\theta_{h}^{\prime}$, where $\rightarrow$ is the "evaluates to" operator. "Evaluates to" operator $\rightarrow$ is in contrast with =, which also brings the equality of gradients. By "Evaluates to" operator, the "stop gradient" operator means that $\perp \left(f_{\theta}(x)\right) \rightarrow f_{\theta}(x)$ but $\nabla_{\theta} \perp\left(f_{\theta}(x)\right) \rightarrow 0$. This property guarantee that the compositional bias is only influenced by estimator I while hessian estimation error is controlled by estimator II. Besides estimator I and estimator II, an extra estimator III is used for outer-loop policy gradient $\nabla_{\theta^{k}}J^{\text{out}}(\pi^{k})$ estimation, which helps us understand the effect of outer-loop policy gradient.

\subsubsection{Additional Experimental Results on Tabular MAML-RL}
\label{apx:tabular_result}
We offer additional experimental results on more estimators (DiCE/ Loaded-DiCE)/settings (All 7 permutations)/metrics (variance of Meta-gradient estimation).

\textbf{Ablation study on sample size and estimator.} Additional experimental results are shown in Fig. \ref{fig_apx:tabular_all}. The comparison between $SSS, SES, ESS$ and $SSE, SEE, ESE$ reveals the importance of the outer-loop gradient estimation. Accurate outer-loop policy gradient estimation brings more significant improvement over the correlation compared with the correction of Hessian error or compositional bias. In addition, with estimated outer-loop policy gradient, the correction of these two terms also helps ($EES>SES>ESS>SSS$). 

Next we discuss the comparison between different estimators. The DiCE estimator have real high variance on first-order and second-order, and its first-order gradient corresponds to the REINFORCE algorithm \cite{williams1992simple} while the rest 3 estimators' first-order gradient corresponds to the Actor-critic algorithm. That is why DiCE performs the worst in all cases. With stochastic outer-loop estimation, the LVC and Loaded-DiCE estimator have comparable correlation while the variance of LVC is smaller than Loaded-DiCE. The AD estimator performs worse than LVC and Loaded-DiCE when the Hessian is estimated (SSE, ESE, SSS, ESS). This corresponds to the conclusion in \cite{rothfuss2018promp} that the LVC estimator introduces low-bias and low-variance Hessian estimation while AD estimator has large-bias and low-varaince Hessian estimation. With exact outer-loop estimation, the LVC has relatively great Hessian estimation so the correction of compositional bias has the same effect with Hessian correction ($ESE=SEE>SSE$), while the Hessian correction is still important in Loaded-DiCE ($SEE>ESE>SSE$).

\textbf{Ablation study on inner learning rate, step and estimator.} Additional ablation study on inner learning rate and number of steps are shown in Fig. \ref{fig_apx:tabular_lr}, \ref{fig_apx:tabular_step}. The results show that: With more steps and larger learning rates, the inner-loop estimation can become more important than outer-loop policy gradient (the correlation decreases a lot in $SSE$ in all estimators). Also in multi-step and large learning rate setting, the importance of Hessian estimation and compositional bias become comparable in LVC and Loaded-DiCE ($SEE\approx ESE, SES\approx ESS$). 

\textbf{Meta-gradient variance}
In all three plots Fig. \ref{fig_apx:tabular_all}, \ref{fig_apx:tabular_lr}, \ref{fig_apx:tabular_step}, we report additional metric on variance of the meta-gradient estimation. We observe that the correction of compositional bias increases the variance especially when outer-loop policy gradient estimator is poor (estimator III uses stochastic samples) or Hessian variance is large (in DiCE and Loaded-DiCE). Only with low Hessian variance (LVC/AD) and great outer-loop policy gradient (estimator III uses analytical solution), the correction of compositional bias can decrease the variance.
\begin{figure*}[tbhp]
    \centering
    \includegraphics[width=0.9\linewidth]{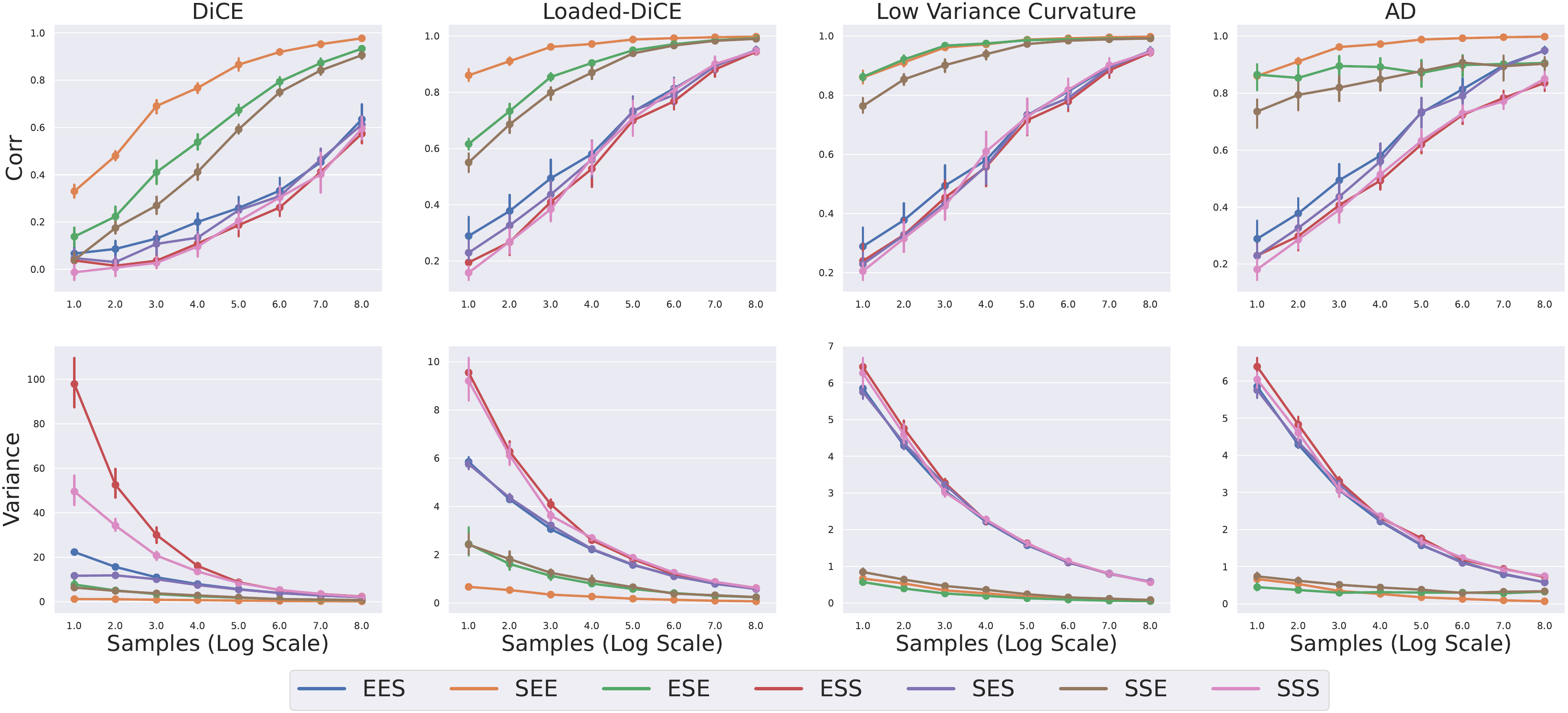}
    \caption{Ablation study on sample size and estimator in 1-step inner-loop setting. (1) Outer-loop policy gradient is important for estimation (2) Compositional bias correction helps increase the correlation (3) The LVC and Loaded-DiCE can achieve higher correlation compared with AD when the Hessian matrix is estimated.}
    \label{fig_apx:tabular_all}
\end{figure*}

\begin{figure*}[tbhp]
    \centering
    \includegraphics[width=0.8\linewidth]{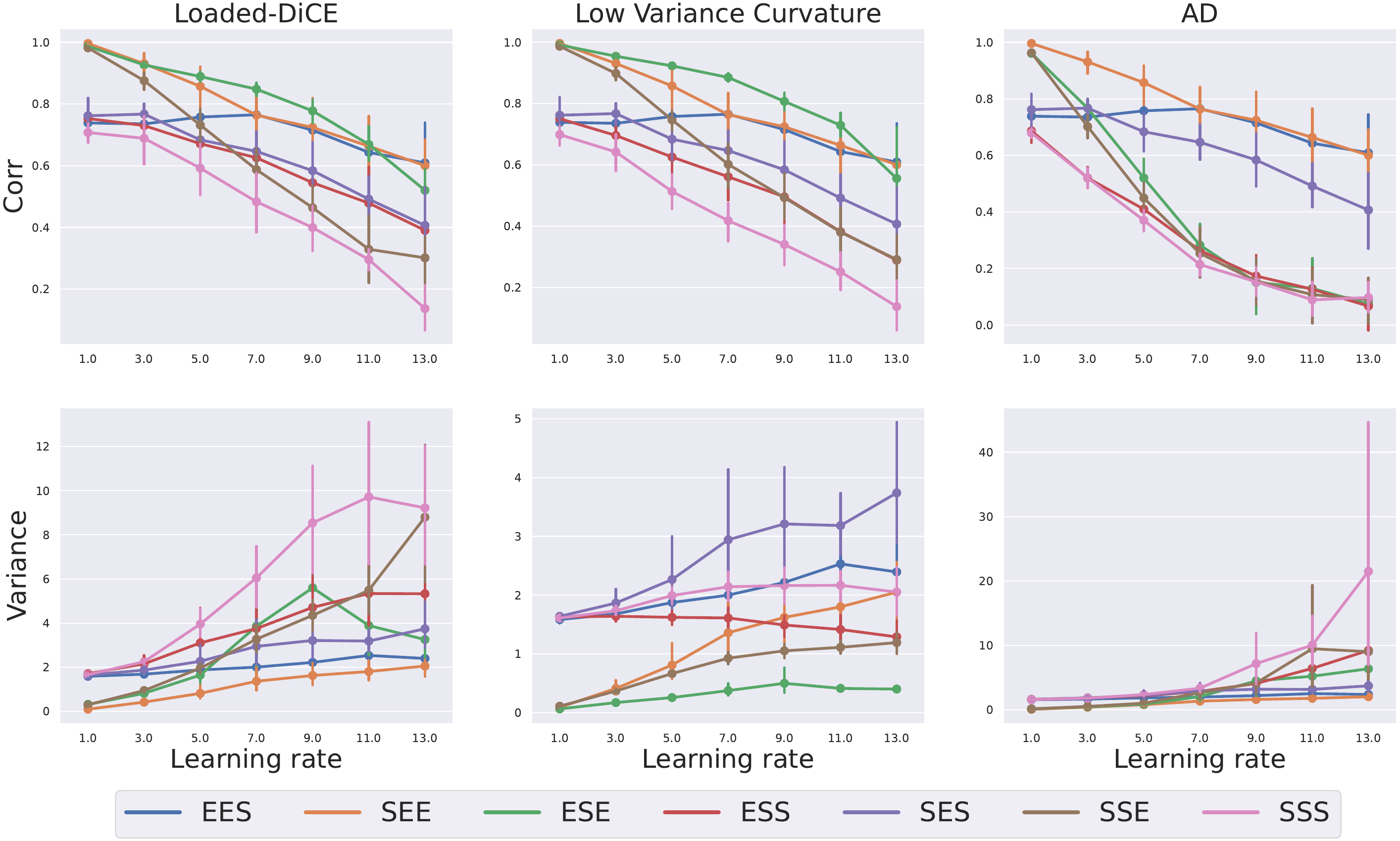}
    \caption{Ablation study on inner learning rate and estimator. (1) In Loaded-DiCE and LVC, With larger learning rate, the compositional bias basically shares the same importance with Hessian estimation error. (2) With larger learning rate, the Hessian estimation problem in AD largely decreases the correlation.}
    \label{fig_apx:tabular_lr}
\end{figure*}

\begin{figure*}[tbhp]
    \centering
    \includegraphics[width=0.8\linewidth]{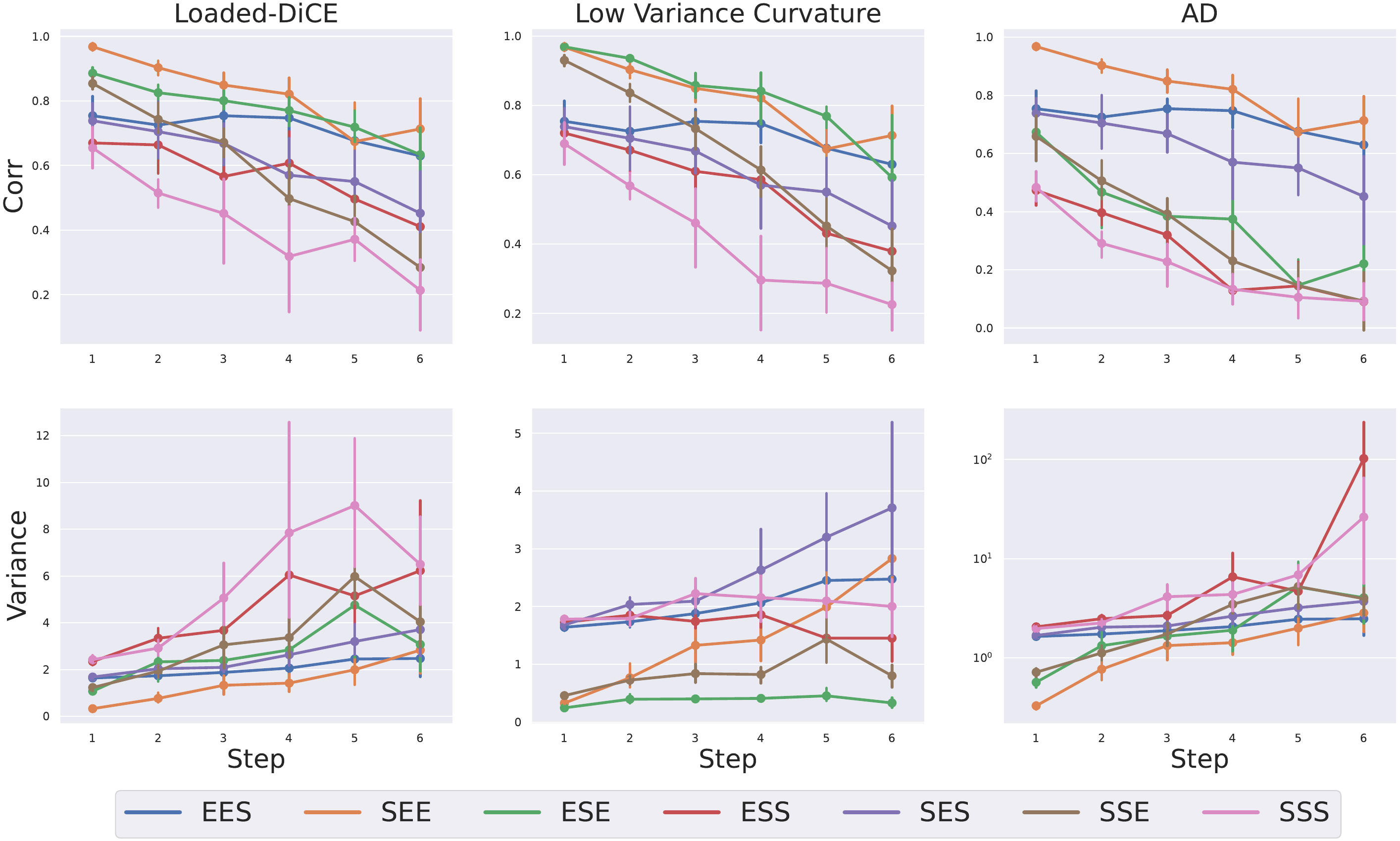}
    \caption{Ablation study on inner step and estimator. Results of larger steps show similar phenomenon with larger inner-loop learning rate.}
    \label{fig_apx:tabular_step}
\end{figure*}

\subsubsection{Additional Experimental Results on Tabular LIRPG}
In Fig. \ref{fig_apx:step_all} we offer additional experimental Results on estimation variance. Basically the AD based estimation in LIRPG setting tend to have higher variance. 
\begin{figure*}[tbhp]
    \centering
    \includegraphics[width=0.6\linewidth]{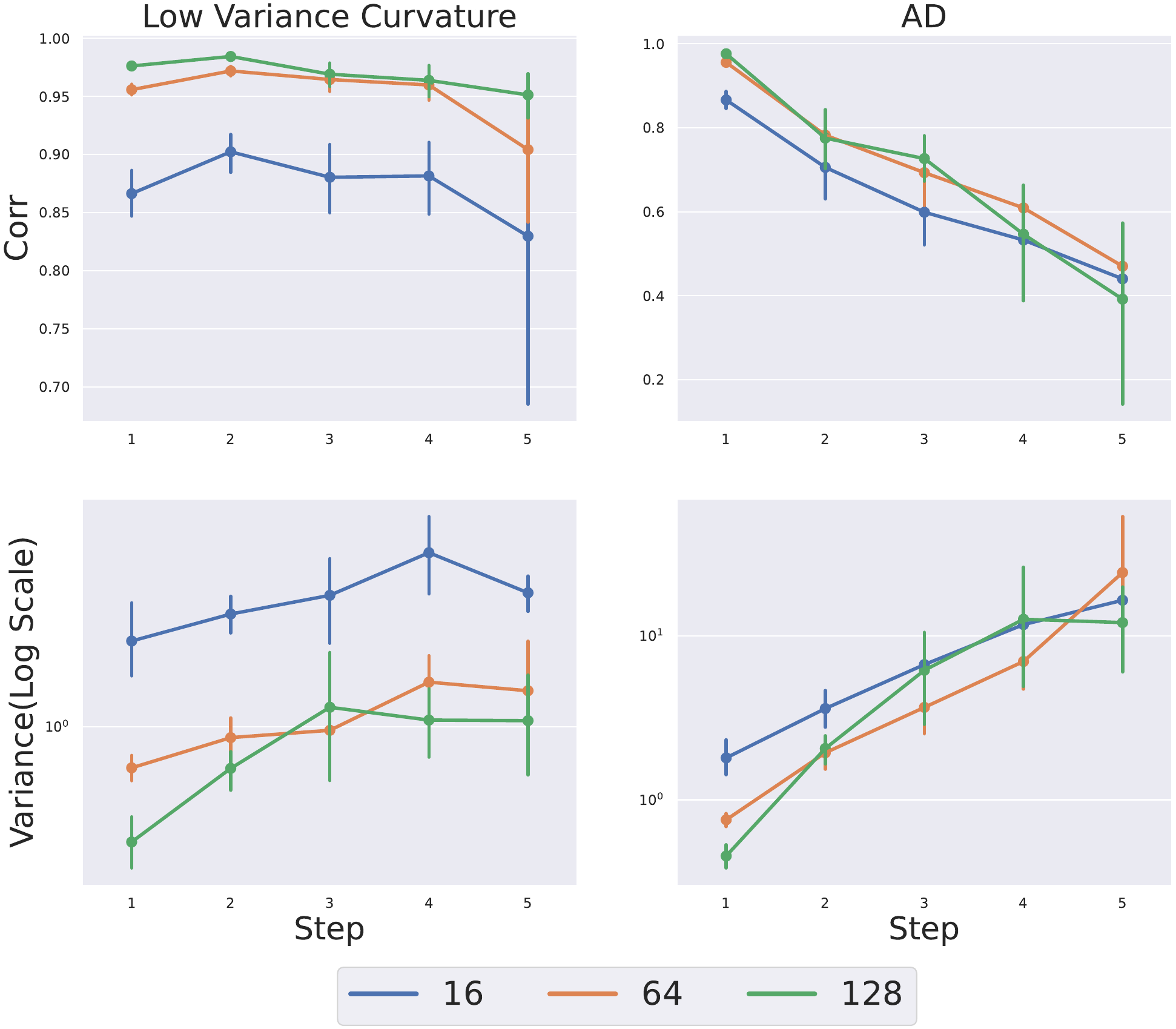}
    \caption{Additional experiment results on. Different color refers to different trajectory sample size.}
    \label{fig_apx:step_all}
\end{figure*}
\subsubsection{Hyperparameters}
We offer the hyperparameter settings for our Tabular MDP experiment in Table \ref{tb:hyp_tmdp}.
\begin{table}[H]
\caption{Hyper-parameter settings for Tabular MDP.}
\label{tb:hyp_tmdp}
\centering
\begin{sc}
\resizebox{1. \textwidth}{!}{
\begin{tabular}{lcl}
\toprule
\textbf{Settings }& \textbf{Value} & \textbf{Description}  \\  \hline \hline
Trajectory length & 20 & RL trajecotry length  \\
Discount factor & 0.8 & Learning rate for meta-solver updates  \\
Inner learning rate & 10 & Learning rate for inner-loop update  \\
Inner step & 1 & step number of inner-update  \\
Independent trials & 10 & Number of independent trials on environments  \\
Same trials & 20 & number of independent trials on the same point \\
Dimension of state & 20 & dimension of state \\
Dimension of action & 5 & Dimension of action\\
Noise coefficient & 1.0 & Noise factor for simulating estimated value function \\
Density & 0.001 & Parameters of Dirichilet Distribution  \\
\hline \hline
\bottomrule
\end{tabular}
}
\end{sc}
\end{table}

\subsection{LOLA-DiCE on Iterated Prisoner Dilemma (IPD)}
\subsubsection{Experimental Settings}
\label{apx_lola_dice}
In Iterated Prisoner Dilemma, the Prisoner Dilemma game is played repeatedly by the same players. The payoffs of Prisoner Dilemma for players are shown as follows. 
$$
\mathbf{R}^1 = \left[\begin{matrix}
-2 & 0 \\
-3 & -1
\end{matrix}\right] 
\quad 
\mathbf{R}^2 = \left[\begin{matrix}
-2 & -3 \\
0 & -1
\end{matrix}\right],
$$
where the action 0 (correpsonds to column/row 0) as "cooperation" (don't confess) and the action 1 (correpsonds to column/row 1) as "defection" (confess). Agent in Iterated Prisoner Dilemma aims at maximising the cumulative Discounted reward. By LOLA-DiCE algorithm, it is possible for both agent to reach social welfare: (-1, -1). Refer to Appendix \ref{apx:opponent_shaping} for how the algorithm is formulated.

We conduct our experiment by adapting code from the official codebase\footnote{\url{https://github.com/alexis-jacq/LOLA_DiCE}}. The official code only conducts the experiment using one fixed seed and the performance is highly sensitive to different random seeds using default hyperparameters. To evaluate the performance reliably, we conduct all the experiments for 10 random seeds and report the average result.
\subsubsection{Additional Experimental Results}
\label{apx_lola_dice_add}
\begin{figure*}[tbhp]
    \centering
    \includegraphics[width=0.9\linewidth]{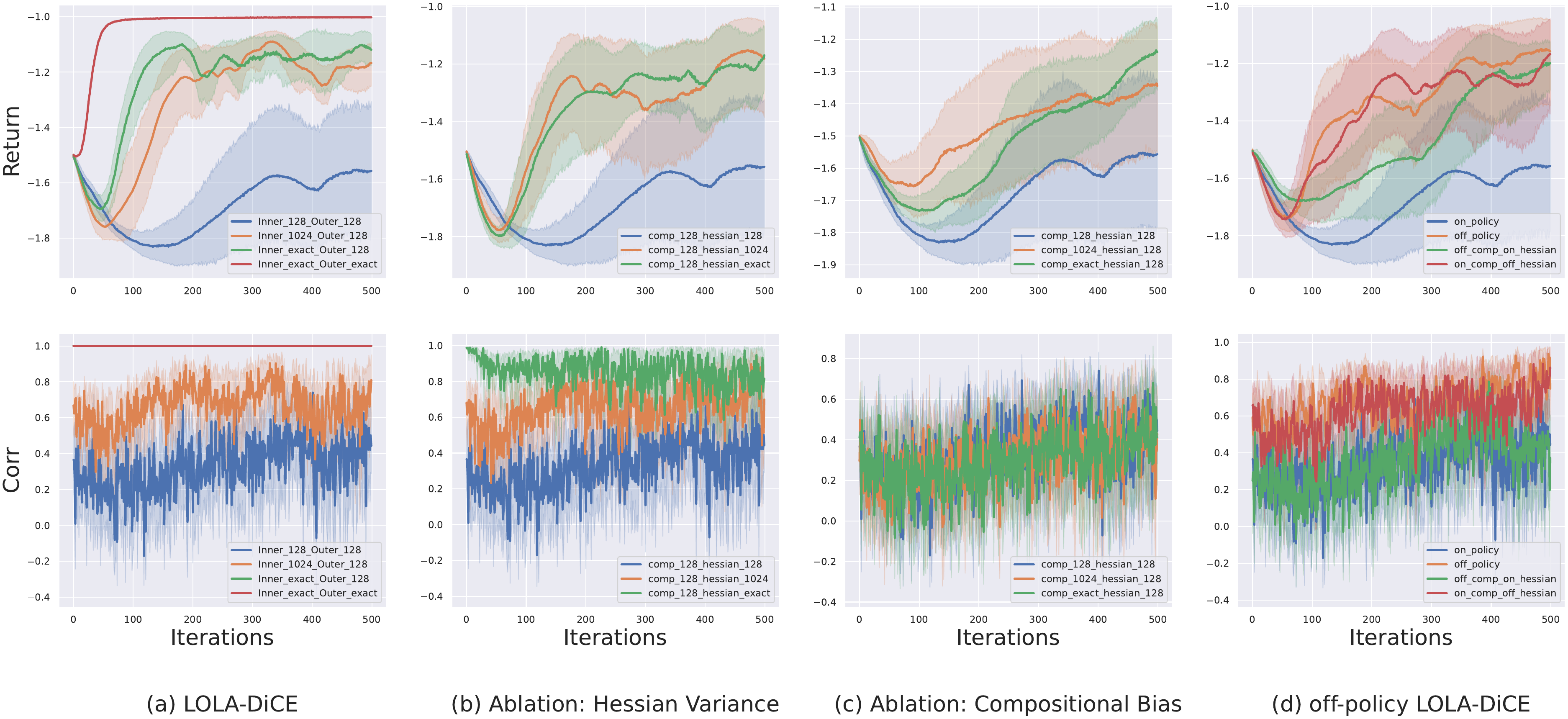}
    \caption{Experiment result of LOLA-DiCE. (a) Poor inner-loop estimation can fail the LOLA-DiCE algorithm. (b) Hessian estimation variance is the main problem in LOLA-DiCE. (c) The correction of compositional bias also helps increase the average return. (d) The off-policy correction can both decrease the compositional bias and Hessian estimation variance, which largely increases the final return.}
    \label{apx_fig:lola}
\end{figure*}
\textbf{Ablation on LOLA-DiCE inner/outer estimation.} We report the correlation result of conducting ablation study for different inner/outer-loop estimation of LOLA-DiCE in the Fig.~\ref{apx_fig:lola}(a).  Higher correlation does not guarantee higher return. The bonus brought by setting inner-loop as exact solution have a really large improvement over correlation (from 0.7 to 1.0) but have limited improvement on return. We believe it is because the outer-loop gradient estimation becomes the main issue when inner-loop estimation is really well. 

\textbf{Ablation on LOLA-DiCE Hessian variance and compositional bias.} We show additional experimental result in Fig. \ref{apx_fig:lola}(c).  An interesting thing is that we find out the gradient correlation of these three settings are comparable. An possible explanation is that the main issue here is the hessian variance and this is why the performance gain by lowering hessian variance is larger that lowering compositional bias. Though by correcting compositional bias LOLA can have better estimation with performance gain, the gain is not obvious in the aspect of gradient correlation because the hessian variance is still large.

\textbf{Off-policy DiCE and ablation study}
The correlation gain for off-policy comp$\&$on-policy hessian is still limited like that in Fig. \ref{apx_fig:lola}(c). But the performance gain verifies the bonus brought by correcting compositional bias.
\subsubsection{Hyperparameters}
We offer the hyperparameter settings for our LOLA experiment in Table \ref{tb:hyp_lola}.
\begin{table}[H]
\caption{Hyper-parameter settings for LOLA-DiCE.}
\label{tb:hyp_lola}
\centering
\begin{sc}
\resizebox{1. \textwidth}{!}{
\begin{tabular}{lcl}
\toprule
\textbf{Settings }& \textbf{Value} & \textbf{Description}  \\  \hline \hline
Outer Learning rate & 0.1 & Outer Learning rate  \\
Inner Learning rate & 0.3 & Inner Learning rate \\
Discount factor & 0.96 & Discount factor  \\
Update & 500 & step number of meta-update  \\
Rollout Length & 100 & Length of IPD rollout  \\
Inner step & 1 & number of virtual inner-step look-ahead \\
Value function learning rate & 0.1 & Value function learning rate \\
Off-policy buffer size & 1024 & Buffer size\\
Sample batch size & 128 & Comp/Hessian/Outer sample batch size \\
\hline \hline
\bottomrule
\end{tabular}
}
\end{sc}
\end{table}

\subsection{MGRL on Atari games}
\subsubsection{Experimental setting}
\label{apx_mgrl_setting}
We reimplement the MGRL algorithm based on A2C baseline. In this case, Meta-parameters $\phi$ involves 4 hyperparameters: Discount factor, value loss coefficient, entropy loss coefficient and GAE ratio. The procedures of 'discard' strategy we use is summarized as follows: Starting from the inner-policy parameters $\theta^{0}$, we utilise take 3 A2C updates and get the 3-step updated policy $\theta^{3}$. Then we can calculate the meta-gradient by backpropogating from $R(\theta^{3})$ to the meta parameters. Finally we reset the inner-loop policy parameters back to $\theta^{1}$ so the rest 2 updates are in fact virtual update. It is only used for the meta-gradient estimation.
\subsubsection{Discussion on the 'Discard' strategy}
\label{apx: mgrl_dis}
In the MGRL experiment, we follow previous work \cite{bonnet2021one} for conducting multi-step MGRL. So the inner-loop policy will take multi-step virtual updates for meta-parameters update. As mentioned in Section 4.2 in their paper, this strategy can only keep the RL update times unchanged among different algorithms and is not particularly sample efficient because they need to take virtual look-ahead for the update of meta-parameters. However, one benefit of adopting such strategy is that we can keep the amount of meta-update large enough to verify the effect brought by the LVC correction. We also take some experiments on another setting where we take meta-update after each 3-step inner-loop update. Note that they are no longer virtual inner-loop updates. However, we find out that in many environment this setting largely decrease the meta-update times and make the comparison of different meta-gradient estimation less meaningful.
\subsubsection{Additional Experimental results}
\label{apx: mgrl_all}
We offer the full experiments results on all 8 Atari games: Asteroids, Qbert, Tennis, BeamRider, Alien, Assault, DoubleDunk, Seaquest. The reward performance is shown in Fig. \ref{fig:mgrl_all}. We also offer trajectories for all 4 meta-parameters on these experiments in Fig. \ref{fig:mgrl_hyper8}. From Fig. \ref{fig:mgrl_all} it can show that MGRL with LVC correction can achieve comparable or better performance in almost all 8 environments. Note that we need to clarify that in some RL experiments the MGRL cannot achieve better performance compared with A2C baseline. This also corresponds to the experimental results in original MGRL paper \cite{xu2018meta}. However, since our main comparison only happens between MGRL and MGRL with LVC correction, it is still a fair comparison to verify the effectiveness of LVC hessian correction. Fig. \ref{fig:mgrl_hyper8} reveals that even we have only 4 meta-parameters, different meta-gradient estimation can still results in large gap between the meta optimisation trajectory and final GMRL performance.
\begin{figure*}[t]
    \centering
    \includegraphics[width=0.9\linewidth]{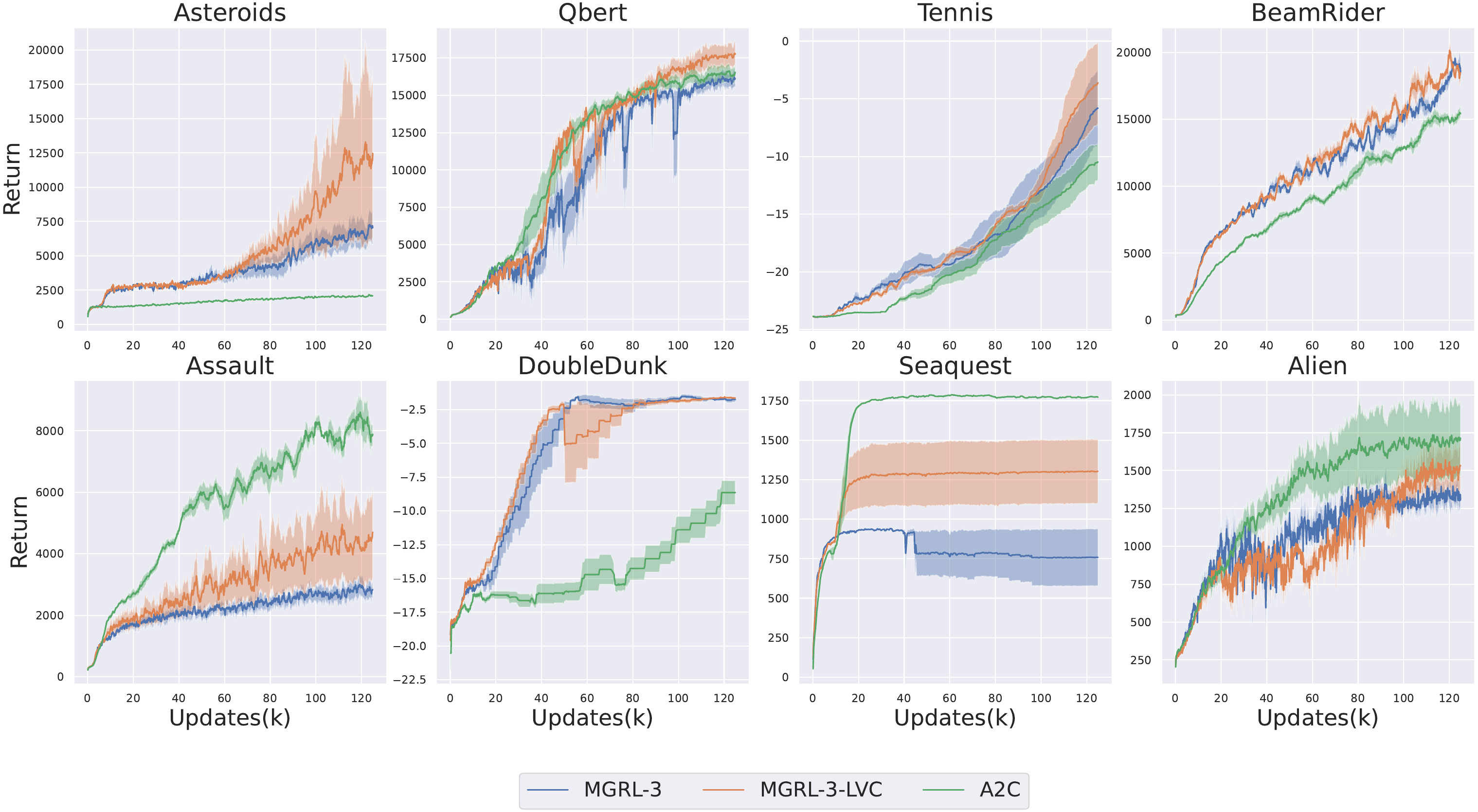}
    \caption{Experimental results on Atari game over 5 seeds. 3-step MGRL with LVC correction can  achieve at least the same performance compared with 3-step MGRL in basically all environments.}
    \label{fig:mgrl_all}
\end{figure*}
\begin{figure*}[t]
    \centering
    \includegraphics[width=0.9\linewidth]{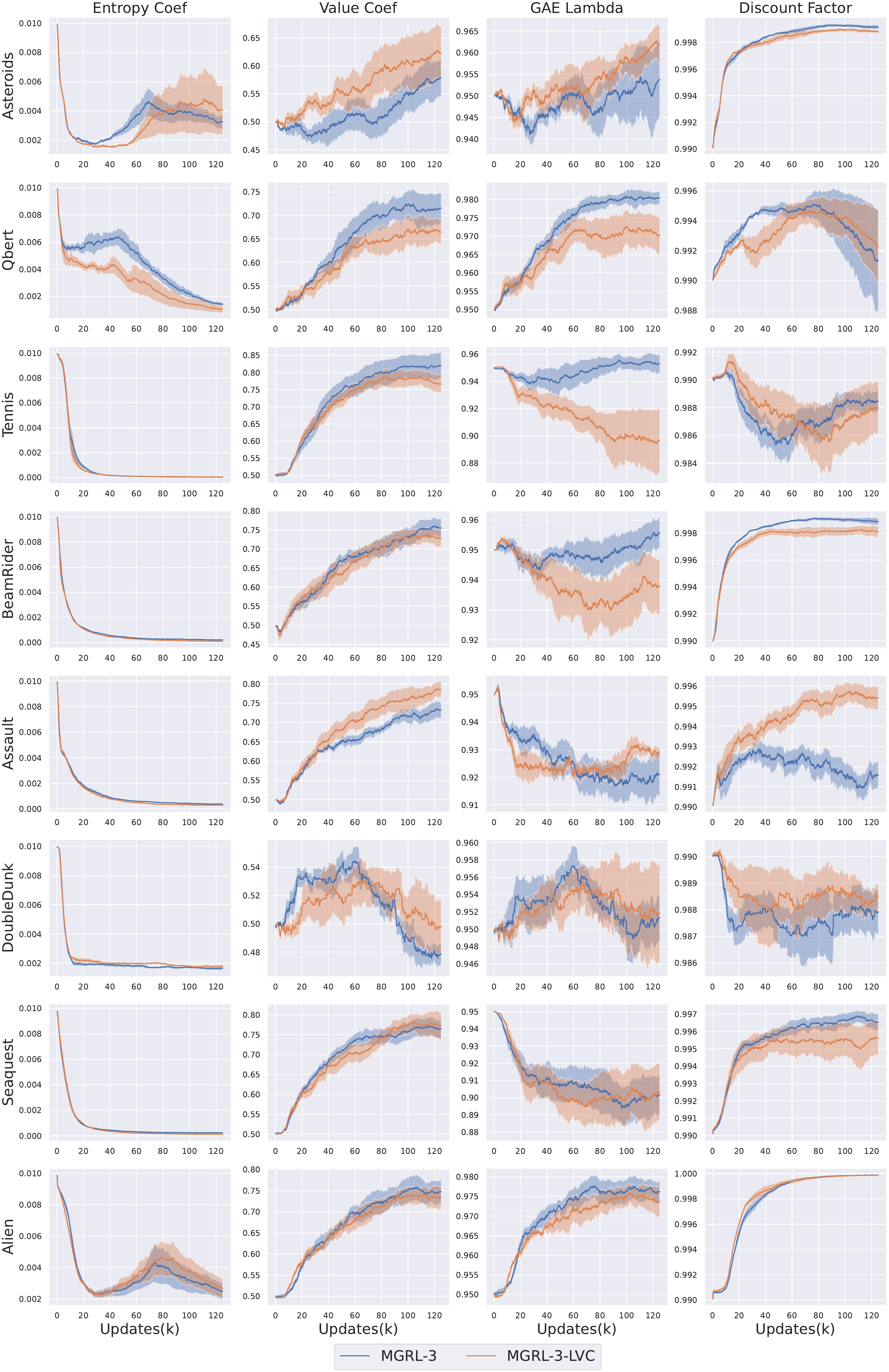}
    \caption{4 Meta parameters trajectories on Atari game for 3-step MGRL and 3-step MGRL with LVC correction.}
    \label{fig:mgrl_hyper8}
\end{figure*}

\subsubsection{Implementations and hyperparameters}

We adopt the codebase of A2C from \cite{pytorchrl} and differentiable optimization library \cite{TorchOpt} to implement MGRL algorithms. We use a shared CNN network (3 Conv layers and one fully connected (FC) layer) for the policy network and critic network. The (out-channel, filters, stride) for each Conv layer is (32, $8\times8$, 4), (64, $4\times4$, 2) and (64, $3\times3$, 1) respectively while the hidden size is 512 for the FC layer. For the training loss, we adopt additional entropy regularisation for policy loss and Mean Square Error (MSE) for the value loss. We adopt the Generalized Advantage estimation (GAE) for advantage estimation. We offer the hyperparameter settings for our experiment in Table \ref{tb:hyp_mgrl}. We tun our algorithm for 125k inner updates, which corresponds to 40M environment steps for baseline A2C.
\begin{table}[H]
\caption{Hyper-parameter settings for MGRL.}
\label{tb:hyp_mgrl}
\centering
\begin{sc}
\resizebox{1. \textwidth}{!}{
\begin{tabular}{lcl}
\toprule
\textbf{Settings }& \textbf{Value} & \textbf{Description}  \\  \hline \hline
Inner Learning rate & 7e-4 & Inner Learning rate \\
Learning rate Scheduling & Linear decay& linearly decrease to 0  \\
Discount factor & 0.99 & Discount factor  \\
GAE LAMBDA & 0.95 & ratio of generalized advantage estimation  \\
Value coef & 0.5 & coefficient of value loss \\
entropy coef & 0.01 & coefficient of entropy loss \\
Update & 125k & number of inner update \\
Number of process & 64 & number of multi process\\
number of step per update & 5 & number of step per update\\
Meta update & 3 & number of inner-update for conducting meta-update\\
Meta learning rate & 0.001 & meta learning rate  \\
Inner optimizer & Adam & Inner-loop optimizer \\
Outer optimizer & Adam & Outer-loop optimizer \\
\hline \hline
\bottomrule
\end{tabular}
}
\end{sc}
\end{table}

\section{Author contribution}
\label{apx: author_contrib}
We summarise the main contributions from each of the authors as follows:

\textbf{Bo Liu}: Algorithm design, main theoretical proof, some code implementation and experiments running (on tabular MDP and MGRL), and paper writing.

\textbf{Xidong Feng}: Idea proposing, algorithm design, part of theoretical proof, main code implementation and experiments running (on tabular MDP, LOLA and MGRL), and paper writing.

\textbf{Jie Ren}: Code implementation and experiments running for MGRL.

\textbf{Luo Mai}: Project discussion and paper writing.

\textbf{Rui Zhu}: Project discussion and paper writing.  

\textbf{Haifeng Zhang}: Computational resource sponsor.

\textbf{Jun Wang}: Project discussion and overall project supervision.

\textbf{Yaodong Yang}: Project lead, idea proposing, experiment supervision, and whole manuscript writing.

\end{document}